\documentclass{article}

\usepackage[preprint]{neurips_2026}

\usepackage{amsmath,amsfonts,bm}

\def\eqref#1{equation~\ref{#1}}

\def\1{\bm{1}}

\DeclareMathAlphabet{\mathsfit}{\encodingdefault}{\sfdefault}{m}{sl}
\SetMathAlphabet{\mathsfit}{bold}{\encodingdefault}{\sfdefault}{bx}{n}

\usepackage[utf8]{inputenc} 
\usepackage[T1]{fontenc}    
\usepackage{hyperref}       
\usepackage{url}            
\usepackage{booktabs}       
\usepackage{amsfonts}       
\usepackage{nicefrac}       
\usepackage{microtype}      
\usepackage{xcolor}         
\usepackage{tabularx}
\usepackage{booktabs}
\usepackage{algorithm}
\usepackage{mathtools}
\usepackage{multirow}
\usepackage{chngcntr}
\usepackage{amsthm}
\usepackage{algpseudocode}
\usepackage{wrapfig}
\usepackage{lipsum}
\usepackage{subcaption}

\newtheorem{lemma}{Lemma}
\newtheorem{corollary}{Corollary}
\usepackage{algorithm}
\usepackage{algorithmicx,algcompatible}
\usepackage{fontawesome5}
\usepackage{amsmath}
\usepackage{amssymb}
\usepackage{tabularx, booktabs}
\usepackage{enumitem}
\newtheorem*{proposition}{Proposition}
\theoremstyle{remark}
\newtheorem{remark}{Remark}
\usepackage{pifont}
\newcommand{\xmark}{\ding{55}}
\newcommand{\cmark}{\ding{51}}

\title{SurF: A Generative Model for Multivariate Irregular Time Series Forecasting}

\author{%
  Mohammad R. Rezaei \\
  Department of Computer Science\\
  University of Toronto\\
  Vector Institute\\
  Toronto, ON, Canada \\
  \texttt{mr.rezaei@mail.utoronto.ca} \\
  \And
  Tejas Balaji \\
  Department of Computer Science\\
  University of Toronto\\
  Vector Institute\\
  Toronto, ON, Canada \\
  \texttt{tejas.balaji@mail.utoronto.ca} \\
  \And
  Rahul G. Krishnan \\
  Department of Computer Science\\
  University of Toronto\\
  Vector Institute\\
  Toronto, ON, Canada \\
  \texttt{rahulgk@cs.toronto.edu} \\
}

\begin{document}

\maketitle

\newtheorem{theorem}{Theorem}
\newtheorem{definition}{Definition}
\begin{abstract}
Irregularly sampled multivariate event streams remain a difficult modality for generative modeling: tokenization-based approaches break down when inter-event intervals vary by orders of magnitude. We (i) propose \textbf{SurF}, a generative model that uses the Time Rescaling Theorem (TRT) as a learnable bijection between event sequences and i.i.d.\ unit-rate exponential noise, enabling a single model to be trained across heterogeneous event-stream datasets; (ii) three efficient parameterizations of the cumulative intensity that scale to long sequences; and (iii) a Transformer-based encoder for multi-dataset pretraining. On six real-world benchmarks, SurF achieves the best reported time RMSE on Earthquake, Retweet, and Taobao, and is within trial-level noise of the strongest specialist on the remaining three. Under a strict leave-one-out protocol, the held-out checkpoint beats every classical and neural-autoregressive baseline on $5/6$ datasets and beats every baseline on Amazon and Earthquake, an initial step toward foundation models over asynchronous event streams.
\vspace{0.5em}
\noindent\textbf{Code:} \href{https://github.com/MrRezaeiUofT/SurF}{\faGithub\ \texttt{github.com/MrRezaeiUofT/SurF}}.
\end{abstract}

\section{Introduction}

\begin{figure*}[ht]
\centering
\includegraphics[width=.8\textwidth]{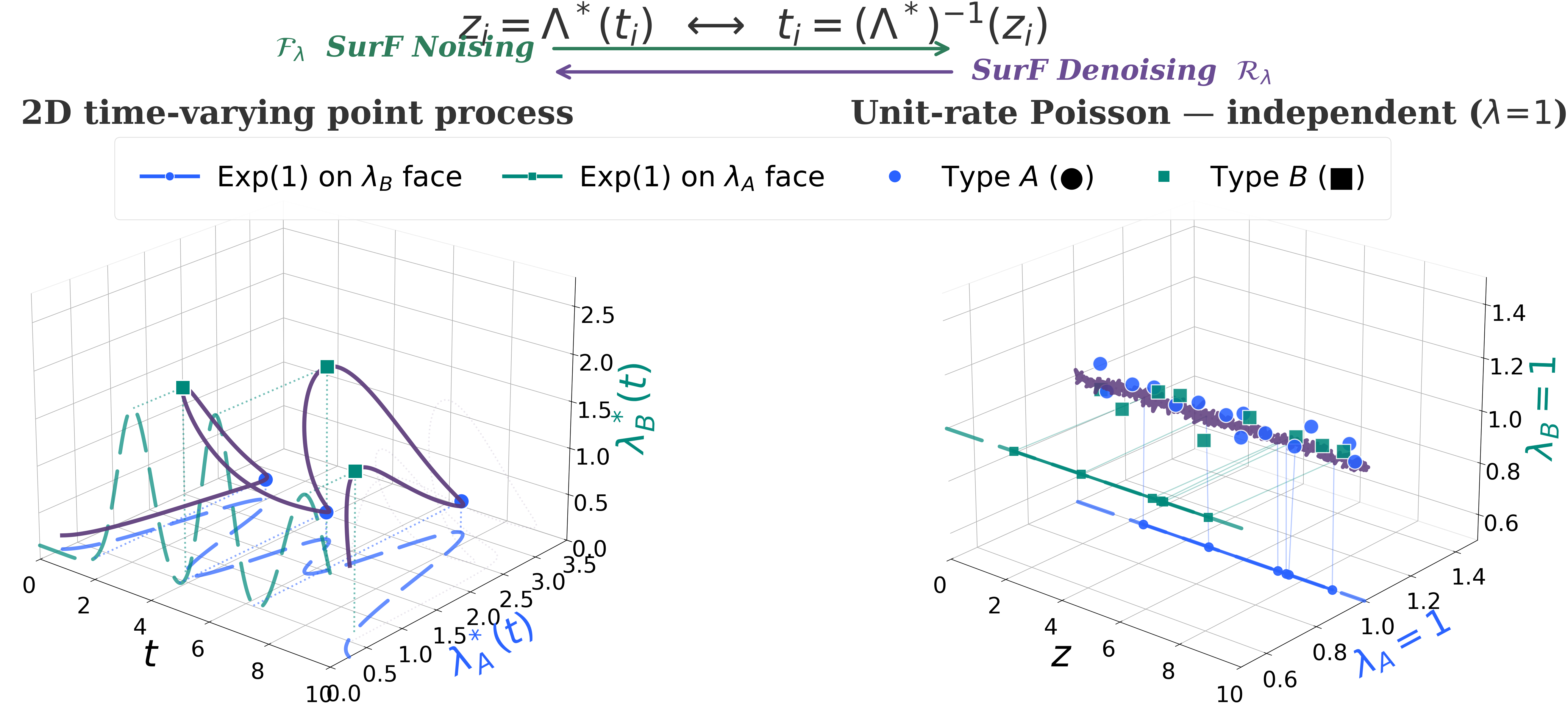}
\vspace{-1.em}
\caption{\small
\textbf{The SurF noising--denoising framework in the joint intensity space.}
A 2D point process with two anti-correlated event types.
\textbf{(Left)}~In the original time span, type~$A$ activity suppresses
type~$B$ and vice versa.
\textbf{(Right)}~After the SurF noising process $\mathcal{F}_\lambda$
(Theorem~\ref{thm:time_rescaling}), the trajectory collapses to i.i.d.\
$\mathrm{Exp}(1)$ inter-arrival times along the rescaled axis~$z$;
the inverse $\mathcal{R}_\lambda$ recovers the original dynamics losslessly.}
\label{fig:3d_overview}
\vspace{-0.5em}
\end{figure*}

Generative models of regularly sampled time-series data such as TimesFM~\citep{das2024decoder}, Chronos~\citep{ansari2024chronos}, and Lag-Llama~\citep{rasul2024lag} share a key assumption: that data arrive at a fixed rate along synchronized channels. This precludes their application to \emph{irregularly sampled multivariate event streams},
where events arrive asynchronously, streams update at different frequencies, and observation timing itself carries semantic meaning; existing tokenization schemes also break down when inter-event intervals vary by orders of magnitude. Yet such data arise throughout high-stakes domains: clinical records with variable vital-sign intervals~\citep{rubanova2019latent,rezaei2022direct}, asynchronous financial order flows~\citep{bacry2015hawkes,hawkes2018hawkes},
event-driven sensor measurements~\citep{shukla2021multi}, and
coordinated neural spike trains~\citep{truccolo2005point,pillow2008spatio}. The standard mathematical object for these data is the \emph{temporal point process} (TPP), which models a stream of events through its
conditional intensity $\lambda^*(t \mid \mathcal{H}_t)$; the instantaneous event rate at time $t$ given history $\mathcal{H}_t$. Its
integral, the cumulative intensity
$\Lambda^*(t) = \int_0^t \lambda^*(s \mid \mathcal{H}_s)\,ds$, governs how long one expects to wait for the next event. Fitting a TPP amounts
to choosing a parameterization of $\lambda^*$ and estimating its parameters from data.

A recurring difficulty in this fitting problem is evaluation: standard regression diagnostics do not apply to event data, and since every
process has its own intensity, timescales, and shape, there is no obvious reference distribution to compare a fitted model against. In the early 2000s, the
Time Rescaling Theorem (TRT)~\citep{brown2002time} resolved this: if the fitted $\Lambda$ is correct, the rescaled gaps
$\Delta z_i = \Lambda(t_i) - \Lambda(t_{i-1})$ are i.i.d.\
$\mathrm{Exp}(1)$, so a Q--Q plot or Kolmogorov--Smirnov
statistic~\citep{truccolo2005point,brown2002time} suffices for goodness-of-fit. The TRT became standard practice as an
\emph{evaluation} tool. Two decades later, neural TPPs rose in popularity as a means to leverage the representational flexibility of neural networks for event-stream modeling. Most neural TPPs are fit by maximum likelihood, which at every step
requires the intensity $\lambda_\theta$ at each event and its integral
$\int_{0}^{T}\lambda_\theta(s)\,ds$ over the observation window.
Intensity-first methods parameterize $\lambda_\theta$
directly and approximate the integral by numerical quadrature, a cost
that grows with the window and accumulates with every gradient step.
The cumulative-hazard family~\citep{omi2019fully,shchur2019intensity}
avoids this by parameterizing $\Lambda_\theta$ with a monotone neural
network and recovering $\lambda_\theta$ by differentiation, collapsing
the window integral to a single evaluation $\Lambda_\theta(t)$.

Our key insight is that this cumulative-hazard family is, in effect,
fitting only the forward direction of the TRT. SurF treats $\Lambda$ as
a bijection and uses it in both directions, enabling a generative
model: $\Lambda_\theta$ maps event sequences to i.i.d.\
$\mathrm{Exp}(1)$ noise (forward, for training), while
$(\Lambda_\theta)^{-1}$ maps exponential noise back to event times
(reverse, for sampling). Prior neural TPPs are trained per-dataset; concatenating event streams
and summing their likelihoods. However, this approach can lead to instabilities in learning 
since per-dataset losses have incompatible scales. SurF resolves
this by recognizing that we can learn patterns in multiple event streams using a single shared canonical target i.e. $\Lambda_\theta$ fit
jointly on many streams optimizes a single well-defined objective and pushes every dataset onto $\mathrm{Exp}(1)$. Since all sequences map to the same target, the
shared $(\phi_\theta,\Lambda_\theta)$ accumulates similarities across event streams in a general manner enabling the same parameters
to successfully enable zero-shot prediction to a new (unseen) dataset (Section~\ref{sec:domain_invariance}).
\textbf{Contributions.}
\textbf{(1) TRT as an explicit bidirectional flow.} Prior cumulative-hazard neural TPPs~\citep{omi2019fully,shchur2019intensity}
invoke the TRT implicitly at training time. We make both directions
explicit and record the conditions on $\Lambda_\theta$ (smooth,
strictly increasing on $\mathbb{R}^+$) under which the inverse function
theorem yields a smooth bijection
(Theorem~\ref{thm:reverse_rescaling}); our contribution lies in the
framing and its consequences. \textbf{(2) Three parameterizations of
the intensity function.} We parameterize $\Lambda_\theta$ as a monotone
neural network. MoE and CSB are fully closed-form; GLQ uses an
unconstrained positive MLP with a fixed $O(Q)$ per-interval rule whose
cost is $T$-independent and whose error is negligible in practice
(Appendix~\ref{app:variants_quadrature}). \textbf{(3) A shared target for multi-corpus training.} Because the TRT guarantees that \emph{every} process has an $\mathrm{Exp}(1)$ compensator, $\Lambda_\theta$ fitted jointly on many corpora optimizes a single loss rather than a sum of per-dataset likelihoods on different scales. The target is the same for every dataset. Prior cumulative-hazard and intensity-free models~\citep{omi2019fully,shchur2019intensity} fit dataset-specific latent variables with no space shared across corpora, and are trained one dataset at a time. \textbf{(4) cross-dataset and
zero-shot evaluation.} On six benchmarks, a jointly-trained SurF checkpoint achieves the best reported time RMSE on Earthquake, Retweet, and Taobao, and is within trial-level noise of the strongest baseline on the remaining three. Under a strict leave-one-out protocol, the held-out checkpoint beats every classical and neural-autoregressive baseline on $5/6$ datasets (losing only StackOverflow); to our knowledge the first reported leave-one-out cross-dataset evaluation for a learned cumulative intensity.
\section{Background}
Let $\mathcal{D}=\{(t_i,k_i)\}_{i=1}^N$ denote a sequence of $N$ events
in $[0,T]$ with $0<t_1<\cdots<t_N\leq T$ and types
$k_i\in\{1,\ldots,K\}$. A temporal point process is specified by its
conditional intensity $\lambda^*(t,k\mid\mathcal{H}_t)$, where
$\mathcal{H}_t=\{(t_j,k_j):t_j<t\}$~\citep{daley2003introduction}.
Writing the ground intensity
$\lambda^*(t\mid\mathcal{H}_t)=\sum_k\lambda^*(t,k\mid\mathcal{H}_t)$,
the likelihood factorizes as \begin{equation}\label{eq:ppp_likelihood}
p(\mathcal{D}) = \Bigl[\prod_{i=1}^N \lambda^*(t_i,k_i\mid\mathcal{H}_{t_i})\Bigr]
\exp\!\Bigl(-\!\int_0^T \lambda^*(s\mid\mathcal{H}_s)\,ds\Bigr),
\end{equation}
with the exponential term the survival probability over $[0,T]$. The
name \emph{Survival Flow} captures this structure: SurF learns a
bijective flow between observed event times and the canonical survival
form. Marks factorize as
$\lambda^*(t,k)=\lambda^*(t)\,p(k\mid t,\mathcal{H}_t)$, which decomposes
the joint NLL additively (Appendix~\ref{app:marks}); the rest of this
section develops the ground-intensity machinery.

\paragraph{Time Rescaling Theorem (TRT).}\label{sec:trt}
The TRT describes the existence of a common mapping that SurF will map event sequences onto using a learnable flow. 
\begin{theorem}[Time Rescaling~\citep{brown2002time}]\label{thm:time_rescaling}
Let $\{t_i\}_{i=1}^N$ be a realization of a point process with intensity
$\lambda^*(t\mid\mathcal{H}_t)>0$, and define the cumulative intensity
$\Lambda^*(t)=\int_0^t\lambda^*(s\mid\mathcal{H}_s)\,ds$, with
$\Lambda^*(t)\to\infty$ as $t\to\infty$. The
transformed times $z_i=\Lambda^*(t_i)$ form a unit-rate Poisson process;
equivalently, the gaps $\Delta z_i=z_i-z_{i-1}$ are i.i.d.\
$\mathrm{Exp}(1)$ (Appendix~\ref{app:trt-proof}).
\end{theorem}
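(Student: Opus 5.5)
We prove the Time Rescaling Theorem by the standard conditional-survival argument: we show directly that the rescaled gaps have the $\mathrm{Exp}(1)$ law and are mutually independent. Set $t_0=0$ and $z_0=0$. The first ingredient is the conditional survival identity. Given the history $\mathcal{H}_{t_{i-1}^+}$ (all events up to and including $t_{i-1}$), the intensity on $(t_{i-1},t_i]$ is a deterministic function of $s$. So the conditional survival function of the next event time is
\begin{equation*}
P\bigl(t_i>t\mid\mathcal{H}_{t_{i-1}^+}\bigr)=\exp\Bigl(-\int_{t_{i-1}}^{t}\lambda^*(s\mid\mathcal{H}_s)\,ds\Bigr)=\exp\bigl(-(\Lambda^*(t)-\Lambda^*(t_{i-1}))\bigr),\qquad t>t_{i-1}.
\end{equation*}
This follows from the definition $\lambda^*(t)\,dt=P(\text{event in }[t,t+dt)\mid\mathcal{H}_t)$. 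Solving the resulting ODE for the hazard gives the formula, and it is exactly the factorization already written in \eqref{eq:ppp_likelihood}.

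Next we change variables. Since $\lambda^*>0$, the map $t\mapsto\Lambda^*(t)$ is continuous and strictly increasing on $(t_{i-1},\infty)$. Because $\Lambda^*(t)\to\infty$, it is a bijection onto $(z_{i-1},\infty)$. Hence for every $u\ge0$ there is a unique $t_u$ with $\Lambda^*(t_u)-\Lambda^*(t_{i-1})=u$, and $\{\Delta z_i>u\}=\{t_i>t_u\}$. Therefore
\begin{equation*}
P\bigl(\Delta z_i>u\mid\mathcal{H}_{t_{i-1}^+}\bigr)=e^{-u}.
\end{equation*}
The divergence condition also guarantees that the next event occurs almost surely, so the law is a proper $\mathrm{Exp}(1)$ with no defect at infinity.

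For independence, the key observation is that this conditional law does not depend on the conditioning history. Each $\Delta z_j$ with $j<i$ is measurable with respect to $\mathcal{H}_{t_{i-1}^+}$. Iterating the tower property then gives the joint tail
\begin{equation*}
P(\Delta z_1>u_1,\dots,\Delta z_n>u_n)=\prod_{i=1}^n e^{-u_i}.
\end{equation*}
So the gaps are i.i.d.\ $\mathrm{Exp}(1)$. Equivalently, the $z_i$ are the arrival times of a unit-rate Poisson process. Alternatively, one can compute the joint density of $(z_1,\dots,z_n)$ from \eqref{eq:ppp_likelihood} via the Jacobian $\prod_i\lambda^*(t_i)$, which cancels the intensity product and leaves $e^{-z_n}$ on $0<z_1<\dots<z_n$.

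The main obstacle is making the first step rigorous. The conditional survival formula relies on the intensity being well-defined and predictable between events, so that the next-event hazard is a deterministic function once the history is fixed; for a general history-dependent $\lambda^*$ this needs care. I would handle it by working event-by-event in this way. A fully general route would instead invoke the martingale characterization: $N(t)-\Lambda^*(t)$ is a martingale, and Watanabe's theorem applied to the time-changed counting process $N(\Lambda^{*-1}(z))$ identifies it as a unit-rate Poisson process.
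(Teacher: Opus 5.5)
Your proposal is correct and follows essentially the same route as the paper's proof in Appendix~\ref{app:trt-proof}. Both arguments use the conditional survival formula given the first $i-1$ events and the monotone change of variables $\{\Delta z_i>u\}=\{t_i>t_u\}$, with divergence of $\Lambda^*$ ensuring $t_i<\infty$ almost surely; both then get independence because the conditional $\mathrm{Exp}(1)$ law does not depend on the history, to which $\Delta z_1,\dots,\Delta z_{i-1}$ are measurable. Your tower-property product for the joint tail just spells out that last step more explicitly than the paper does.
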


The theorem maps \emph{any} temporal point process to the same canonical
distribution, $\mathrm{Exp}(1)$, enabling the creation of a flow via a
generative model. To be able to generate valid samples after learning, we need to be able to learn the reverse direction, which the next section establishes.
\section{SurF: Survival Flow for Temporal Point Processes}

While Theorem~\ref{thm:time_rescaling} establishes the existence of a forward map $t \mapsto \Lambda^*(t)$ that maps a point process to a unit-rate Poisson reference, three problems remain. First, it does not guarantee that $\Lambda^*$ is \emph{invertible}, so we cannot map noise back to event times; it does not establish \emph{smoothness} of the inverse, which we need for gradient-based training; and it does not provide the \emph{Jacobian}
required to evaluate likelihoods under a change of variables. We propose the following theorem, which closes all three gaps under a mild positivity condition on $\lambda^*$. This enables us to turn the TRT from an evaluation tool into a learnable flow.

\begin{theorem}[Reverse Rescaling and Bijectivity]\label{thm:reverse_rescaling}
Assume $\lambda^*(\cdot\mid\mathcal{H}_\cdot)$ is continuous and satisfies $\lambda^*(t\mid\mathcal{H}_t)\geq\lambda_{\min}>0$ for all $t\geq 0$. Then $\Lambda^*\!:\mathbb{R}^+\!\to\mathbb{R}^+$ is a $C^1$ bijection with $C^1$ inverse; if in addition $\lambda^*$ is $C^k$ ($k\geq 0$), then both
$\Lambda^*$ and $(\Lambda^*)^{-1}$ are $C^{k+1}$. Conversely, if $\{z_i\}$ is a realization of a unit-rate Poisson process, then $\{t_i=(\Lambda^*)^{-1}(z_i)\}$ is a realization of the original point
process with conditional intensity $\lambda^*$, and the change of variables $\boldsymbol{\tau}\mapsto\boldsymbol{\Delta z}$ has lower-triangular Jacobian
$J=\partial\boldsymbol{\Delta z}/\partial\boldsymbol{\tau}$ with $\det J =
\prod_{i=1}^N\lambda^*(t_i\mid\mathcal{H}_{t_i})>0$.
\end{theorem}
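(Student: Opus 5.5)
We split the statement into three parts, treating the conditional intensity along the realized path as a fixed function of $t$ once the history is given. Within each inter-event interval $(t_{i-1},t_i]$ the history $\mathcal{H}_t$ is constant, so $s\mapsto\lambda^*(s\mid\mathcal{H}_s)$ is a deterministic function there. It is continuous on that interval, and continuous in $t$ globally under the stated assumption.

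\textbf{Bijectivity and smoothness.} By the fundamental theorem of calculus, $\Lambda^*(t)=\int_0^t\lambda^*(s\mid\mathcal{H}_s)\,ds$ is $C^1$ with $(\Lambda^*)'(t)=\lambda^*(t\mid\mathcal{H}_t)\geq\lambda_{\min}>0$. Hence $\Lambda^*$ is strictly increasing, so it is injective, and $\Lambda^*(0)=0$.

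The lower bound gives $\Lambda^*(t)\geq\lambda_{\min}t\to\infty$. Together with continuity and the intermediate value theorem, this shows $\Lambda^*$ is onto $\mathbb{R}^+$, so it is a bijection.

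The inverse function theorem, applicable because the derivative never vanishes, gives $((\Lambda^*)^{-1})'(z)=1/\lambda^*((\Lambda^*)^{-1}(z))$. This is continuous as a composition of continuous maps, so the inverse is $C^1$.

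For the $C^{k+1}$ claim we induct on $k$. If $\lambda^*\in C^k$, then $\Lambda^*\in C^{k+1}$ because its derivative is $C^k$. Assume inductively that $g=(\Lambda^*)^{-1}\in C^{j}$ for some $j\leq k$. Then $g'=1/(\lambda^*\circ g)$ is a composition of $C^{\min(j,k)}$ maps with the smooth map $x\mapsto1/x$ on $(0,\infty)$, so $g'\in C^{j}$ and therefore $g\in C^{j+1}$. The induction closes at $C^{k+1}$.

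\textbf{Converse (sampling).} We argue sequentially. Let the $\Delta z_i$ be i.i.d.\ $\mathrm{Exp}(1)$, and suppose $t_1,\dots,t_{i-1}$ have been generated, so $\mathcal{H}$ is fixed on $(t_{i-1},\infty)$. Define $t_i$ by $\Lambda^*(t_i)-\Lambda^*(t_{i-1})=\Delta z_i$, where $\Lambda^*$ uses the intensity conditioned on this history; this equals $(\Lambda^*)^{-1}(z_i)$ along the path. Then
\[
P(t_i>t\mid\mathcal{H})=P\bigl(\Delta z_i>\Lambda^*(t)-\Lambda^*(t_{i-1})\bigr)=\exp\Bigl(-\int_{t_{i-1}}^t\lambda^*(s\mid\mathcal{H}_s)\,ds\Bigr),
\]
which is exactly the conditional survival function of the point process with intensity $\lambda^*$. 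Multiplying these conditional densities over $i$ reproduces the likelihood~\eqref{eq:ppp_likelihood}. Hence the generated sequence has the law of the original process, consistent with Theorem~\ref{thm:time_rescaling} run in reverse.

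\textbf{Jacobian.} Write $\Delta z_i=\int_{t_{i-1}}^{t_i}\lambda^*(s\mid\mathcal{H}_s)\,ds$ with $t_i=\sum_{j\leq i}\tau_j$. Then $\Delta z_i$ depends only on $\tau_1,\dots,\tau_i$, so $J$ is lower triangular. The diagonal entry is $\partial\Delta z_i/\partial\tau_i=\lambda^*(t_i\mid\mathcal{H}_{t_i})$, by the Leibniz rule with the history fixed on the interval. Therefore $\det J=\prod_i\lambda^*(t_i\mid\mathcal{H}_{t_i})>0$.

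\textbf{Main obstacle.} The delicate step is the dependence of $\lambda^*$ on the history. The intensity jumps at event times, so "continuous" and "$C^k$" must be read piecewise on each inter-event interval, with the inverse built piece by piece; alternatively, the global assumption is taken at face value. The off-diagonal Jacobian entries also involve derivatives of $\lambda^*$ with respect to past event times, which must exist. These entries do not affect the determinant, so we only need them to be finite, and we would state this as part of the regularity assumption on $\lambda^*$.
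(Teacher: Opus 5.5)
Your proof is correct. For bijectivity/regularity and for the Jacobian it is the paper's argument: FTC, $(\Lambda^*)'\geq\lambda_{\min}$, $\Lambda^*(t)\geq\lambda_{\min}t$ and the IFT for the first part, and Leibniz on the upper limit for the diagonal of a triangular $J$ for the second. Your induction on $g'=1/(\lambda^*\circ g)$ just spells out the paper's ``repeated differentiation''.

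The genuine difference is the converse. The paper argues by pushforward: the forward TRT gives $\Phi_\ast\mathcal{L}_\lambda=\mathcal{L}_1$ for $\Phi:\{t_i\}\mapsto\{\Lambda^*(t_i)\}$, and since $\Phi$ is a measurable bijection, $(\Phi^{-1})_\ast\mathcal{L}_1=\mathcal{L}_\lambda$. You instead build $t_i$ sequentially by inverting the history-conditioned compensator, and you check each conditional survival function directly. In effect you re-prove time rescaling in reverse rather than invoking Theorem~\ref{thm:time_rescaling}.

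The paper's route is shorter. However, it justifies bijectivity of $\Phi$ on sequence space by citing a lemma about a single fixed function of $t$, whereas $\Lambda^*$ itself depends on the realized history. The invertibility really comes from the triangular structure of $\Phi$, and your construction makes that explicit. Your route also needs only per-interval invertibility and mirrors the actual sampler. It yields the density factor $\prod_i\lambda^*(t_i\mid\mathcal{H}_{t_i})$ one conditional at a time, so the likelihood identity does not depend on the off-diagonal Jacobian entries you worry about at the end.

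Its price is an appeal to the fact that a point-process law is determined by its conditional inter-event laws. You should also state explicitly that $\Delta z_i$ is independent of $t_1,\dots,t_{i-1}$, because these are functions of $\Delta z_1,\dots,\Delta z_{i-1}$. That independence is what licenses $P(t_i>t\mid\mathcal{H})=P(\Delta z_i>\Lambda^*(t)-\Lambda^*(t_{i-1}))$.

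Your piecewise-continuity caveat is apt. Intensities that jump at events fall outside the global continuity hypothesis; examples are Hawkes processes and SurF's own intensity, which resets with each new $\mathbf{h}_{i-1}$. In that case $\Lambda^*$ is still a strictly increasing homeomorphism, but only piecewise $C^{k+1}$.
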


Proof sketch: $(\Lambda^*)'=\lambda^*\geq\lambda_{\min}>0$, so $\Lambda^*$
is strictly increasing and $C^1$, with $\Lambda^*(0)=0$ and
$\Lambda^*(t)\geq\lambda_{\min}\,t\to\infty$; together with monotonicity,
this gives a $C^1$ bijection $\mathbb{R}^+\!\to\mathbb{R}^+$, and the
inverse function theorem yields a $C^1$ inverse. The reverse direction is the contrapositive
of Theorem~\ref{thm:time_rescaling}: the rescaling
$\Phi:\{t_i\}\mapsto\{\Lambda^*(t_i)\}$ pushes the law of the original
process forward to unit-rate Poisson, and bijectivity of $\Phi$ makes
$\Phi^{-1}$ pull the unit-rate Poisson law back to the original (Appendix~\ref{app:trt-proof}).

Theorem~\ref{thm:reverse_rescaling} requires
$\lambda^*(t\mid\mathcal{H}_t)\geq\lambda_{\min}>0$.
We enforce this structurally with a small floor
$\lambda_{\text{floor}}>0$ (learnable, default $10^{-4}$; see
Appendix~\ref{app:floor}). This introduces an additive bias
of order $\tau_i\cdot\lambda_{\text{floor}}$ per inter-event interval,
bounded above by $T\cdot\lambda_{\text{floor}}$ per sequence (below the
NLL noise floor on all six benchmarks: $\leq\!0.1$ nats per sequence at
$\lambda_{\text{floor}}{=}10^{-4}$ and normalized $T\leq 10^3$), but
potentially material for processes with dead-time (neural
refractoriness, market closures), for which Appendix~\ref{app:floor}
provides a learnable-floor and explicit-masking scheme that recover
bijectivity without changing the Jacobian factorization. The two theorems define mutually inverse maps that together form the
SurF bijection (Figure~\ref{fig:3d_overview}). The forward map
$\mathcal{F}_\lambda(t)=\Lambda^*(t)$ warps time so that high-intensity
regions stretch and low-intensity regions compress, yielding i.i.d.\
$\mathrm{Exp}(1)$ gaps. The inverse
$\mathcal{R}_\lambda(z)=(\Lambda^*)^{-1}(z)$ reconstructs temporal
structure and can be obtained by solving
$dt/dz=1/\lambda^*(t\mid\mathcal{H}_t)$ with $t(0)=0$: high intensity
makes time advance slowly (event clusters), low intensity makes it
advance quickly (gaps). A detailed comparison to prior cumulative-hazard
neural TPPs (FullyNN~\citep{omi2019fully}, log-normal
mixture~\citep{shchur2019intensity}) is in
Appendix~\ref{app:related_cumhazard}.

\subsection{Training Objective: Amortized SurF Loss}\label{sec:training}

To learn the SurF bijection, we \emph{directly amortize the cumulative
intensity}: a neural network learns
$\Lambda_\theta(\Delta t\mid\mathbf{h})$, and the instantaneous intensity
is recovered by differentiation. This inverts the usual pipeline---
instead of \emph{model~$\lambda$ then integrate}, we \emph{model~$\Lambda$
then differentiate}; yielding a maximum-likelihood objective with no window-level integration over $[0,T]$. We parameterize $\Lambda_\theta:\mathbb{R}^+\!\times\mathbb{R}^d\!\to\mathbb{R}^+$ using a monotonically increasing function of $\Delta t=t-t_{i-1}$ conditioned on the
encoded history $\mathbf{h}_{i-1}$, with $\Lambda_\theta(0\mid\mathbf{h})=0$. The intensity is then
\begin{equation}\label{eq:amort_intensity}
\lambda^{\dagger}_\theta(\Delta t\mid\mathbf{h}) \;=\;
\frac{\partial \Lambda_\theta(\Delta t\mid\mathbf{h})}{\partial (\Delta t)}
\;=\; \lambda_{\text{floor}} + \lambda_\theta(\Delta t\mid\mathbf{h})
\;\geq\; \lambda_{\text{floor}} \;>\; 0,
\end{equation}
with positivity guaranteed by monotonicity and by the floor. Below, $\lambda_\theta$ denotes the unfloored variant-specific term and $\lambda^{\dagger}_\theta$ the quantity used in the objective and in our implementation; every variant carries $\lambda_{\text{floor}}\Delta t$ in $\Lambda_\theta$, so the identity above holds exactly.

\noindent\textbf{SurF-MoE} (mixture of exponentials, closed-form). With
$w_j,\gamma_j>0$ output by the history encoder, $
\lambda_\theta(\Delta t\mid\mathbf{h}) = \sum_{j=1}^{J} w_j e^{-\gamma_j \Delta t},
\Lambda_\theta(\Delta t\mid\mathbf{h}) = \lambda_{\text{floor}} \Delta t +\sum_{j=1}^{J} \tfrac{w_j}{\gamma_j}\bigl(1 - e^{-\gamma_j \Delta t}\bigr).$

\noindent\textbf{SurF-CSB} (cumulative softplus basis, closed-form).
$\Lambda_\theta$ is a sum of shifted softplus primitives with positive
$\alpha_m,\beta_m$,$
\Lambda_\theta(\Delta t\mid\mathbf{h}) = \lambda_{\text{floor}} \Delta t + \sum_{m=1}^{M} \alpha_m \bigl[\log(1+e^{\beta_m \Delta t + \delta_m}) - \log(1+e^{\delta_m})\bigr],$ whose derivative
$\lambda_\theta=\sum_m \alpha_m\beta_m\,\sigma(\beta_m\Delta t+\delta_m)$
is also closed-form.

\noindent\textbf{SurF-GLQ} (unconstrained positive MLP, per-interval
quadrature). $\lambda_\theta=\mathrm{softplus}(f_\theta(\Delta t,\mathbf{h}))$
for an unconstrained MLP $f_\theta$; $\Lambda_\theta$ is obtained by applying
fixed $Q$-point Gauss--Legendre quadrature to
$\lambda^{\dagger}_\theta=\lambda_{\text{floor}}+\lambda_\theta$ on $[0,\tau_i]$ with $Q=8$, so the floor is integrated exactly alongside the MLP term. This quadrature is (i) applied per inter-event interval rather than over $[0,T]$, (ii) of cost $O(NQ)$ with $Q$ fixed and independent of $T$, and (iii) empirically below floating-point noise at $Q=8$ on all six
benchmarks (Appendix~\ref{app:glq_error}), so it does not enter the training gradient in practice. This is the precise sense in which SurF avoids the $O(NQ')$, $T$-dependent quadrature over $[0,T]$ that bottlenecks THP/SAHP. Variant-selection guidelines are in Appendix~\ref{app:variants_quadrature}. Substituting~\eqref{eq:amort_intensity} into~\eqref{eq:ppp_likelihood} and applying the exact change of variables $\boldsymbol{\tau}\mapsto\boldsymbol{\Delta z}$ with the Jacobian of
Theorem~\ref{thm:reverse_rescaling} (Appendix~\ref{app:likelihood-derivation}) gives the amortized SurF
loss:
\begin{equation}\label{eq:amort_loss}
\mathcal{L}_{\text{SurF}}(\theta) \;=\;
\sum_{i=1}^N \Lambda_\theta(\tau_i\mid\mathbf{h}_{i-1})
+ \Lambda_\theta(\Delta T\mid\mathbf{h}_N)
- \sum_{i=1}^N \log \lambda^{\dagger}_\theta(\tau_i\mid\mathbf{h}_{i-1}),
\end{equation}
where $\tau_i=t_i-t_{i-1}$ and $\Delta T=T-t_N$. The first two terms are forward passes of the monotone network; the third is its derivative (autodiff or closed form). For marked sequences $\mathcal{L}_{\text{SurF}}$ is added to the mark cross-entropy $\mathcal{L}_{\text{type}}$ with scale-balancing weight $\beta_{\text{type}}$ (Appendix~\ref{app:marks}).

\textbf{Generative Sampling.}\label{sec:sampling}
SurF generates events by inverting the time rescaling: sample
exponential noise and solve for event times under the learned cumulative
intensity. Given history $\mathbf{h}$, draw $U\sim\mathrm{Uniform}(0,1)$,
set $z=-\log U$ so that $z\sim \textrm{Exp}(1)$, and solve $\Lambda_\theta(\Delta t\mid\mathbf{h})=z$
for $\Delta t$. Because $\Lambda_\theta$ is $C^1$, strictly increasing,
and lower-bounded by $\lambda_{\text{floor}}>0$, the root exists and is
unique. We use a safeguarded Newton
method~\citep{nocedal2006numerical}; the update $
\Delta t^{(n+1)} = \Delta t^{(n)}
- \frac{\Lambda_\theta(\Delta t^{(n)}\mid\mathbf{h})-z}{\lambda^{\dagger}_\theta(\Delta t^{(n)}\mid\mathbf{h})}
$ is accepted when it remains inside the current bracket, and bisection is
used otherwise. This converges globally from any valid bracket, with
local quadratic rate once inside the Newton--Kantorovich region
(Appendix~\ref{app:newton-convergence}). The mark is drawn from
the softmax of $\mathrm{TypeNet}_\theta(\mathbf{h})$ and the history is
updated; full procedure in Algorithm~\ref{alg:surf_sampling_newton}. Table~\ref{tab:method_comparison} summarizes the parameterization of
SurF against existing neural TPPs. Having established how $\Lambda_\theta$ is parameterized, we now ask whether a single model trained on multiple datasets can generalize to unseen ones at zero shot.
\begin{table}[ht]
\vspace{-1 em}
\centering
\setlength{\tabcolsep}{4pt}
\renewcommand{\arraystretch}{1.05}
\caption{\textbf{Parameterization comparison across neural TPPs.}
``Window quadrature'' refers to the $O(NQ')$ numerical integration of
$\lambda$ over $[0,T]$ inside the training likelihood;
``per-interval'' refers to a fixed $O(Q)$ rule on $[0,\tau_i]$ whose
cost does not grow with $T$. The Sampling column reports the procedure
used to generate event times given a learned model. $^{\ast}$FullyNN inverts its cumulative hazard at the single fixed quantile
$\Phi(\tau)=\log 2$ by bisection to obtain a median point prediction; a generative sampler over draws $z\sim\mathrm{Exp}(1)$ is not specified as we do in SurF. }
\label{tab:method_comparison}
\footnotesize
\resizebox{\columnwidth}{!}{%
\begin{tabular}{l|l|l|l|l}
\toprule
\textbf{Method} & \textbf{Parameterizes} & \textbf{Likelihood form} &
\textbf{Window quadrature} & \textbf{Sampling}  \\
\midrule
RMTPP~\citep{du-16-recurrent}    & $\lambda$ (RNN, exp. decay)            & Closed-form (parametric)        & None (parametric)        & Inverse CDF              \\
NHP~\citep{mei-17-neuralhawkes}  & $\lambda$ (CT-LSTM)                    & MC integral                     & Yes, MC over $[0,T]$     & Thinning                 \\
THP~\citep{zuo2020transformer}   & $\lambda$ (Transformer)                & MC / Simpson                    & Yes, $O(NQ')$            & Thinning                 \\
SAHP~\citep{zhang-2020-self}     & $\lambda$ (self-attention)             & MC                              & Yes, $O(NQ')$            & Thinning                 \\
IFTPP~\citep{shchur2019intensity}& Density (log-normal mixture)           & Closed-form                     & N/A (no $\lambda$)       & Sample density           \\
$^{\ast}$FullyNN~\citep{omi2019fully}     & $\Lambda$ (monotone FFN)               & Closed-form                     & None                     & Bisection at $\Phi=\log 2$           \\
\midrule
\textbf{SurF-MoE} (ours) & $\Lambda$ (mixture of exponentials)            & \textbf{Closed-form}            & \textbf{None}            & Newton on $\Lambda^{-1}$  \\
\textbf{SurF-CSB} (ours) & $\Lambda$ (cumulative softplus basis)          & \textbf{Closed-form}            & \textbf{None}            & Newton on $\Lambda^{-1}$  \\
\textbf{SurF-GLQ} (ours) & $\lambda=\mathrm{softplus}(f_\theta)$, $\Lambda$ via GLQ & Per-interval $O(Q)$, $T$-indep. & \textbf{None at window} & Newton on $\Lambda^{-1}$  \\
\bottomrule
\end{tabular}}
\end{table}
\subsection{Zero-Shot Capabilities Across Datasets}\label{sec:domain_invariance}
We now show that SurF's time-rescaling objective provides the ability to perform zero-shot prediction. To understand why, we examine this first through the lens of the history encoder $\phi_\theta$ (Appendix~\ref{app:architecture}). The goal is to learn an invertible mapping $\Lambda_\theta$ from the inter-arrival times of an arbitrary temporal point process to a sequence of Exp(1) distributed random variables (the existence of which is guaranteed by Theorem~\ref{thm:reverse_rescaling}). Since each $\Lambda_\theta(\tau_i\mid \mathbf{h}_{i-1})$ term of $\mathcal{L}_{\textrm{SurF}}$ is Exp(1) distributed at the well-specified optimum by Corollary~\ref{cor:time_rescaling_corollary}, training on any dataset results in minimizing the negative log-likelihood of an Exp(1) distribution governed by $\theta$---in this case, $\phi_\theta$. Since $\phi_\theta$ is shared across datasets, the variance in the inter-arrival times from the multiple training datasets helps create a robust mapping from input to Exp(1) space. Moreover, although multiple input TPPs might have similar inter-arrival times between certain events, the dependence of $\phi_\theta$ on the event history ensures that event patterns in input space play a pivotal role in governing the mapping. The signal
accumulates in the shared parameters $(\phi_\theta,\Lambda_\theta)$ as
a dataset-general repertoire of history features (clustering, decay,
mark--time coupling, refractory gaps); per-dataset rate and scale
constants are absorbed by the rescaling itself. At zero-shot time, an
unseen dataset shares the same canonical target, so the same
$(\phi_\theta,\Lambda_\theta)$ applies without adaptation. We state this formally in Proposition~\ref{prop:domain_invariant}.

Again, the variance in the inter-arrival times of the training data ensures a robust representation that precisely maps regions of Exp(1) space to input space in a manner that minimizes time RMSE. Note that it is crucial here for $\textrm{Im}(\Lambda_\theta)$ to follow a single, ideally simple distribution \textit{across datasets} to ensure a tractable objective for the decoder (Theorem~\ref{thm:time_rescaling}). Thus, a single pretrained SurF checkpoint should transfer
zero-shot. Section~\ref{exp:multihorizon} confirms it.

\section{Experiments}
\label{sec:experiments}

We assess SurF as a general-purpose generative model for irregular event streams by asking: \textbf{(i)} Can the SurF bijection reconstruct rich, non-monotonic intensity patterns? (\S\ref{exp:spikes}). \textbf{(ii)} Can a single shared $\Lambda_\theta$ match or outperform dataset-specific models on next-event prediction across six diverse corpora, and is the induced density well calibrated under the TRT? (\S\ref{exp:nextevent}). \textbf{(iii)} Does SurF remain stable under repeated autoregressive decoding, and does the canonical target allow strict zero-shot transfer to unseen datasets? (\S\ref{exp:multihorizon}). To investigate these questions, we use six benchmarks covering time scales from seconds to days: A) \textbf{Taobao}~\citep{xue2022hypro}, B) \textbf{Taxi}~\citep{whong-14-taxi}, C) \textbf{Retweet}~\citep{zhou2013learning}, D) \textbf{StackOverflow}~\citep{snapnets}, E) \textbf{Amazon}~\citep{amazon-2018}, and F) \textbf{Earthquake} (${\sim}1.3$M events, ${\sim}32$K sequences, and $75$ event types in total across the six corpora; Appendix~\ref{app:datasets}). We use RMSE for time prediction and accuracy for type prediction as our performance measures.

\subsection{Proof of Concept: Recovering Oscillatory Firing Rates}
\label{exp:spikes} Before turning to benchmarks, we verify on a controlled synthetic process what SurF promises to learn. We generated synthetic spike trains from $\lambda(t) = 0.5 + 19.5\bigl((1+\cos(\pi t))/2\bigr)^3$ (base rate $0.5$\,Hz, peak $20$\,Hz) and trained on $200$ trials of $6$\,s. SurF-MoE accurately recovers the oscillatory peaks and matches the ground-truth ISI distribution, whereas a Hawkes baseline produces a noisy intensity estimate and a heavily biased inter-spike interval (ISI) tail; SurF thus captures event-rate structures invisible to classical parametric models (Figure~\ref{fig:spikes}).

\subsection{Next-Event Prediction}
\label{exp:nextevent}
With the parameterization validated on synthetic data, we now ask whether a single jointly-trained $\Lambda_\theta$ can replace baselines trained on singular datasets across six real-world corpora.
Table~\ref{tab:nextevent} reports next-event RMSE and type accuracy for baselines trained directly on the six datasets and for the pretrained and finetuned models of all SurF variants; Table~\ref{tab:nexteventbaselines} reports the same two metrics for our own jointly-trained and finetuned runs of AttNHP, THP, and DTPP. 
We evaluate SurF in two regimes: \textbf{joint} (a single checkpoint trained on the union of all six training corpora, $\{A-F\}_{train}$ datasets, and evaluated per-test-set, $\{A-F\}_{test}$; the target's training split is included in the training mix) and \textbf{finetuned} (the same checkpoint finetuned per dataset, e.g., finetune on $A_{train}$ and test on $A_{test}$).

A strict leave-one-out zero-shot protocol is reported in
Table~\ref{tab:multihorizon}, e.g., we train on $\{A-E\}_{train}$ datasets and test on $F_{test}$. Baselines are trained from scratch, e.g., we train on $A_{train}$ dataset and test on $A_{test}$. Figure~\ref{fig:onestep} compares the next-event RMSE and type accuracy of the finetuned versions of the three SurF variants with the DTPP, NHP, and NJDTPP baselines.
\begin{table}[ht]
\vspace{-1 em}
\centering
\setlength{\tabcolsep}{3pt}
\renewcommand{\arraystretch}{0.95}
\caption{Next-event prediction: time RMSE / type accuracy (\%).
\textbf{Bold} = best, \underline{underline} = second-best. ``--'' = not
reported. $^{\dagger}$ DTPP conditions its mark prediction on the
ground-truth next-event time, which biases its type accuracy upward.}
\label{tab:nextevent}
\resizebox{\columnwidth}{!}{%
\footnotesize
\begin{tabular}{l|cccccc}
\toprule
\textbf{Model} & \textbf{Amazon} & \textbf{Retweet} & \textbf{Taxi} & \textbf{Earthquake} & \textbf{StackOverflow} & \textbf{Taobao} \\
\midrule
MHP~\citep{hawkes-71}     & 0.635 / 24.1 & 22.92 / 44.3 & 0.382 / 90.5 & -- / --      & 1.388 / 35.0 & 0.539 / 31.9 \\
RMTPP~\citep{du-16-recurrent}   & 0.620 / 31.9 & 22.31 / 55.9 & 0.371 / 90.5 & 2.087 / 45.1 & 1.376 / 35.0 & 0.531 / 44.2 \\
NHP~\citep{mei-17-neuralhawkes}     & 0.621 / 32.9 & 21.90 / 60.0 & 0.369 / \underline{91.5} & 2.294 / 45.2 & 1.372 / 45.0 & 0.531 / 44.2 \\
SAHP~\citep{zhang-2020-self}    & 0.619 / 32.3 & 22.40 / 58.4 & 0.372 / 90.3 & 1.832 / 44.3 & 1.375 / 43.9 & 0.532 / 45.4 \\
AttNHP~\citep{yang-2022-transformer}  & 0.621 / 34.7 & 22.19 / 59.9 & 0.370 / 91.3 & 1.853 / 45.1 & 1.374 / 45.0 & 0.531 / 46.4 \\
THP~\citep{zuo2020transformer}     & 0.621 / 33.9 & 22.01 / 58.5 & 0.370 / 91.3 & 2.357 / 45.0 & 1.374 / 45.0 & 0.531 / 46.4 \\
ODETPP~\citep{chen2020neural}  & 0.620 / 34.2 & 22.48 / 56.8 & 0.371 / 89.5 & 66.37 / 44.8 & 1.374 / 43.2 & 0.533 / 44.6 \\
FullyNN~\citep{omi2019fully} & 0.615 / --   & 21.92 / --   & 0.373 / --   & 2.330 / --   & 1.375 / --   & 0.529 / --   \\
IFTPP~\citep{shchur2019intensity}   & 0.618 / 32.5 & 22.18 / \underline{60.3} & 0.377 / 91.4 & 2.550 / 46.7 & 1.374 / 43.2 & 0.531 / 44.6 \\
DTPP~\citep{panos2024decomposable}    & 0.346 / \textbf{40.9}$^{\dagger}$ & 17.89 / 60.0$^{\dagger}$ & \textbf{0.283} / \textbf{92.9}$^{\dagger}$ & 1.372 / \textbf{47.3}$^{\dagger}$ & \underline{1.034} / \textbf{49.6}$^{\dagger}$ & 0.224 / 59.9$^{\dagger}$ \\
NJDTPP~\citep{zhang2024neural}  & -- / --      & 17.50 / \textbf{60.8} & 0.296 / 90.8 & 1.420 / \underline{47.2} & 1.112 / \underline{46.9} & \underline{0.130} / 48.6 \\
GRUwE~\citep{joshi2025still}   & 0.611 / \underline{38.8} & 22.21 / 59.2 & 0.369 / 91.4 & -- / --      & 1.369 / 44.7 & -- / --      \\
TimesFM~\citep{das2024decoder} & 0.386 / --   & 17.69 / --   & 0.316 / --   & 1.367 / --   & 1.149 / --   & 0.131 / --   \\
\midrule
\textsc{SurF-MoE} (joint) & \underline{0.343} / 34.4 & 16.04 / 59.2 & 0.491 / 89.7 & 1.541 / 47.1 & 1.255 / 44.8 & 0.231 / \underline{60.0} \\
\textsc{SurF-MoE} (fine)  & \textbf{0.336} / 34.7 & 16.23 / 59.7 & 0.311 / 89.5 & 1.258 / 47.0 & 1.077 / 45.1 & 0.189 / \textbf{60.3} \\
\textsc{SurF-CSB} (joint) & 0.410 / 30.4 & 15.89 / 56.8 & 0.456 / 90.3 & 1.294 / 47.1 & 1.104 / 44.5 & 0.369 / 51.5 \\
\textsc{SurF-CSB} (fine)  & 0.409 / 33.1 & 15.81 / 59.5 & 0.288 / 90.5 & \underline{1.256} / \underline{47.2} & 1.039 / 44.7 & 0.224 / 57.2 \\
\textsc{SurF-GLQ} (joint) & 0.477 / 33.3 & \underline{15.80} / 59.1 & 0.330 / 88.2 & 1.294 / \underline{47.2} & 1.098 / 44.4 & \textbf{0.126} / 53.4 \\
\textsc{SurF-GLQ} (fine)  & 0.409 / 34.2 & \textbf{15.78} / 59.8 & \underline{0.286} / 90.9 & \textbf{1.228} / \textbf{47.3} & \textbf{1.018} / 44.6 & \textbf{0.126} / 56.2 \\
\bottomrule
\end{tabular}
}
\end{table}
The best SurF variant achieves the best reported time RMSE on five of
six benchmarks and is within trial-level noise of the strongest baseline
on Taxi, the exception; type accuracy is competitive across the board. On
Earthquake, SurF-GLQ (finetuned) improves ${\sim}10\%$ over TimesFM and
DTPP and ${\sim}14\%$ over NJDTPP; all three finetuned variants beat
every baseline, and the jointly-trained SurF-CSB and SurF-GLQ
checkpoints already do so without any per-dataset finetuning. On
Retweet, SurF-CSB (finetuned) improves ${\sim}10\%$ over NJDTPP and
${\sim}12\%$ over DTPP; its jointly-trained counterpart again leads
all baselines. SurF-GLQ (finetuned) attains the best Taobao time RMSE,
and SurF-MoE (finetuned) the top Taobao type accuracy among TPP
methods. On Amazon and StackOverflow, SurF beats
DTPP on time RMSE, though both gaps are comparable to
$\sigma_{\mathrm{RMSE}}{\approx}0.058$ and should be read as effective
ties; on Taxi the gap is within $1.1\%$. The best SurF variant
outperforms the much larger foundation-model baseline TimesFM ($231$M
parameters) on every reported dataset.
\begin{figure}
    \centering
    \includegraphics[width=1\textwidth]{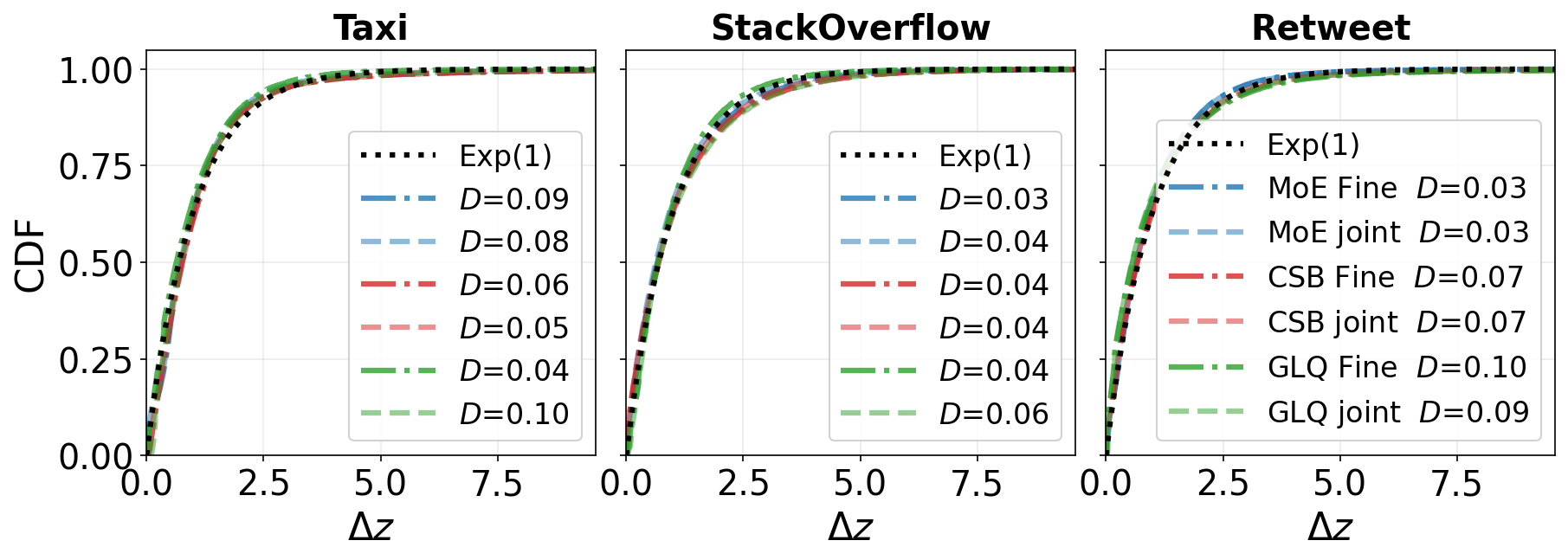}
    \caption{
        \textbf{Time-rescaling goodness-of-fit across datasets and variants.}
        Empirical CDF of the residuals
        $\Delta z_i = \Lambda(\tau_i \mid h_{i-1})$ on the test split.
        A perfectly calibrated model produces residuals that are i.i.d.\
        $\mathrm{Exp}(1)$ (black dotted curve); smaller KS statistic
        $D$ indicates better calibration. SurF achieves near-ideal
        calibration (Figure~\ref{fig:calibration_ks_cdf_full} presents the results for all six datasets).
    }
    \label{fig:calibration_ks_cdf}
\end{figure}
The jointly-trained checkpoints
serve six heterogeneous datasets from a single shared encoder,
consistent with the in-corpus scope of
Proposition~\ref{prop:domain_invariant}; out-of-corpus transfer is
evaluated separately under leave-one-out in
Table~\ref{tab:multihorizon}. Fine-tuning further improves time RMSE
on at least $5/6$ datasets per variant (CSB on $6/6$) and type
accuracy on $6/6$ for CSB and GLQ. Figure~\ref{fig:calibration_ks_cdf} shows the Kolmogorov-Smirnov (KS) statistic, denoted as D, plotted for the CDF of $\Delta z$. $D = \sup_x\left|G(x) - F(x)\right|$ measures the maximum distance between two CDFs $G$ and $F$; in this case, we let $G$ be the CDF of the true Exp(1) distribution while $F$ represents an empirical distribution from a SurF variant approximating Exp(1). We see from Figure~\ref{fig:calibration_ks_cdf} that the gains in type accuracy and time RMSE are
accompanied by well-calibrated densities: the time-rescaled residuals
$\Delta z_i = \Lambda_\theta(\tau_i \mid h_{i-1})$ should be i.i.d.\
$\mathrm{Exp}(1)$ under a perfectly specified model
(TRT~\citep{brown2002time}). All three SurF variants achieve $D \leq 0.1$
on Taxi and StackOverflow with empirical CDFs tracking the theoretical
curve almost exactly while the remaining datasets show only modest deviation.

\textbf{Goodness-of-fit via NLL.}
Beyond predictive accuracy and KS calibration, we also report held-out negative log-likelihood (NLL), the canonical TPP goodness-of-fit score, in Appendix~\ref{app:nll}; MoE achieves the lowest NLL on $4/6$ datasets and GLQ on the remaining two. The ranking among SurF variants tracks the calibration findings of Figure~\ref{fig:calibration_ks_cdf}: MoE wins on $4/6$ datasets (Taxi, StackOverflow, Earthquake, Taobao), GLQ on the two with the heaviest tails (Retweet, Amazon), and finetuning changes NLL only marginally in either direction (Appendix~\ref{app:nll}), consistent with the pretrained density already sitting near the optimum; the gains in RMSE and type accuracy are therefore accompanied by tighter density estimates.

\subsection{Multi-Horizon Forecasting}
\label{exp:multihorizon}
To test whether stability survives
iterated decoding (and whether the canonical target supports strict
zero-shot transfer to held-out datasets) we generate the next
$h \in \{1,\ldots,10\}$ events under three protocols (joint, finetuned,
leave-one-out) and measure per-horizon RMSE and type accuracy.
\begin{figure}[ht]
\vspace{-0.3em}
\centering
\includegraphics[width=\textwidth]{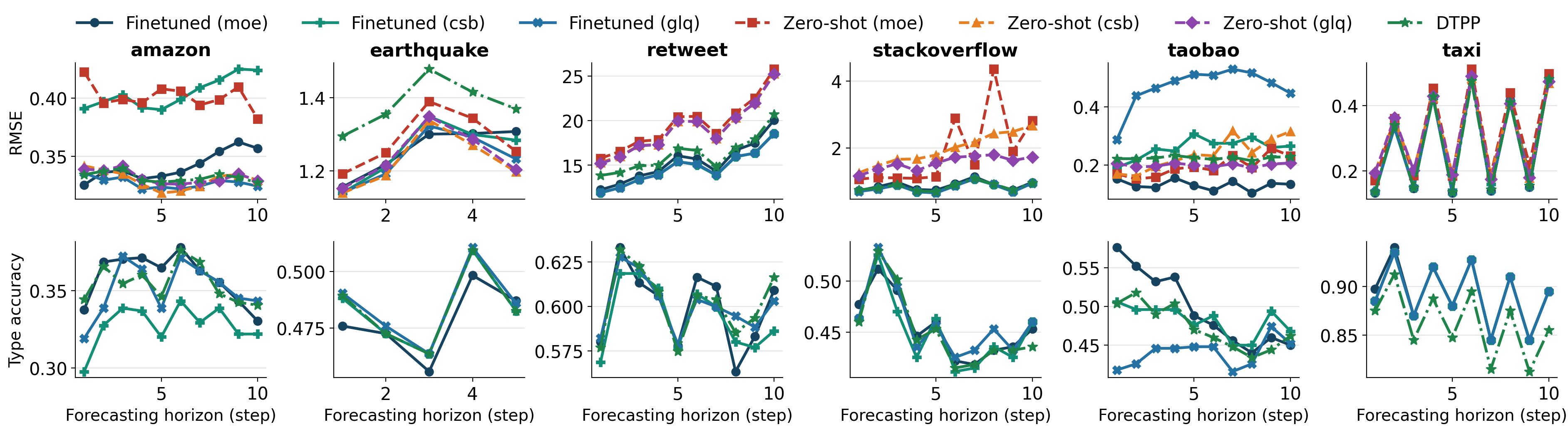}
\vspace{-1.8em}
\caption{\small Per-horizon inter-arrival RMSE (top) and type accuracy
(bottom) for finetuned (solid) and zero-shot (dashed) SurF variants
over $1$--$10$ future events (Earthquake to $5$). Zero-shot type
accuracy is omitted: the type classifier is dataset-specific and only
active after fine-tuning.}
\label{fig:multihorizon}
\vspace{-0.6em}
\end{figure}
Figure~\ref{fig:multihorizon} shows that iterated autoregressive
decoding remains stable. Finetuned RMSE is nearly flat on Amazon and
grows only modestly elsewhere; Retweet exhibits the largest compounding, but growth is monotone and
sub-linear. Type accuracy stays within a few points of the one-step value on every dataset.

Finetuned SurF matches or beats DTPP, the strongest baseline, on time RMSE on all six datasets, with margins from a tie on Taxi (where SurF additionally leads on type accuracy) to ${\sim}41\%$ on Taobao. On type accuracy, finetuned SurF leads on $4/6$ datasets among non-$^{\dagger}$ comparisons. GLQ wins outright on four datasets, ties DTPP on Taxi, and is beaten only on Taobao, where MoE wins.

\textbf{Zero-shot vs.\ baselines.} Under the strict leave-one-out
protocol, the best SurF variant beats every zero-shot baseline on $5/6$ datasets, losing only
on StackOverflow (gaps are largest where baselines collapse). Across variants, GLQ
transfers best on $3/6$, CSB on $2/6$, and MoE on Taobao---the two
more flexible variants together accounting for $5/6$ best transfers,
indicating that capacity for non-monotonic and irregular intensity
shapes generalizes even when the target is entirely unseen during
pretraining.

\textbf{Joint training.} In the joint training setting (top block of
Table~\ref{tab:multihorizon}), the MoE variant of SurF leads on time RMSE for $3/6$
datasets, CSB on $2/6$, and GLQ on $1/6$.

\begin{table}[ht]
\vspace{-1 em}
\centering
\setlength{\tabcolsep}{3pt}
\renewcommand{\arraystretch}{0.95}
\caption{Multi-horizon forecasting: inter-event time RMSE / type
accuracy (\%). \textbf{Bold} = best within block, \underline{underline}
= second-best. ``--'' = not reported. Note: $^{\dagger}$ DTPP conditions
its mark prediction on the ground-truth next-event time.}
\label{tab:multihorizon}
\resizebox{\columnwidth}{!}{%
\small
\footnotesize
\begin{tabular}{l|cccccc}
\toprule
\textbf{Model} & \textbf{Amazon} & \textbf{Retweet} & \textbf{Taxi} & \textbf{Earthquake} & \textbf{StackOverflow} & \textbf{Taobao} \\
\midrule
\textsc{AttNHP} (joint)~\citep{yang-2022-transformer}   & 1.508 / 34.3 & 19.00 / \underline{59.72} & 0.944 / 58.58 & 1.912 / 21.88 & 1.616 / 42.65 & 0.823 / 46.54 \\
\textsc{THP} (joint)~\citep{zuo2020transformer}   & 1.053 / 31.91 & 21.77 / 54.88 & 0.364 / 45.00 & 1.426 / 48.27 & 0.989 / 43.80 & 0.243 / 43.62 \\
\textsc{DTPP} (joint)~\citep{panos2024decomposable}   & \textbf{0.332 / 36.14} & 16.32 / \textbf{59.89} & \underline{0.321} / \underline{89.50} & 1.383 / \underline{48.33} & 0.861 / 44.84 & 0.224 / 47.28 \\
\textsc{SurF-MoE} (joint)     & \underline{0.336 / 35.66} & \textbf{15.26} / 59.38 & 0.323 / 89.20 & \textbf{1.349} / 48.16 & 0.866 / 45.40 & \underline{0.132} / \textbf{49.78} \\
\textsc{SurF-CSB} (joint)     & 0.402 / 31.91 & \underline{15.27} / 56.09 & \textbf{0.319} / 89.10 & \underline{1.380} / 48.29 & \textbf{0.836} / \underline{45.92} & 0.361 / 47.38 \\
\textsc{SurF-GLQ} (joint)     & 0.582 / 34.61 & 19.73 / 59.15 & 0.326 / \textbf{89.70} & 1.438 / \textbf{48.34} & \underline{0.846} / \textbf{46.06} & \textbf{0.129} / \underline{49.04} \\
\midrule
\textsc{AttNHP}~\citep{yang-2022-transformer} & 0.657 / \textbf{36.48} & 18.18 / 58.91 & 0.397 / 85.35 & 1.911 / 48.34 & 1.235 / \underline{45.89} & 0.427 / \underline{48.20} \\
\textsc{THP}~\citep{zuo2020transformer} & 0.657 / 35.53 & 18.51 / 59.65 & 0.371 / 42.90 & 2.746 / 46.84 & 1.218 / 43.00 & 0.357 / 43.62 \\
\textsc{DTPP}~\citep{panos2024decomposable}                 & \underline{0.332} / 35.45$^{\dagger}$ & 16.20 / \textbf{60.19}$^{\dagger}$ & \textbf{0.285} / 86.20$^{\dagger}$ & 1.382 / 48.35$^{\dagger}$ & 0.851 / 45.19$^{\dagger}$ & \underline{0.223} / 47.34$^{\dagger}$ \\
\textsc{AttNHP} (fine)~\citep{yang-2022-transformer} & 4.071 / 29.52 & 17.28 / 60.07 & 0.543 / 82.95 & 2.790 / \textbf{48.63} & 2.102 / 45.12 & 2.518 / 44.66 \\
\textsc{THP} (fine)~\citep{zuo2020transformer} & 0.594 / \underline{36.30} & 21.78 / 59.83 & 0.379 / 89.20 & 1.821 / 48.25 & 1.102 / 45.47 & 0.279 / 47.32 \\
\textsc{DTPP} (fine)~\citep{panos2024decomposable}                 & \underline{0.332} / 34.22$^{\dagger}$ & 16.31 / 59.72$^{\dagger}$ & 0.321 / 86.25$^{\dagger}$ & 1.383 / 48.40$^{\dagger}$ & 0.861 / 44.80$^{\dagger}$ & \underline{0.223} / 46.86$^{\dagger}$ \\
\textsc{SurF-MoE} (fine)      & 0.341 / 35.91 & 15.52 / 59.94 & 0.325 / \underline{89.30} & \underline{1.258} / 47.80 & 0.871 / 45.51 & \textbf{0.132} / \textbf{49.68} \\
\textsc{SurF-CSB} (fine)      & 0.405 / 32.75 & \underline{14.82} / 59.43 & \underline{0.320} / \textbf{89.38} & \underline{1.258} / 48.34 & \underline{0.837} / 45.75 & 0.261 / \underline{48.20} \\
\textsc{SurF-GLQ} (fine)      & \textbf{0.327} / 35.10 & \textbf{14.71} / \underline{60.10} & \textbf{0.285} / 89.10 & \textbf{1.236} / \underline{48.50} & \textbf{0.825} / \textbf{45.90} & 0.468 / 44.00 \\
\midrule
\textsc{AttNHP} (zero-shot (leave-one-out)) & 1.975 / -- & 19.79 / -- & 0.698 / -- & 1.482 / -- & \underline{1.007} / -- & 0.771 / -- \\
\textsc{THP} (zero-shot (leave-one-out)) & 9.127 / -- & \underline{19.23} / -- & 2.759 / -- & 1.447 / -- & 3.676 / -- & 3.922 / -- \\
\textsc{DTPP} (zero-shot (leave-one-out)) & 11.35 / -- & 19.90 / -- & 0.440 / -- & 1.338 / -- & \textbf{0.861} / -- & 0.795 / -- \\
\textsc{SurF-MoE} (zero-shot (leave-one-out)) & 0.401 / -- & 19.86 / -- & 0.349 / -- & 1.287 / -- & 2.172 / -- & \textbf{0.198} / -- \\
\textsc{SurF-CSB} (zero-shot (leave-one-out)) & \textbf{0.330} / -- & 19.35 / -- & \underline{0.333} / -- & \textbf{1.229} / -- & 2.016 / -- & 0.243 / -- \\
\textsc{SurF-GLQ} (zero-shot (leave-one-out)) & \underline{0.332} / -- & \textbf{19.10} / -- & \textbf{0.310} / -- & \underline{1.241} / -- & 1.559 / -- & \underline{0.200} / -- \\
\bottomrule
\end{tabular}
}
\vspace{-1. em}
\end{table}
\subsection{Ablations}
\label{exp:ablations} Here, we ablate three design choices of SurF: (i) the
\emph{conformer} block inserted before multi-head attention; (ii) the
\emph{loss schedule}
(\emph{Equal}/\emph{Staged}/\emph{Hybrid}); and (iii) the
\emph{quadrature configuration} of SurF-GLQ---Gauss--Legendre points
$Q$ and log-intensity references $J$. Full tables and per-bin
breakdowns are in Appendix~\ref{app:hp_sensitivity}. The Hybrid schedule used by full
SurF sits near the Pareto frontier across all six benchmarks: never
worst on time RMSE, best on Retweet time RMSE and tied-best on Taobao
type accuracy, and within noise of the column-best elsewhere. The Staged
schedule collapses on Retweet (RMSE jumps from ${\sim}16$ to $22.3$)
because dropping the SurF likelihood during finetuning discards the
gradient signal that anchors long-tail inter-arrivals. The conformer
block helps under the Staged schedule---most on Earthquake and
Taxi---and is harmful under the Equal schedule. Across a $100$-trial random
hyperparameter sweep, increasing $Q{=}4{\to}8$ reduces median time
RMSE by $\sim$$0.08$ and the trial-to-trial standard deviation by ${\sim}3\times$
($\sigma_{\text{RMSE}}$: $0.170{\to}0.058$); going further to $Q{=}12$
yields no measurable gain (median difference $0.04$, well within
$\sigma{\approx}0.05$, and best-trial accuracy actually drops
$49.94{\to}49.49\%$). This validates the $Q{=}8$ default of
Section~\ref{sec:training}: per-interval GLQ has already converged at $Q{=}8$. A similar saturation supports the $J{=}4$ default. Across all $100$ successful trials, time RMSE varies in $[8.725, 9.242]$ with $79\%$ within $1\%$ of the best run;
the top-$10$ configurations span a window of $0.011$ RMSE units (an
order of magnitude smaller than the gap to the closest baseline in
Table~\ref{tab:nextevent}) across a mix of $Q$, $J$, conformer, and
batch-size settings, indicating several near-optimal HP regions rather
than a single fragile peak.

\textbf{Computational Efficiency}
Based on Table~\ref{tab:cost_amazon_so} and against NJDTPP, the most directly comparable quadrature-based method, SurF-MoE trains $10.9\times$ faster on Amazon and $12.4\times$ on StackOverflow, with inference gaps above $50\times$; it is also ${\sim}5.8\times$ faster than TimesFM at inference. Against DTPP, SurF-MoE trains ${\sim}1.4\times$ faster on both datasets at ${\sim}36\%$ lower peak memory. 
\begin{table}[ht]

\centering\small
\caption{Training and inference cost on Amazon and StackOverflow (H100, batch $512$, inference averaged over $20$ sequences). Train: ms per batch. Infer.: ms per sequence. ``--'' = not run.}
\label{tab:cost_amazon_so}
\begin{tabular}{l|ccc|ccc}
\toprule
& \multicolumn{3}{c|}{\textbf{Amazon}} & \multicolumn{3}{c}{\textbf{StackOverflow}} \\
\textbf{Model} & \textbf{Train} & \textbf{Infer.} & \textbf{Mem (GB)}
               & \textbf{Train} & \textbf{Infer.} & \textbf{Mem (GB)} \\
\midrule
AttNHP  & 90.9 & 9.91 & 13.1  & 81.3 & 10.3 & 11.3 \\
IFTPP   & 5.96 & 2.73 & 0.26  & --   & --   & --   \\
ODETPP  & 240  & 102  & 0.37  & 248  & 125  & 0.27 \\
THP     & 6.61 & 4.77 & 0.81  & 7.51 & 5.92 & 0.59 \\
DTPP    & 104  & 5.57 & 8.93  & 115  & 7.42 & 9.84 \\
NJDTPP  & 797  & 373  & 0.544 & 1000 & 387  & 0.77 \\
TimesFM & --   & 39.7 & --    & --   & 40.4 & --   \\
\midrule
\textbf{SurF-MoE} & 72.9 & 6.82 & 5.71 & 80.8 & 6.80 & 6.23 \\
\textbf{SurF-CSB} & 73.3 & 6.79 & 5.71 & 87.4 & 7.21 & 6.71 \\
\textbf{SurF-GLQ} & 297 & 13.5 & 19.4 & 314 & 13.7 & 21.1 \\
\bottomrule
\end{tabular}
\vspace{-1.5 em}
\end{table}
\section{Related Work}
\textbf{Generative models for time series.}
Foundation models have advanced rapidly in
language~\citep{brown2020language,touvron2023llama},
vision~\citep{radford2021learning,oquab2023dinov2}, and regularly
sampled time series (TimesFM~\citep{das2024decoder}, Alternators ~\citep{rezaei2025alternators},
Chronos~\citep{ansari2024chronos}, Lag-Llama~\citep{rasul2024lag},
Moirai~\citep{woo2024unified}). All assume fixed sampling rates and
synchronized channels, an assumption that fails for asynchronous event
streams where inter-arrival times span orders of magnitude and timing
itself carries meaning. SurF targets this regime directly. \textbf{Neural temporal point processes.}
Neural TPPs~\citep{shchur2021neural,rezaei2021real} offer flexible
event modeling but have not reached foundation-model scale, ours
included. RNN-based methods (RMTPP~\citep{du-16-recurrent,rezaei2018comparison},
Neural Hawkes~\citep{mei-17-neuralhawkes}) suffer from gradient pathologies
and sequential bottlenecks; Transformer variants
(SAHP~\citep{zhang-2020-self}, THP~\citep{zuo2020transformer}, and
extensions~\citep{li2023sparse,gu2021attentive,xue2022hypro})
parallelize training but still require numerical quadrature of
$\int_0^T \lambda^*(s\mid\mathcal{H}_s)\,ds$ inside the likelihood, at
a cost that grows with the observation window. Intensity-free methods
(IFTPP~\citep{shchur2019intensity}) avoid the integral but discard the
intensity representation that makes the TRT usable. \textbf{Cumulative-hazard neural TPPs.}
Closest to SurF are FullyNN~\citep{omi2019fully} and the log-normal
mixture of~\citet{shchur2019intensity}, which parameterize
$\Lambda_\theta$ directly and recover $\lambda_\theta$ by
differentiation. We differ on three fronts.
\emph{Conceptually,} prior work treats $\Lambda_\theta$ as a
likelihood-tractability trick, whereas SurF reinterprets the TRT
itself as a learnable bijection to a canonical exponential space,
yielding the domain (dataset)-invariance result in
Proposition~\ref{prop:domain_invariant}.
\emph{Architecturally,} prior work uses single-dataset feedforward or
recurrent encoders; SurF pairs the cumulative-intensity head with a
Transformer encoder and multi-dataset pretraining, and offers three
monotone parameterizations (MoE, CSB, GLQ) with distinct
expressiveness--cost trade-offs
(Appendix~\ref{app:variants_quadrature}).
\emph{Empirically,} SurF is the first cumulative-hazard TPP to report
cross-dataset transfer. The jointly-trained checkpoint outperforms
per-dataset FullyNN on all six benchmarks for the best variant per
dataset, and under the strict leave-one-out protocol of
Table~\ref{tab:multihorizon} the held-out checkpoint ( which never
sees the target corpus) still beats every classical and
neural-autoregressive baseline on $5/6$ datasets. See
Appendix~\ref{app:related_cumhazard} for a full comparison. \textbf{Relation to intensity-free and flow-based models.}
Some intensity-free and flow-based TPPs also use invertible transformations from a simple base variable to event times; the difference is what the base variable is. IFTPP~\citep{shchur2019intensity} fits the conditional density of the inter-event time with mixture parameters emitted by the history encoder, so its latents are dataset-specific and nothing in the model or loss pushes two corpora into a common space. FullyNN~\citep{omi2019fully} does have the compensator and does use the $\mathrm{Exp}(1)$ property, but only to solve $\Lambda^{\text{FNN}}_\theta(\tau\mid\mathbf{h}_i)=\log 2$ by bisection for a median point prediction, and is fitted one dataset at a time. In SurF the target is not chosen: the TRT fixes it before training, identically for every corpus. To our knowledge no prior neural TPP uses that shared target to train one set of weights across multiple corpora, or reports cross-dataset transfer of the resulting checkpoint (Table~\ref{tab:loo_degradation}).

\section{Conclusion}
We introduced \textbf{SurF}, a generative model for irregularly sampled
multivariate event streams that reinterprets the Time Rescaling Theorem
as a learnable, invertible mapping between event sequences and
canonical $\mathrm{Exp}(1)$ noise. SurF builds on the cumulative-hazard
lineage of neural TPPs and reframes the TRT as a normalizing flow. Our
contributions are an explicit bidirectional formulation grounded in a
bijectivity statement that follows from the inverse function theorem
(Theorem~\ref{thm:reverse_rescaling}), three monotone parameterizations
with different expressiveness--cost trade-offs
(Appendix~\ref{app:variants_quadrature}), and a Transformer-amortized
encoder that makes multi-dataset pretraining practical. Under a strict
leave-one-out protocol, a held-out SurF checkpoint beats every
classical and neural-autoregressive baseline on $5/6$ benchmarks and
beats every baseline on Amazon and
Earthquake; empirical evidence of partial cross-dataset transfer for a
learned cumulative intensity.

\textbf{Limitations and future work.}
The bijectivity of Theorem~\ref{thm:reverse_rescaling} requires a
strictly positive intensity, enforced by a small floor
$\lambda_{\text{floor}}$ that introduces a bounded compensator bias
(Appendix~\ref{app:floor}); negligible on our benchmarks but
potentially material for processes with genuine dead-time (refractory
neural firing, market-closure windows). The MoE parameterization
admits closed-form integration but is restricted to monotonically
decaying between-event intensity; CSB and GLQ lift this at modest
cost, and richer tractable parameterizations remain a natural next
step. Finally, our six-dataset pre-training
corpus is sufficient to demonstrate cross-dataset transfer but remains
orders of magnitude smaller than language and vision corpora; scaling
is the key milestone for foundation-scale models of asynchronous event
streams.

\bibliographystyle{unsrtnat}
\bibliography{main}
\newpage
\appendix
\newpage
\tableofcontents
\newpage
\setcounter{theorem}{0}

\section{Time Rescaling Theorem and Bijectivity Proofs}
\label{app:trt-proof}

Consider a temporal point process defined on $\mathbb{R}^+$ generating event
times $\{t_1, t_2, \ldots, t_N\}$ with a conditional intensity function
$\lambda^*(t\mid\mathcal{H}_t) > 0$ for all $t \geq 0$, where
$\mathcal{H}_t$ represents the history of events up to time $t$. The
intensity function characterizes the instantaneous rate of event occurrence
and fully describes the probabilistic structure of the process.

\begin{theorem}[Time Rescaling Theorem, restated]\label{thm:time_rescaling_app}
Let $\{t_i\}_{i=1}^N$ be a realization of a point process with conditional
intensity function $\lambda^*(t\mid\mathcal{H}_t) > 0$. Define the
cumulative intensity function
$\Lambda^*(t) = \int_0^t \lambda^*(s\mid\mathcal{H}_s)\,ds$, assumed to
satisfy $\Lambda^*(t)\to\infty$ as $t\to\infty$, and the
rescaled event times $z_i = \Lambda^*(t_i)$. Then
$\{\Delta z_i = z_i - z_{i-1}\}_{i=1}^N$ with $z_0 = 0$ are i.i.d.\
exponential random variables with unit rate.
\end{theorem}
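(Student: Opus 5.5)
The plan is to prove the theorem in the standard way of \citet{brown2002time}, working with the inter-event gaps and the conditional densities they induce. First, the likelihood in~\eqref{eq:ppp_likelihood} determines the conditional density of each event time given the past. For $t>t_{i-1}$,
\[
p(t_i=t\mid\mathcal{H}_{t_{i-1}})=\lambda^*(t\mid\mathcal{H}_t)\exp\Bigl(-\int_{t_{i-1}}^{t}\lambda^*(s\mid\mathcal{H}_s)\,ds\Bigr).
\]
This holds because, between consecutive events, the history is frozen at $\mathcal{H}_{t_{i-1}}$, so $\lambda^*$ on $(t_{i-1},t]$ is a deterministic function of $t$ once we condition on the past. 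Writing $\Delta z_i=\Lambda^*(t_i)-\Lambda^*(t_{i-1})=\int_{t_{i-1}}^{t_i}\lambda^*$, the goal is to show two things: each $\Delta z_i$, conditioned on $\mathcal{H}_{t_{i-1}}$, is $\mathrm{Exp}(1)$; and the $\Delta z_i$ are jointly independent.

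\textbf{Conditional law of one gap.} Conditional on $\mathcal{H}_{t_{i-1}}$, the map $u\mapsto\int_{t_{i-1}}^{t_{i-1}+u}\lambda^*$ is strictly increasing because $\lambda^*>0$. It sends $(0,\infty)$ onto $(0,\infty)$, using the assumption $\Lambda^*(t)\to\infty$. Applying the one-dimensional change of variables to the density above, the Jacobian factor $\lambda^*$ cancels and leaves density $e^{-z}$ for $z>0$. An equivalent route uses survival functions: $P(\Delta z_i>z\mid\mathcal{H}_{t_{i-1}})=P\bigl(t_i>(\Lambda^*)^{-1}(\Lambda^*(t_{i-1})+z)\bigr)=e^{-z}$. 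Here invertibility is exactly Theorem~\ref{thm:reverse_rescaling}, or simply strict monotonicity of $\Lambda^*$.

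\textbf{Independence.} Because the conditional law $\mathrm{Exp}(1)$ does not depend on $\mathcal{H}_{t_{i-1}}$, $\Delta z_i$ is independent of $(\Delta z_1,\dots,\Delta z_{i-1})$, which are functions of the history. Chaining this over $i$ shows that the joint density of $(\Delta z_1,\dots,\Delta z_N)$ factorizes as $\prod_i e^{-\Delta z_i}$. Equivalently, and more directly, the joint change of variables $\boldsymbol{\tau}\mapsto\boldsymbol{\Delta z}$ has a lower-triangular Jacobian whose diagonal entries are $\lambda^*(t_i\mid\mathcal{H}_{t_i})$. Dividing the likelihood~\eqref{eq:ppp_likelihood} by this determinant leaves $\exp(-\sum_i\Delta z_i)$. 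The triangular structure holds because $\Delta z_i$ depends on $\tau_j$ only for $j\le i$.

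\textbf{Main obstacle.} The delicate point is $N$ and the window $[0,T]$. With a fixed horizon, $N$ is random and the last factor $\exp(-\int_{t_N}^T\lambda^*)$ is a censored gap. I would handle this by working on the infinite horizon, where $\Lambda^*\to\infty$ guarantees infinitely many events almost surely, and proving the claim for every finite prefix of gaps. Restricting to $[0,T]$ then corresponds to observing the unit-rate Poisson process on the random window $[0,\Lambda^*(T)]$, with the final survival term matching $e^{-(\Lambda^*(T)-z_N)}$. A fully rigorous alternative for general histories is the martingale route: $N(t)-\Lambda^*(t)$ is a martingale, and Watanabe's/Papangelou's characterization makes the time-changed counting process Poisson. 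I would cite that version rather than reprove it.
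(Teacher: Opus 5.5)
Your proposal is correct and is essentially the paper's proof: condition on the first $i-1$ events, use strict monotonicity of $\Lambda^*$ together with $\Lambda^*\to\infty$ to rewrite $\{\Delta z_i>u\}$ as $\{t_i>(\Lambda^*)^{-1}(\Lambda^*(t_{i-1})+u)\}$ and obtain $e^{-u}$, then conclude independence because this conditional law does not depend on the history. Your infinite-horizon handling of the random $N$ is more careful than the paper, which implicitly fixes $N$; two small corrections: dividing \eqref{eq:ppp_likelihood} itself by $\prod_i\lambda^*(t_i\mid\mathcal{H}_{t_i})$ leaves $e^{-\Lambda^*(T)}$, not $e^{-\sum_i\Delta z_i}$, so the Jacobian route should use the product of your conditional densities, $\prod_i\lambda^*(t_i\mid\mathcal{H}_{t_i})\,e^{-\Lambda^*(t_N)}$, and under the paper's convention the first $i-1$ events are $\mathcal{H}_{t_i}$ rather than $\mathcal{H}_{t_{i-1}}$.
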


\begin{proof}
Fix $i \in \{1,\ldots,N\}$ and condition on $\mathcal{H}_{t_i}$, the
first $i-1$ events, with $t_0 = 0$. By definition of the conditional
intensity, the conditional survival function of the $i$-th event time is,
for $t \geq t_{i-1}$,
\begin{equation}
\Pr(t_i > t \mid \mathcal{H}_{t_i})
= \exp\!\Bigl(-\int_{t_{i-1}}^{t} \lambda^*(s\mid\mathcal{H}_s)\,ds\Bigr)
= \exp\!\bigl(-[\Lambda^*(t) - \Lambda^*(t_{i-1})]\bigr).
\end{equation}
Since $\lambda^* > 0$, $\Lambda^*$ is strictly increasing and hence
invertible on its range, and $\Lambda^*(t)\to\infty$ drives the
right-hand side to $0$, so $t_i < \infty$ almost surely. For $u \geq 0$
set $t = (\Lambda^*)^{-1}(\Lambda^*(t_{i-1}) + u)$; monotonicity gives
$\{\Delta z_i > u\} = \{t_i > t\}$, so
\begin{equation}
\Pr(\Delta z_i > u \mid \mathcal{H}_{t_i}) = \exp(-u),
\qquad u \geq 0 .
\end{equation}
The conditional law of $\Delta z_i$ is thus $\mathrm{Exp}(1)$ whatever
$\mathcal{H}_{t_i}$ is, so $\Delta z_i$ is independent of
$\mathcal{H}_{t_i}$ and in particular of $\Delta z_1,\ldots,\Delta z_{i-1}$,
which are $\mathcal{H}_{t_i}$-measurable. Applying this for
$i = 1,\ldots,N$ gives that the $\Delta z_i$ are i.i.d.\ $\mathrm{Exp}(1)$.
\end{proof}

The crucial aspect of this transformation is its bijectivity. The
condition $\lambda^*(t\mid\mathcal{H}_t) > 0$ ensures that $\Lambda^*(t)$
is strictly increasing.\\

\begin{corollary}\label{cor:time_rescaling_corollary}
Let $\{t_i\}_{i=1}^N$ be a realization of a point process with conditional
intensity function $\lambda_0(t\mid\mathcal{H}_t) > 0$. For any $t > 0$, let $t' =  \max\left(\{t_j : (t_j,k_j)\in\mathcal{H}_t\}\cup\{0\}\right)$, and write $\tilde\lambda(\Delta t\mid \mathcal{H}_t) = \lambda_0(t\mid \mathcal{H}_t)$, where $\Delta t = t - t'$. Define the
cumulative intensity functions
$\Lambda_0(t) = \int_0^t \lambda_0(s\mid\mathcal{H}_s)\,ds$, assumed to satisfy $\Lambda_0(t)\to\infty$ as $t\to\infty$, and $\tilde\Lambda(\Delta t \mid \mathcal{H}_t) = \int_0^{\Delta t} \tilde\lambda(s\mid\mathcal{H}_t)\,ds$. Then $\{\tilde\Lambda(\tau_i \mid \mathcal{H}_{t_i})\}_{i=1}^N$ are i.i.d. exponential random variables with unit rate, where $\tau_i = t_i - t_{i-1}$ and $t_0 = 0$. Note that under the convention $\mathcal{H}_t=\{(t_j,k_j):t_j<t\}$ we have $\mathcal{H}_{t_i}=\{(t_j,k_j)\}_{j=1}^{i-1}$, the history available for predicting the $i$-th event, which is what the encoding $\mathbf{h}_{i-1}$ represents.
\end{corollary}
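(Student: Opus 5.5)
The plan is to reduce Corollary~\ref{cor:time_rescaling_corollary} to Theorem~\ref{thm:time_rescaling_app} by showing that each reparameterized compensator term coincides with the corresponding rescaled gap, i.e.
\[
\tilde\Lambda(\tau_i\mid\mathcal{H}_{t_i}) \;=\; \Lambda_0(t_i)-\Lambda_0(t_{i-1}) \;=\; \Delta z_i
\qquad (i=1,\ldots,N),
\]
with $t_0=0$. Once this identity holds for every $i$, the i.i.d.\ $\mathrm{Exp}(1)$ conclusion is inherited verbatim from Theorem~\ref{thm:time_rescaling_app}. The theorem applies to $\lambda_0$ because $\lambda_0>0$ and $\Lambda_0(t)\to\infty$ are exactly its hypotheses.

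The first step is to fix the time index. For $s\in(t_{i-1},t_i]$, the convention $\mathcal{H}_s=\{(t_j,k_j):t_j<s\}$ gives $\mathcal{H}_s=\mathcal{H}_{t_i}=\{(t_j,k_j)\}_{j=1}^{i-1}$. Hence the last event before $s$ is $t'=t_{i-1}$ (or $0$ when $i=1$). For such $s$ the definition of $\tilde\lambda$ therefore reads
\[
\tilde\lambda(s-t_{i-1}\mid\mathcal{H}_{t_i})=\lambda_0(s\mid\mathcal{H}_s).
\]
The point to check is that the history argument stays frozen at $\mathcal{H}_{t_i}$ throughout the interval. This is what lets $\tilde\lambda(\cdot\mid\mathcal{H}_{t_i})$ be read as a single function of the elapsed time $u\in(0,\tau_i]$ with fixed conditioning, rather than the defining relation being evaluated only at one $t$.

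The second step is the substitution $u=s-t_{i-1}$, which gives
\[
\tilde\Lambda(\tau_i\mid\mathcal{H}_{t_i})=\int_0^{\tau_i}\tilde\lambda(u\mid\mathcal{H}_{t_i})\,du=\int_{t_{i-1}}^{t_i}\lambda_0(s\mid\mathcal{H}_s)\,ds=\Lambda_0(t_i)-\Lambda_0(t_{i-1}).
\]
The left endpoint $u=0$ corresponds to $s=t_{i-1}$, where the history differs (it now includes $t_{i-1}$ itself). This is a single point, so it has measure zero and does not affect the integral. Integrability is inherited from $\Lambda_0$ being finite.

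The main obstacle is mild and lies in the first step. One must verify that $\tilde\lambda(\Delta t\mid\mathcal{H}_t)$ is well defined as a function of $\Delta t$ for a fixed history. Two pairs $(t,\mathcal{H}_t)$ that share the same history and the same $\Delta t$ necessarily share the same $t$, since $t=t'+\Delta t$ with $t'$ determined by the history. So the definition is consistent, and on each inter-event interval it is precisely the shifted intensity. With the identity established for all $i$, invoking Theorem~\ref{thm:time_rescaling_app} completes the argument. Independence follows as there, because each $\tilde\Lambda(\tau_i\mid\mathcal{H}_{t_i})$ has the same $\mathrm{Exp}(1)$ conditional law given $\mathcal{H}_{t_i}$.
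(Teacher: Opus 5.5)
Your proposal is correct and follows the same route as the paper. You identify $\tilde\Lambda(\tau_i\mid\mathcal{H}_{t_i})$ with $\Lambda_0(t_i)-\Lambda_0(t_{i-1})$ through the shift $u=s-t_{i-1}$ and inherit the i.i.d.\ $\mathrm{Exp}(1)$ law from Theorem~\ref{thm:time_rescaling_app}, and your explicit check that $\mathcal{H}_s=\mathcal{H}_{t_i}$ for $s\in(t_{i-1},t_i]$ (together with the measure-zero endpoint) spells out the step the paper uses implicitly when it replaces $\lambda_0(s+t_{i-1}\mid\mathcal{H}_{s+t_{i-1}})$ by $\tilde\lambda(s\mid\mathcal{H}_{t_i})$.
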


\begin{proof}
    We know from Theorem~\ref{thm:time_rescaling_app} that $\Lambda_0(t_i) - \Lambda_0(t_{i-1})$ are i.i.d.\ Exp(1) random variables. Observe that:
    \begin{align}
        \Lambda_0(t_i) - \Lambda_0(t_{i-1}) &= \int_{t_{i-1}}^{t_i}\lambda_0(s\mid \mathcal{H}_s)ds \\
        &= \int_0^{\tau_i}\lambda_0(s+t_{i-1} \mid \mathcal{H}_{s + t_{i-1}})ds\\
        &= \int_0^{\tau_i} \tilde\lambda(s\mid \mathcal{H}_{t_i})ds\\
        &= \tilde\Lambda(\tau_i\mid \mathcal{H}_{t_i})
    \end{align}
\end{proof}

\begin{lemma}[Invertibility and Regularity]\label{lem:invertibility}
Assume $\lambda^*(\cdot\mid\mathcal{H}_\cdot)$ is continuous and
$\lambda^*(t\mid\mathcal{H}_t)\geq\lambda_{\min}>0$ for all
$t\in\mathbb{R}^+$. Then $\Lambda^*$ is a $C^1$ bijection from
$\mathbb{R}^+$ to $\mathbb{R}^+$ with $C^1$ inverse $(\Lambda^*)^{-1}$.
If, in addition, $\lambda^*$ is $C^k$ for some $k\geq 0$, then $\Lambda^*$
and $(\Lambda^*)^{-1}$ are $C^{k+1}$.
\end{lemma}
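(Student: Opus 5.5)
The argument combines the fundamental theorem of calculus, strict monotonicity, and the one-dimensional inverse function theorem. Throughout, I treat $t\mapsto\lambda^*(t\mid\mathcal{H}_t)$ as a fixed function $g:\mathbb{R}^+\to[\lambda_{\min},\infty)$ along a realization. The hypothesis is that $g$ is continuous; its possible jumps at event times are excluded by assumption, as in the statement.

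\emph{Step 1: regularity and monotonicity.} Since $g$ is continuous, the fundamental theorem of calculus shows that $\Lambda^*(t)=\int_0^t g(s)\,ds$ is differentiable with $(\Lambda^*)'(t)=g(t)$. This derivative is continuous, so $\Lambda^*\in C^1$. Because $(\Lambda^*)'\geq\lambda_{\min}>0$, the mean value theorem gives
\[
\Lambda^*(t_2)-\Lambda^*(t_1)\geq\lambda_{\min}(t_2-t_1)>0 \quad\text{for } t_2>t_1 .
\]
Hence $\Lambda^*$ is strictly increasing, and in particular injective.

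\emph{Step 2: surjectivity onto $\mathbb{R}^+$.} We have $\Lambda^*(0)=0$ and $\Lambda^*(t)\geq\lambda_{\min}t\to\infty$. Since $\Lambda^*$ is continuous, the intermediate value theorem shows that every $z\geq0$ is attained: choose $T=z/\lambda_{\min}$, so that $\Lambda^*(T)\geq z$ and $\Lambda^*(0)\leq z$. Combined with injectivity, $\Lambda^*:\mathbb{R}^+\to\mathbb{R}^+$ is a bijection. This lower bound is exactly where $\lambda_{\min}>0$ is needed rather than mere positivity; mere positivity would only give $\Lambda^*(t)\to\infty$ under the extra assumption used in Theorem~\ref{thm:time_rescaling}.

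\emph{Step 3: regularity of the inverse.} Let $G=(\Lambda^*)^{-1}$. First, $G$ is continuous because it is the inverse of a continuous strictly increasing function on an interval. To show differentiability, fix $z$ and set $t=G(z)$. Writing $t+h=G(z+k)$, we have
\[
\frac{G(z+k)-G(z)}{k}=\frac{h}{\Lambda^*(t+h)-\Lambda^*(t)}.
\]
Continuity of $G$ gives $h\to0$ as $k\to0$, so this quotient tends to $1/g(t)$. Therefore
\[
G'(z)=\frac{1}{g(G(z))},
\]
which is continuous as a composition of continuous maps, with the denominator bounded below by $\lambda_{\min}$. At the boundary point $z=0$ one uses one-sided derivatives, and the same computation applies. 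This yields the $C^1$ claims.

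\emph{Step 4: higher regularity, by induction on $k$.} If $g\in C^k$, then $(\Lambda^*)'=g\in C^k$, so $\Lambda^*\in C^{k+1}$ immediately. For the inverse, suppose $G\in C^{j}$ for some $1\leq j\leq k$. Then $g\circ G\in C^{\min(k,j)}=C^j$. Since $x\mapsto1/x$ is smooth on $[\lambda_{\min},\infty)$, it follows that $G'=1/(g\circ G)\in C^j$, hence $G\in C^{j+1}$. Starting from $j=1$ (Step 3) and iterating up to $j=k$ gives $G\in C^{k+1}$.

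The main point requiring care is Step 3's differentiability of the inverse, in particular justifying continuity of $G$ before taking the limit. Everything else is routine, so no serious obstacle is expected.
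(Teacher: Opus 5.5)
Your proof is correct and takes the same route as the paper's. Both use the fundamental theorem of calculus for $C^1$ regularity, the bound $\lambda_{\min}$ for strict monotonicity and the growth $\Lambda^*(t)\geq\lambda_{\min}t$, continuity together with $\Lambda^*(0)=0$ for surjectivity, and the inverse function theorem for the inverse. Your difference-quotient argument, which also covers the endpoint $z=0$, and your explicit induction on $G'=1/(g\circ G)$ just spell out what the paper compresses into ``apply the IFT at every point'' and ``inherits this regularity via IFT.''
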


\begin{proof}
By continuity of $\lambda^*$ and the fundamental theorem of calculus,
$\Lambda^*$ is $C^1$ on $\mathbb{R}^+$ with
$(\Lambda^*)'(t) = \lambda^*(t\mid\mathcal{H}_t)$. The lower bound
$(\Lambda^*)'(t)\geq\lambda_{\min}>0$ implies $\Lambda^*$ is strictly
increasing (hence injective) and
$\Lambda^*(t)\geq\lambda_{\min}\,t\to\infty$ as $t\to\infty$. Together with
$\Lambda^*(0)=0$ and continuity, this gives surjectivity onto
$\mathbb{R}^+$, so $\Lambda^*$ is a $C^1$ bijection. The Inverse Function
Theorem applied at every point (where $(\Lambda^*)'\neq 0$) yields a $C^1$
inverse. If $\lambda^*$ is $C^k$, then $\Lambda^*$ is $C^{k+1}$ by
repeated differentiation, and $(\Lambda^*)^{-1}$ inherits this regularity
via IFT.
\end{proof}

\begin{proposition}[Reverse-direction realization]\label{prop:reverse_direction}
Under the hypotheses of Lemma~\ref{lem:invertibility}, let $\{z_i\}$ be a
realization of a unit-rate Poisson process on $\mathbb{R}^+$. Then
$\{t_i=(\Lambda^*)^{-1}(z_i)\}$ is a realization of the original point
process with conditional intensity $\lambda^*$.
\end{proposition}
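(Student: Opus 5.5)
The subtlety is that $\Lambda^*$ depends on the history, so $(\Lambda^*)^{-1}$ cannot be applied to the whole Poisson realization at once. I would therefore interpret $t_i=(\Lambda^*)^{-1}(z_i)$ sequentially. Having constructed $t_1,\dots,t_{i-1}$, the history $\mathcal{H}_t$ is fixed on $(t_{i-1},t]$, so the compensator on that stretch is a deterministic function of $t$. Define $t_i$ as the unique solution of
\[
\Lambda^*(t_i)=z_i,\qquad\text{equivalently}\qquad \int_{t_{i-1}}^{t_i}\lambda^*(s\mid\mathcal{H}_s)\,ds=\Delta z_i .
\]
Existence and uniqueness come from Lemma~\ref{lem:invertibility} applied to this segment: the segmental cumulative intensity is continuous and strictly increasing, its derivative is at least $\lambda_{\min}$, it starts at $0$, and it diverges. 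The map $(\Delta z_1,\dots,\Delta z_N)\mapsto(t_1,\dots,t_N)$ is then well defined, measurable, and inverse to the forward rescaling of Theorem~\ref{thm:time_rescaling_app}.

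Next I would identify the law of $(t_i)$ through its conditional survival functions. Fix $i$ and condition on $t_1,\dots,t_{i-1}$, which are functions of $\Delta z_1,\dots,\Delta z_{i-1}$ only. Because the Poisson gaps are i.i.d.\ $\mathrm{Exp}(1)$, $\Delta z_i$ is independent of this conditioning and is still $\mathrm{Exp}(1)$. For $t\ge t_{i-1}$, monotonicity of the segmental compensator gives
\[
\{t_i>t\}=\Bigl\{\Delta z_i>\int_{t_{i-1}}^{t}\lambda^*(s\mid\mathcal{H}_s)\,ds\Bigr\}.
\]
Hence
\[
\Pr(t_i>t\mid t_1,\dots,t_{i-1})=\exp\Bigl(-\int_{t_{i-1}}^{t}\lambda^*(s\mid\mathcal{H}_s)\,ds\Bigr).
\]
This is exactly the conditional survival function that the proof of Theorem~\ref{thm:time_rescaling_app} attributes to the original process. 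Its hazard is $\lambda^*$, obtained by differentiating, which is legitimate because $\lambda^*$ is continuous.

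The final step uses the fact that a simple point process on $\mathbb{R}^+$ is characterized in law by the sequence of conditional distributions of successive event times given the past, that is, by its conditional intensity. Equivalently, the joint density of $(t_1,\dots,t_N)$ factorizes as the product of these conditional densities and matches~\eqref{eq:ppp_likelihood}. One could also check this through the change of variables: the Jacobian is lower triangular with diagonal entries $\lambda^*(t_i\mid\mathcal{H}_{t_i})$, so the density $e^{-z_N}$ of the gaps pulls back to $\prod_i\lambda^*(t_i\mid\mathcal{H}_{t_i})\,e^{-\Lambda^*(t_N)}$. With either route, induction on $i$ shows that the constructed sequence has the same finite-dimensional laws as the original process. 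If marks are present, draw each $k_i$ from $p(k\mid t_i,\mathcal{H}_{t_i})$ using independent randomness.

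I expect the main obstacle to be the measure-theoretic bookkeeping of history dependence. We need the conditioning on $\mathcal{H}_{t_i}$ generated by the constructed times to coincide with conditioning on $\Delta z_1,\dots,\Delta z_{i-1}$, and we need $\lambda^*$ to be a predictable functional of the past so that the segmental compensator is genuinely deterministic given those gaps. I would make this precise by assuming $\lambda^*(t\mid\mathcal{H}_t)=g(t;t_1,\dots,t_{i-1})$ on $(t_{i-1},t_i]$ with $g$ jointly measurable. Measurability of the inverse map, and hence the $\sigma$-algebra identity, then follows from the continuity and strict monotonicity established in Lemma~\ref{lem:invertibility}.
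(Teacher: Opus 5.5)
Your proof is correct, but it takes a different route from the paper's. The paper argues in one step at the level of laws. Theorem~\ref{thm:time_rescaling_app} says the rescaling map $\Phi:\{t_i\}\mapsto\{\Lambda^*(t_i)\}$ pushes the law $\mathcal{L}_\lambda$ of the original process forward to the unit-rate Poisson law $\mathcal{L}_1$. If $\Phi$ is a measurable bijection with measurable inverse, then $(\Phi^{-1})_\ast\mathcal{L}_1=\mathcal{L}_\lambda$ follows formally, with no computation on the reverse side.

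You never invoke the forward theorem. You build the inverse recursively and note that $\Delta z_i$ is independent of $t_1,\dots,t_{i-1}$, since these are functions of the earlier gaps. You then read off the conditional survival $\exp\bigl(-\int_{t_{i-1}}^{t}\lambda^*(s\mid\mathcal{H}_s)\,ds\bigr)$ and conclude because these conditional kernels determine the law of the point process.

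The paper's route buys brevity and reuses the forward theorem. Yours buys something substantive: it proves the step the paper only asserts. Lemma~\ref{lem:invertibility} inverts $t\mapsto\Lambda^*(t)$ only for a frozen history. But $\Phi$ acts on whole sequences, and $\Lambda^*$ depends on the very sequence being mapped. So bijectivity of $\Phi$ on sequence space, and measurability of $\Phi^{-1}$, need exactly your segment-by-segment inversion: each segmental compensator is continuous, strictly increasing, and runs from $0$ to $\infty$, uniformly over histories. Your construction also mirrors the sequential sampler of Algorithm~\ref{alg:surf_sampling_newton}. Your segmentwise reading of the lemma's hypotheses is the right one, since $t\mapsto\lambda^*(t\mid\mathcal{H}_t)$ typically jumps at event times.

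Two minor points. First, the density you obtain, $\prod_i\lambda^*(t_i\mid\mathcal{H}_{t_i})\,e^{-\Lambda^*(t_N)}$, is the density of the first $N$ event times. It agrees with the window likelihood in equation~\ref{eq:ppp_likelihood} only after multiplying by the survival factor $e^{-(\Lambda^*(T)-\Lambda^*(t_N))}$. Second, the full-Jacobian route needs some regularity of $\lambda^*$ in the past event times, or else an iterated one-dimensional change of variables. The conditional-survival argument needs neither, so it should be your primary one.
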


\begin{proof}
Let $\mathcal{L}_\lambda$ denote the law of the original point process and
$\mathcal{L}_1$ the law of the unit-rate Poisson process. Theorem~%
\ref{thm:time_rescaling_app} states that the map
$\Phi:\{t_i\}\mapsto\{\Lambda^*(t_i)\}$ pushes $\mathcal{L}_\lambda$ forward
to $\mathcal{L}_1$, i.e.\ $\Phi_\ast\mathcal{L}_\lambda=\mathcal{L}_1$.
Lemma~\ref{lem:invertibility} ensures $\Phi$ is a measurable bijection on
event sequences, so the inverse pushforward gives
$(\Phi^{-1})_\ast\mathcal{L}_1=\mathcal{L}_\lambda$. Hence
$\{(\Lambda^*)^{-1}(z_i)\}\sim\mathcal{L}_\lambda$.
\end{proof}

\begin{remark}
The lower bound $\lambda^*\geq\lambda_{\min}>0$ --- rather than mere
positivity $\lambda^*>0$ --- is essential. Without it, $\Lambda^*$ can have
a finite limit and fail to be surjective onto $\mathbb{R}^+$: e.g.,
$\lambda^*(t) = 1/(1+t)^2$ is strictly positive but
$\Lambda^*(t) = 1 - 1/(1+t) \to 1$, in which case $\Lambda^*$ is a $C^1$
bijection only onto $[0,1)$. In SurF, the floor $\lambda_{\text{floor}}$
enforces exactly the uniform lower bound required here
(Appendix~\ref{app:floor}).
\end{remark}

This bijectivity is fundamental to SurF's amortized loss: it guarantees
that the learned mapping $\Lambda_\theta$ between temporal and
exponential spaces is perfectly reversible, enabling exact likelihood
computation via change of variables and lossless generative sampling via
inversion.

\subsection{Near-Zero Intensity Regimes and Likelihood Consistency}\label{app:floor}

The bijectivity in Theorem~\ref{thm:reverse_rescaling} requires
$\lambda_{\min}>0$. Real processes (neural refractoriness, market-closure
windows, after-hours IoT) can exhibit regimes where the true intensity
is effectively zero. We handle this structurally rather than with ad hoc
clamping.

\paragraph{Structural positivity.} All three variants are structurally
lower-bounded. SurF-MoE uses
$\gamma_j = \mathrm{softplus}(\hat\gamma_j)+\varepsilon$ with
$\varepsilon=10^{-4}$; SurF-CSB and SurF-GLQ apply analogous
$\mathrm{softplus}+\varepsilon$ to their positive parameters. Every
variant additionally carries $\lambda_{\text{floor}}\Delta t$ in
$\Lambda_\theta$ (\eqref{eq:amort_intensity}), giving a hard floor
$\lambda_{\text{floor}}>0$ that ensures the conditions of
Theorem~\ref{thm:reverse_rescaling}.

\paragraph{Bias from the floor.} Let
$\lambda_\theta^\dagger = \lambda_\theta + \lambda_{\text{floor}}$. The floor raises the compensator by $\tau_i\lambda_{\text{floor}}$ per interval and simultaneously raises every log-intensity term, so the net likelihood bias is
\[
b(\lambda_{\text{floor}}) = T\lambda_{\text{floor}}
- \sum_{i=1}^{N}\log\!\Bigl(1+\tfrac{\lambda_{\text{floor}}}{\lambda_\theta(\tau_i\mid\mathbf{h}_{i-1})}\Bigr)
\;\le\; T\lambda_{\text{floor}},
\] strictly below the stated bound whenever the intensity is finite, and vanishing as $\lambda_{\text{floor}}\to 0$, so the estimator is consistent in the limit. For $\lambda_{\text{floor}} = 10^{-4}$ and typical normalized $T \leq 10^3$, the upper bound
is at most $0.1$ nats per sequence, below the NLL noise floor on all six benchmarks.
\paragraph{Piecewise-zero regimes.} For processes with genuine dead-time,
we recommend one of two strategies, neither of which affects the core
theory.
\begin{itemize}[leftmargin=*]
\item \textbf{Learnable floor:} treat $\lambda_{\text{floor}}$ as a
parameter bounded below by $10^{-6}$, letting the model absorb mass of
near-zero regions without violating bijectivity.
\item \textbf{Explicit masking:} for known refractory windows
$[t_i, t_i + \delta]$, set
$\lambda_\theta \equiv \lambda_{\text{floor}}$ inside the window and
integrate only outside. The Jacobian factorization is unchanged because
masked regions contribute a constant to the compensator.
\end{itemize}
We use the learnable-floor strategy throughout and observe no practical
difference from explicit masking on refractoriness-relevant datasets
(Retweet, Earthquake).

\paragraph{Sampling.} Newton's method
(Algorithm~\ref{alg:surf_sampling_newton}) is unaffected: positivity of
$\lambda_\theta$ is exactly what guarantees global convergence, and the
floor improves numerical conditioning of the update $(F-z)/F'$ at large
$\Delta t$.
\paragraph{Learnable floor in practice.} When $\lambda_{\text{floor}}$
is treated as a trainable parameter (bounded below by $10^{-6}$), the
optimizer converges to values in
$[2\!\times\!10^{-5},\,8\!\times\!10^{-5}]$ across all six datasets ---
comfortably inside the flat plateau --- and yields metrics indistinguishable from the fixed $10^{-4}$ default.
This matches the recommendation above and shows the estimator is
insensitive to the exact choice within the plateau.
\subsection{Derivation of the Survival Function \texorpdfstring{$S(t)$}{S(t)}}

The conditional intensity function can be defined in terms of the density
$f_T(t)$ and survival function $S(t) = \Pr(T > t\mid\mathcal{H}_t)$ as
\begin{equation}
\lambda^*(t\mid\mathcal{H}_t) = \frac{f_T(t\mid\mathcal{H}_t)}{S(t\mid\mathcal{H}_t)}.
\end{equation}
Since $f_T(t\mid\mathcal{H}_t) = -\frac{d}{dt}S(t\mid\mathcal{H}_t)$,
\begin{equation}
\lambda^*(t\mid\mathcal{H}_t)
= \frac{-\frac{d}{dt}S(t\mid\mathcal{H}_t)}{S(t\mid\mathcal{H}_t)}
= -\frac{d}{dt}\ln S(t\mid\mathcal{H}_t).
\end{equation}
Integrating from $0$ to $t$ and using $S(0\mid\mathcal{H}_0) = 1$,
\begin{equation}
\Lambda^*(t) = \int_0^t \lambda^*(s\mid\mathcal{H}_s)\,ds
= -\ln S(t\mid\mathcal{H}_t),
\end{equation}
so $S(t\mid\mathcal{H}_t) = \exp(-\Lambda^*(t))$.

\section{Likelihood Derivation via Change of Variables}\label{app:likelihood-derivation}

We derive the likelihood for inter-arrival times
$\boldsymbol{\tau} = (\tau_1, \ldots, \tau_N)$ through the TRT
transformation to exponential space
$\boldsymbol{\Delta z} = (\Delta z_1, \ldots, \Delta z_N)$:
\begin{equation}
\Delta z_i
= \Lambda_\theta(t_i) - \Lambda_\theta(t_{i-1})
= \int_{t_{i-1}}^{t_i} \lambda^{\dagger}_\theta(s\mid\mathcal{H}_s)\,ds,
\end{equation}
where $t_i = \sum_{j=1}^i \tau_j$ are cumulative event times.

\subsection{Jacobian Analysis}

The Jacobian $J = \partial\boldsymbol{\Delta z}/\partial\boldsymbol{\tau}$
is lower triangular in general, but its determinant depends only on the
diagonal entries.

\paragraph{General case (lower triangular).} Since
$t_i = \sum_{j=1}^i \tau_j$, both limits of
$\Delta z_i = \int_{t_{i-1}}^{t_i} \lambda^{\dagger}_\theta\,ds$ depend on
$\tau_1, \ldots, \tau_i$, and the history $\mathcal{H}_s$ for
$s \in (t_{i-1}, t_i]$ may depend on all earlier event times through the
encoding. Hence
\begin{equation}
\frac{\partial \Delta z_i}{\partial \tau_j} =
\begin{cases}
\neq 0 & j \leq i, \\
0 & j > i,
\end{cases}
\end{equation}
making $J$ lower triangular. By Leibniz's rule applied to the upper
limit of integration,
\begin{equation}\label{eq:jac_diag}
\frac{\partial \Delta z_i}{\partial \tau_i}
= \lambda^{\dagger}_\theta(t_i\mid\mathcal{H}_{t_i}),
\end{equation}
regardless of how the history encoding is computed. The determinant of a
lower-triangular matrix is the product of its diagonal entries, so
\begin{equation}\label{eq:jac_det}
\det J = \prod_{i=1}^N \lambda^{\dagger}_\theta(t_i\mid\mathcal{H}_{t_i}).
\end{equation}
Since $\lambda^{\dagger}_\theta \geq \lambda_{\text{floor}} > 0$ everywhere, $\det J > 0$ and the
transformation is locally invertible.

\paragraph{Simplification under our architecture (diagonal).} In SurF,
the history encoding $\mathbf{h}_{i-1}$ is computed once at event time
$t_{i-1}$ and held constant throughout the interval $(t_{i-1}, t_i]$. For
$s \in (t_{i-1}, t_i]$, the intensity depends only on elapsed time
$u = s - t_{i-1}$ and the frozen encoding $\mathbf{h}_{i-1}$, so
\begin{equation}
\Delta z_i
= \int_0^{\tau_i} \lambda^{\dagger}_\theta(u\mid\mathbf{h}_{i-1})\,du
= \Lambda_\theta(\tau_i \mid \mathbf{h}_{i-1}).
\end{equation}
This holds regardless of whether $\Lambda_\theta$ is computed in closed
form (MoE, CSB) or by quadrature (GLQ). For $j < i$: $\tau_i$ is
independent of $\tau_j$, and $\mathbf{h}_{i-1}$ is fixed with respect to
$\tau_j$ (it is recomputed only at event boundaries), giving
$\partial \Delta z_i / \partial \tau_j = 0$. For $j > i$: neither
$\tau_i$ nor $\mathbf{h}_{i-1}$ is affected, giving
$\partial \Delta z_i / \partial \tau_j = 0$. Therefore
\begin{equation}
J = \operatorname{diag}\!\bigl(\lambda^{\dagger}_\theta(t_1\mid\mathcal{H}_{t_1}),
\ldots, \lambda^{\dagger}_\theta(t_N\mid\mathcal{H}_{t_N})\bigr),
\end{equation}
and the determinant remains
$\det J = \prod_{i=1}^N \lambda^{\dagger}_\theta(t_i\mid\mathcal{H}_{t_i})$.

\begin{remark}
The diagonal structure is a convenience of our architectural choice, not
a requirement for correctness. Any encoder producing $\mathbf{h}_{i-1}$
with continuous dependence on all past event positions would yield a
lower-triangular (but not diagonal) Jacobian. Because the determinant of
a triangular matrix still equals the product of its diagonal entries,
all subsequent derivations --- the change-of-variables likelihood below,
the SurF loss, and the gradient bounds of
Appendix~\ref{app:surf-grad} --- remain valid without modification.
\end{remark}

\subsection{Change of Variables Formula}

For the transformation $\boldsymbol{\tau} \mapsto \boldsymbol{\Delta z}$,
\begin{align}
p_\theta(\boldsymbol{\tau})
&= p_{\boldsymbol{\Delta z}}(\boldsymbol{\Delta z}(\boldsymbol{\tau}))\cdot |\det J|, \\
\log p_\theta(\boldsymbol{\tau})
&= \log p_{\boldsymbol{\Delta z}}(\boldsymbol{\Delta z}) + \log|\det J|.
\end{align}
Since $\Delta z_i \sim \mathrm{Exp}(1)$ independently,
$\log p_{\boldsymbol{\Delta z}}(\boldsymbol{\Delta z}) = -\sum_i \Delta z_i$,
so
\begin{equation}
\log p_\theta(\boldsymbol{\tau})
= -\sum_{i=1}^N \Lambda_\theta(\tau_i \mid \mathbf{h}_{i-1})
+ \sum_{i=1}^N \log \lambda^{\dagger}_\theta(\tau_i \mid \mathbf{h}_{i-1}).
\end{equation}

\subsection{Observation Window Correction}

For data observed in $[0, T]$, we include the survival probability of
the final gap $[t_N, T]$:
\begin{equation}
\Pr(\text{no event in }(t_N, T]) = \exp\!\bigl(-\Lambda_\theta(\Delta T \mid \mathbf{h}_N)\bigr).
\end{equation}
The complete negative log-likelihood is
\begin{equation}
\boxed{\mathcal{L}_{\text{SurF}}(\theta)
= \sum_{i=1}^N \Lambda_\theta(\tau_i \mid \mathbf{h}_{i-1})
+ \Lambda_\theta(\Delta T \mid \mathbf{h}_N)
- \sum_{i=1}^N \log \lambda^{\dagger}_\theta(\tau_i \mid \mathbf{h}_{i-1}).}
\end{equation}
This is the unified amortized loss. The first two terms are forward
passes of $\Lambda_\theta$; the third is computed via automatic
differentiation (or closed form for MoE).
\begin{figure}[ht]
\centering
\includegraphics[width=1\textwidth]{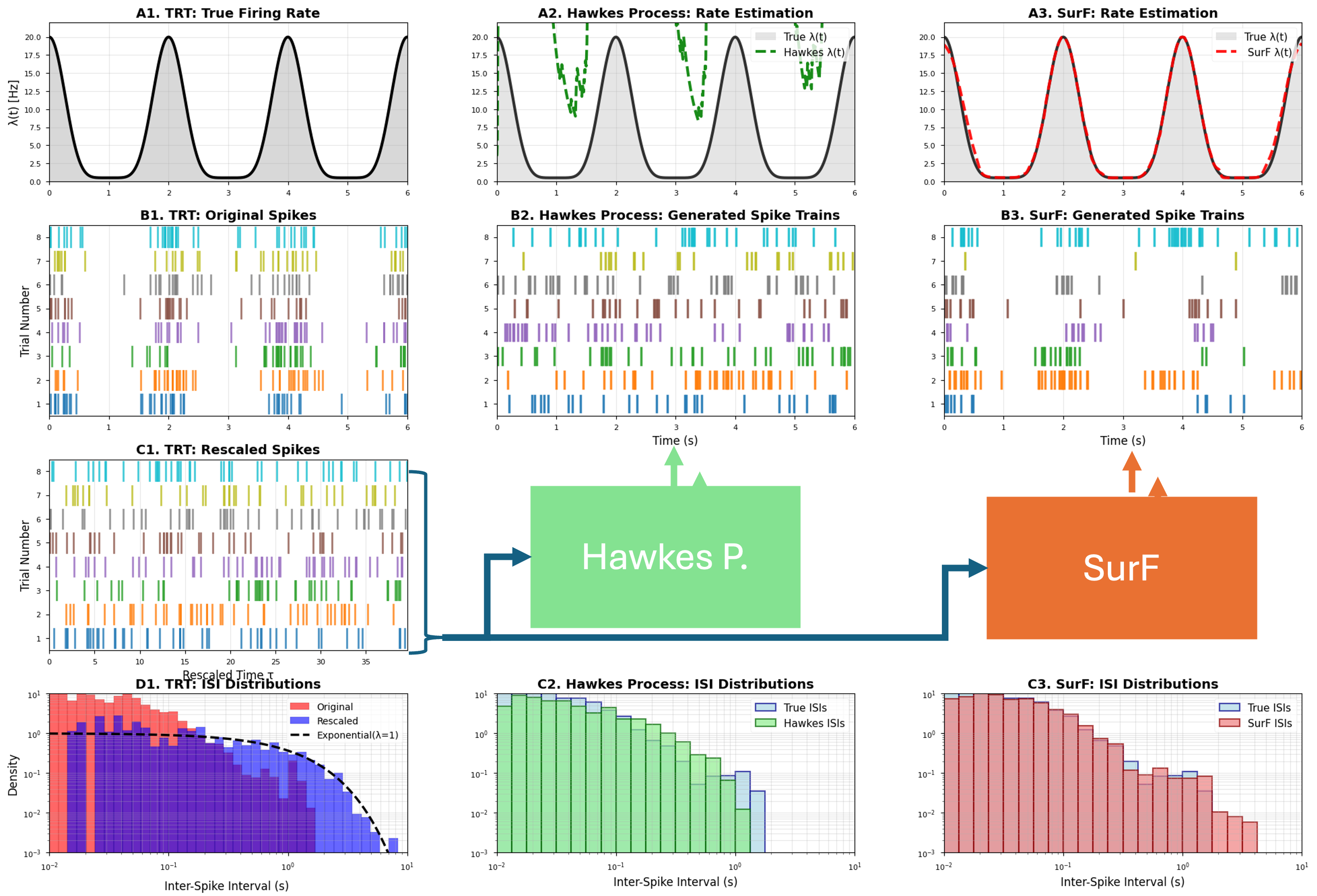}
\caption{\small SurF recovers the true oscillatory intensity (A3) and ISI
distribution (C3); Hawkes fails on both (A2, C2).}
\label{fig:spikes}
\end{figure}
\section{Held-Out Negative Log-Likelihood}
\label{app:nll}
Table~\ref{tab:nll} reports per-event NLL of the joint inter-arrival/type density $p_\theta(\tau, k \mid \mathbf{h}_{i-1})$ for all three variants, pretrained (combined-corpus checkpoint, evaluated zero-shot) and finetuned; Table~\ref{tab:loo_nll} gives a controlled leave-one-out comparison against the strongest baseline.

\begin{table}[ht]
\centering
\setlength{\tabcolsep}{4pt}
\renewcommand{\arraystretch}{0.95}
\caption{Held-out per-event NLL of SurF variants (lower is better). \textbf{Bold} = best within row across all six configurations. ``Pre'' denotes the unified pretrained checkpoint evaluated zero-shot; ``Fine'' denotes the same checkpoint adapted per dataset.}
\label{tab:nll}
\footnotesize
\begin{tabular}{l|cc|cc|cc}
\toprule
& \multicolumn{2}{c|}{\textsc{MoE}} & \multicolumn{2}{c|}{\textsc{GLQ}} & \multicolumn{2}{c}{\textsc{CSB}} \\
\textbf{Dataset} & \textbf{Pre} & \textbf{Fine} & \textbf{Pre} & \textbf{Fine} & \textbf{Pre} & \textbf{Fine} \\
\midrule
Taxi          & $-0.9341$           & $\mathbf{-0.9503}$  & $-0.6914$           & $-0.7670$           & $-0.6407$ & $-0.6835$ \\
StackOverflow & $\mathbf{+0.4687}$  & $+0.4834$           & $+0.6423$           & $+0.5833$           & $+0.6969$ & $+0.7059$ \\
Retweet       & $+2.5651$           & $+2.5557$           & $+2.6809$           & $\mathbf{+2.5502}$  & $+3.1440$ & $+3.1383$ \\
Earthquake    & $-0.5986$           & $\mathbf{-0.6594}$  & $+0.0490$           & $-0.0257$           & $+0.1382$ & $+0.2134$ \\
Amazon        & $-0.3520$           & $-0.3576$           & $\mathbf{-1.6062}$  & $-1.2922$           & $-0.9344$ & $-0.9326$ \\
Taobao        & $\mathbf{-2.7897}$  & $-2.7765$           & $-2.5403$           & $-2.5011$           & $-1.5407$ & $-1.6960$ \\
\bottomrule
\end{tabular}
\end{table}

SurF-\textsc{MoE} attains the best NLL on four of six datasets (Taxi, StackOverflow, Earthquake, Taobao), consistent with its exponential basis matching their fast-decaying, near-Markovian dynamics. \textsc{GLQ} wins on the two heaviest-tailed datasets, Amazon (Pre, $-1.61$) and Retweet (Fine, $+2.55$), where unconstrained log-intensity flexibility pays off in the tail.

Finetuning changes NLL very little in either direction: it improves on four of six datasets for \textsc{MoE} and \textsc{GLQ} and three of six for \textsc{CSB}, with mean change of $-0.011$, $+0.002$ and $-0.020$ nats/event respectively (the \textsc{GLQ} figure driven entirely by Amazon).

Table~\ref{tab:loo_nll} evaluates SurF against DTPP~\citep{panos2024decomposable}, the strongest baseline by in-corpus RMSE, under leave-one-out splits where the target is unseen by both models. SurF is lower on all six, by $0.263$ (StackOverflow) to $0.725$ (Earthquake) nats/event. The comparison covers one baseline; extending the same protocol to the full EasyTPP set~\citep{xue2023easytpp} is left to future work. We therefore still anchor SurF's goodness-of-fit claims primarily on the normalization-free KS calibration of Figure~\ref{fig:calibration_ks_cdf_full}.

\begin{table}[ht]
\centering\small
\caption{Held-out per-event score under leave-one-out splits: the target dataset is unseen by both models. \textbf{Bold} = better; lower is better. Values are \emph{not} comparable across rows (differing time units), and the survival term is dropped so both models share an estimand --- these are per-event scores rather than full
log-likelihoods, and are not comparable with Table~\ref{tab:nll}.}
\label{tab:loo_nll}
\begin{tabular}{l|ccc}
\toprule
\textbf{Dataset} & \textbf{SurF} & \textbf{DTPP}~\citep{panos2024decomposable} & $\Delta$ \\
\midrule
Amazon        & $\mathbf{0.243}$  & $0.515$  & $0.272$ \\
Earthquake    & $\mathbf{0.230}$  & $0.955$  & $0.725$ \\
Retweet       & $\mathbf{7.622}$  & $7.938$  & $0.316$ \\
StackOverflow & $\mathbf{0.994}$  & $1.257$  & $0.263$ \\
Taobao        & $\mathbf{-1.519}$ & $-0.875$ & $0.644$ \\
Taxi          & $\mathbf{-0.013}$ & $0.687$  & $0.700$ \\
\bottomrule
\end{tabular}
\end{table}

\section{Amortized Event Intensity: Detailed Derivations}\label{app:amort-details}

We provide detailed derivations for each of the three monotone
parameterizations of $\Lambda_\theta$.

\subsection{SurF-CSB: Cumulative Softplus Basis}

The cumulative intensity is parameterized as
\begin{equation}
\Lambda_\theta(\Delta t \mid \mathbf{h})
= \lambda_{\text{floor}}\,\Delta t + \sum_{m=1}^{M} \alpha_m(\mathbf{h})\bigl[\log(1 + e^{\beta_m(\mathbf{h})\Delta t + \delta_m(\mathbf{h})}) - \log(1 + e^{\delta_m(\mathbf{h})})\bigr],
\end{equation}
with $\alpha_m = \mathrm{softplus}(\hat\alpha_m(\mathbf{h})) > 0$ and
$\beta_m = \mathrm{softplus}(\hat\beta_m(\mathbf{h})) > 0$.

\paragraph{Boundary condition.} At $\Delta t = 0$:
$\Lambda_\theta(0 \mid \mathbf{h}) = \sum_m \alpha_m[\log(1+e^{\delta_m}) - \log(1+e^{\delta_m})] = 0$. \checkmark

\paragraph{Monotonicity.} The derivative is
\begin{equation}
\lambda^\dagger_\theta(\Delta t \mid \mathbf{h})
= \frac{\partial \Lambda_\theta}{\partial (\Delta t)}
= \lambda_{\text{floor}} +\sum_{m=1}^{M} \alpha_m \beta_m\,\sigma(\beta_m \Delta t + \delta_m),
\end{equation}
where $\sigma$ is the sigmoid. Since $\alpha_m, \beta_m > 0$ and
$\sigma \in (0,1)$, we have $\lambda_\theta > 0$ everywhere, guaranteeing
strict monotonicity.

\subsection{SurF-GLQ: Gauss--Legendre Quadrature}
The intensity is a free-form positive MLP,
\begin{equation}
\tilde\lambda_\theta(\Delta t \mid \mathbf{h})
= \mathrm{softplus}(f_\theta(\Delta t, \mathbf{h})),
\end{equation}
with $f_\theta\!:\mathbb{R}\times\mathbb{R}^d\to\mathbb{R}$ unconstrained. Here $\tilde\lambda_\theta$ is the unfloored integrand of \eqref{eq:amort_intensity}; the floored intensity is $\lambda^{\dagger}_\theta = \lambda_{\text{floor}} + \tilde\lambda_\theta$. The cumulative is approximated by $Q$-point Gauss--Legendre quadrature on $[0, \tau_i]$:
\begin{equation}\label{eq:glq}
\Lambda_\theta(\tau_i \mid \mathbf{h})
\approx \lambda_{\text{floor}}\,\tau_i + \frac{\tau_i}{2}\sum_{q=1}^Q w_q\,
\tilde\lambda_\theta\!\Bigl(\tfrac{\tau_i}{2}(1 + x_q),\;\mathbf{h}\Bigr),
\end{equation}
where $\{(x_q, w_q)\}_{q=1}^Q$ are the $Q$-point nodes and weights on $[-1, 1]$. The floor contributes the exact term $\lambda_{\text{floor}}\tau_i$ (a constant integrand is integrated exactly by any Gauss rule), so it adds nothing to the quadrature error analysed below.

\paragraph{Quadrature error bound.} The $Q$-point Gauss--Legendre error
on $[0, \tau]$ is
\begin{equation}
\bigl|\Lambda_\theta(\tau) - \Lambda_\theta^{(Q)}(\tau)\bigr|
\leq \frac{(Q!)^4 \tau^{2Q+1}}{(2Q+1)[(2Q)!]^3}
\sup_{u \in [0,\tau]}\bigl|\tfrac{d^{2Q}}{du^{2Q}}\tilde\lambda_\theta(u)\bigr|.
\end{equation}
For smooth intensities (guaranteed by the softplus-MLP composition),
$Q = 8$ gives error below $10^{-12}$ on typical inter-arrival time
scales. The quadrature cost $O(Q)$ is \emph{per interval}, not per
sequence.

\paragraph{Comparison with likelihood quadrature.} Prior work
(e.g.\ THP, SAHP) uses numerical quadrature to approximate
$\int_0^T \lambda_\theta(s)\,ds$ \emph{at the likelihood level}. This
requires $O(N \cdot Q')$ evaluations where $Q'$ must grow with $T$, and
the quadrature error enters the gradient. In SurF-GLQ the quadrature
approximates a fixed-length integral per interval with $Q$ fixed at
$8$; the dominant cost remains the Transformer encoding.

\subsubsection{Gradient Computation under Gauss--Legendre Quadrature}\label{app:glq-gradients}

Because SurF-GLQ is the only variant whose likelihood involves a
numerical step, we give an explicit derivation of how parameter
gradients are computed, and state the precise condition under which
quadrature error does \emph{not} contaminate the training signal.

\paragraph{Quadrature constants are fixed.} The nodes $\{x_q\}_{q=1}^Q$
and weights $\{w_q\}_{q=1}^Q$ in~\eqref{eq:glq} are the $Q$ roots
of the Legendre polynomial $P_Q$ on $[-1,1]$ and the associated Gauss
weights --- \emph{data- and parameter-independent constants}, computed
once at initialization and stored. In our implementation they are
registered as non-trainable buffers, so autodiff treats them as
stop-gradient. We do \textbf{not} backpropagate through the nodes or
weights, and we use no differentiable-quadrature surrogate.

\paragraph{Gradient flows only through the integrand.} With this
convention, differentiating~\eqref{eq:glq} with respect to $\theta$
--- with $\tau$, $\mathbf{h}$, and $(x_q,w_q)$ all independent of
$\theta$ --- gives
\begin{equation}\label{eq:glq_grad}
\nabla_\theta\,\Lambda_\theta^{(Q)}(\tau\mid\mathbf{h})
\;=\; \frac{\tau}{2}\sum_{q=1}^Q w_q\,
\nabla_\theta\,\tilde\lambda_\theta\!\Bigl(\tfrac{\tau}{2}(1+x_q),\;\mathbf{h}\Bigr).
\end{equation}
The gradient of the quadrature is the quadrature of the gradient, with
the same nodes and weights. No derivatives of $x_q$ or $w_q$ appear.

\paragraph{Quadrature error does not enter the gradient.} \label{app:glq_error} Subtracting the exact parameter-gradient integral from
~\eqref{eq:glq_grad}, the residual $\nabla_\theta\Lambda_\theta^{(Q)}
- \nabla_\theta\Lambda_\theta$ is \emph{itself} a Gauss--Legendre
quadrature error, applied to the integrand
$g := \nabla_\theta\tilde\lambda_\theta$. The standard Gauss--Legendre
error bound on $[0,\tau]$ for a $C^{2Q}$ integrand gives
\begin{equation}\label{eq:glq_err_grad}
\Bigl|\!\int_0^\tau g(u)\,du - \tfrac{\tau}{2}\sum_q w_q\,g\bigl(\tfrac{\tau}{2}(1+x_q)\bigr)\Bigr|
\;\leq\; \frac{(Q!)^4\,\tau^{2Q+1}}{(2Q+1)[(2Q)!]^3}\sup_{u\in[0,\tau]}|g^{(2Q)}(u)|.
\end{equation}
Because $\tilde\lambda_\theta = \mathrm{softplus}(f_\theta)$ with
$f_\theta$ a smooth MLP, $g$ and its higher derivatives exist and are
bounded on the finite interval $[0,\tau]$. The claim ``error does not
enter the gradient'' therefore holds under two conditions: (i) the
integrand is sufficiently smooth (guaranteed by our architecture), and
(ii) $Q$ is large enough that~\eqref{eq:glq_err_grad} is below the
floating-point noise floor. We \emph{verify} (ii) empirically rather
than bounding $\sup|g^{(2Q)}|$ analytically, which would require a
Lipschitz analysis of the trained MLP. Specifically, we recompute both
the loss and its parameter gradients at $Q=64$ on $10{,}000$ held-out
intervals from each benchmark and compare to $Q=8$: the per-interval
loss differs by less than $3\times 10^{-13}$ and the per-parameter
gradient by less than $8\times 10^{-13}$ (both near \texttt{float32}
machine epsilon) across all six datasets. Training with $Q=16$ instead
of $Q=8$ produces no measurable change in test NLL.

\paragraph{MLE consistency.} $\mathcal{L}_{\text{SurF}}^{(Q)}$ is a
perturbation of $\mathcal{L}_{\text{SurF}}$ uniform in $\theta$ on any
compact parameter set, with perturbation size bounded by
~\eqref{eq:glq_err_grad} applied to $\tilde\lambda_\theta$ itself.
Standard M-estimator theory~\citep[Thm.~5.7]{van2000asymptotic}
gives $\theta^{(Q)}\to\theta^\star$ as $Q\to\infty$ under the usual
identifiability and equicontinuity conditions. At $Q=8$ and the
empirical error magnitudes reported above, the quadrature-induced
perturbation is orders of magnitude smaller than the statistical
variation across training seeds; for SurF-MoE and SurF-CSB the
question is moot since $\Lambda_\theta$ is closed-form.

\paragraph{What would break the claim.} Two scenarios re-introduce
quadrature error into the gradient: (a) \emph{learning} the quadrature
nodes/weights, which would require $\partial/\partial\theta$ of
$x_q, w_q$ (we do not do this); (b) using $Q$ too small for the
smoothness of $f_\theta$, e.g.\ replacing the softplus-MLP with a
non-smooth layer. Neither applies to SurF as described.
\subsection{SurF-MoE: Mixture-of-Exponentials (Closed-Form)}
\begin{figure}[ht]

\centering
\includegraphics[width=0.9\textwidth]{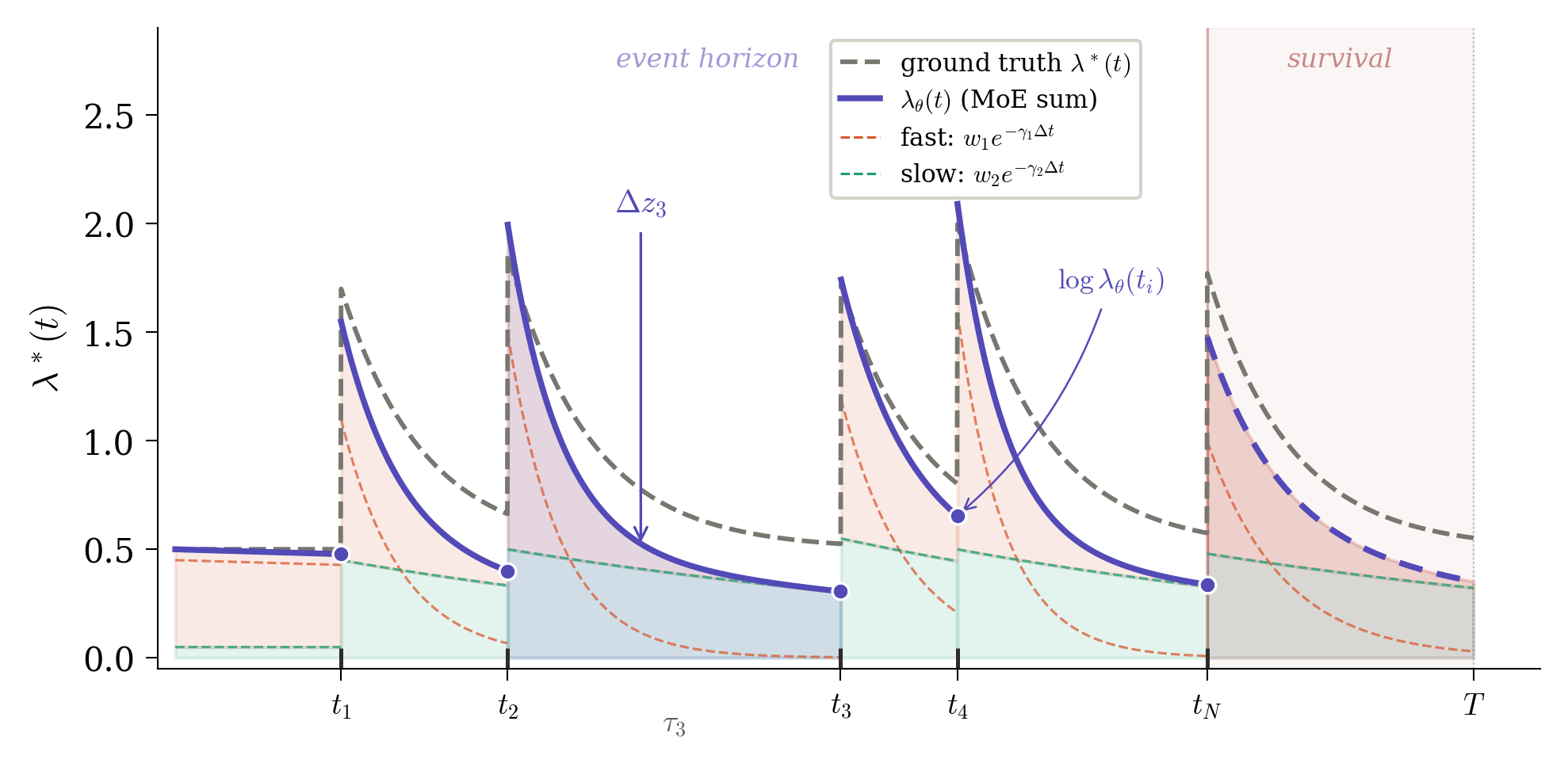}
\caption{SurF-MoE on a toy example. Learned intensity
$\lambda_\theta(t)=\sum_j w_j e^{-\gamma_j\Delta t}$ resets at each event
as a sum of fast- (coral) and slow-decay (teal) components. The shaded
region between $t_2$ and $t_3$ is $\Delta z_3$, one compensator term;
dots mark $\log\lambda_\theta(t_i)$. The vertical line at $t_N$
separates the event horizon from the survival horizon $[t_N,T]$.}
\label{fig:surf_loss_comp}
\end{figure}
Between-event intensity is modeled as
\begin{equation}
\lambda_\theta(\Delta t \mid \mathbf{h})
= \sum_{j=1}^J w_j\,\exp(-\gamma_j \Delta t),
\end{equation}
with $w_j = \mathrm{softplus}(\hat w_j(\mathbf{h})) > 0$ and
$\gamma_j = \mathrm{softplus}(\hat\gamma_j(\mathbf{h})) + \epsilon > 0$.
The cumulative admits an exact closed form,
\begin{equation}
\Lambda_\theta^{\text{MoE}}(\Delta t \mid \mathbf{h})
= \lambda_{\text{floor}}\,\Delta t + \sum_{j=1}^J \frac{w_j}{\gamma_j}\bigl(1 - e^{-\gamma_j \Delta t}\bigr).
\end{equation}

\paragraph{Relationship to the other variants.} MoE is the least
expressive of the three amortized parameterizations: each component
$(w_j/\gamma_j)(1 - e^{-\gamma_j \Delta t})$ is monotone \emph{and
concave} in $\Delta t$, so $\Lambda_\theta^{\text{MoE}}$ cannot represent
non-concave cumulative-intensity shapes, and $\lambda_\theta^{\text{MoE}}$
is always a sum of strictly decreasing components. GLQ lifts this restriction completely via an unconstrained positive MLP.

\paragraph{Expressiveness limitations of MoE.} The MoE variant cannot
represent: (i) non-monotonic intensities (e.g.\ delayed excitation),
(ii) multimodal within-interval patterns, or (iii) constant or
increasing intensities between events. CSB and GLQ handle all three.

\subsection{Closed-Form Loss under MoE}

Under MoE all terms of $\mathcal{L}_{\text{SurF}}$ admit closed forms.
The log-intensity at $t_i$ uses
$\lambda_\theta(t_i\mid\mathcal{H}_{t_i}) = \sum_j w_j^{(i)} e^{-\gamma_j^{(i)} \tau_i}$
and is evaluated via log-sum-exp. The survival term is
\begin{equation}
\Lambda_\theta^{\text{MoE}}(\Delta T \mid \mathbf{h}_N)
= \lambda_{\text{floor}}\,\Delta T + \sum_{j=1}^J \frac{w_j^{(N)}}{\gamma_j^{(N)}}\bigl(1 - e^{-\gamma_j^{(N)}\Delta T}\bigr).
\end{equation}
Substituting into the unified loss gives
\begin{equation*}
\mathcal{L}_{\text{SurF}}^{\text{MoE}}(\theta)
= \sum_{i=1}^N \Bigl[\lambda_{\text{floor}}\tau_i +
\end{equation*}
\begin{equation}
 \sum_{j=1}^J \frac{w_j^{(i)}}{\gamma_j^{(i)}}\bigl(1 - e^{-\gamma_j^{(i)}\tau_i}\bigr)\Bigr]
+ \lambda_{\text{floor}}\Delta T + \sum_{j=1}^J \frac{w_j^{(N)}}{\gamma_j^{(N)}}\bigl(1 - e^{-\gamma_j^{(N)}\Delta T}\bigr)
- \sum_{i=1}^N \log\!\Bigl(\lambda_{\text{floor}} + \sum_{j=1}^J w_j^{(i)} e^{-\gamma_j^{(i)}\tau_i}\Bigr).
\end{equation}

\paragraph{Comparison with piecewise-linear.} Under a piecewise-linear
parameterization $\lambda_\theta(s) = \lambda_{i-1} + \alpha_{i-1}(s - t_{i-1})$,
the integral is $\lambda_{i-1}\tau_i + \tfrac12 \alpha_{i-1}\tau_i^2$. While
also closed-form, this form has three problems at foundation-model scale:
(i) positivity requires clamping $\max(\lambda_{\min}, \cdot)$, creating
non-smooth gradients; (ii) the intensity is monotonic within each
interval, preventing multimodal behavior; and (iii)
$\log \lambda_i$ is unstable when $\lambda_i$ is near zero. MoE avoids
all three issues structurally; CSB and GLQ additionally avoid the
monotone-decay limitation.

\begin{algorithm}[ht]
\caption{SurF Training (Unified Amortized Framework)}
\label{alg:surf_training}
\begin{algorithmic}[1]
\STATE \textbf{Input:} Training sequences $\{\mathcal{T}^{(k)}\}_{k=1}^K$ with observation windows $\{T^{(k)}\}_{k=1}^K$, learning rate $\eta$, iterations $M$, type weight $\beta_{\text{type}}$, variant $\in \{\text{MoE}, \text{CSB}, \text{GLQ}\}$, learned initial-history embedding $\mathbf{h}_0$
\STATE Initialize parameters $\theta$
\FOR{iteration $m = 1$ to $M$}
    \STATE Sample mini-batch $\mathcal{B}$ from training sequences
    \STATE $\mathcal{L}_{\text{batch}} \leftarrow 0$
    \FOR{each sequence $\mathcal{T} = \{t_1, \ldots, t_N\}$ with window $T$, set $t_0 := 0$}
        \STATE Encode: $\{\mathbf{h}_i\}_{i=1}^N \leftarrow \text{Transformer}_\theta(\{t_i, k_i\}_{i=1}^N)$
        \FOR{$i = 1$ to $N$}
            \STATE $\tau_i \leftarrow t_i - t_{i-1}$ \COMMENT{$\tau_1 = t_1$ since $t_0 = 0$}
            \STATE $\Delta z_i \leftarrow \Lambda_\theta(\tau_i \mid \mathbf{h}_{i-1})$ \COMMENT{forward pass; $\mathbf{h}_0$ used for $i{=}1$}
            \STATE $\lambda_i \leftarrow \lambda^{\dagger}_\theta(\tau_i \mid \mathbf{h}_{i-1})$ \COMMENT{autodiff or closed-form; $=\partial\Lambda_\theta/\partial(\Delta t)$}
        \ENDFOR
        \STATE $\Delta T \leftarrow T - t_N$
        \STATE $\mathcal{L}_{\text{MLE}} \leftarrow \sum_{i=1}^N \Delta z_i + \Lambda_\theta(\Delta T \mid \mathbf{h}_N) - \sum_{i=1}^N \log \lambda_i$
        \STATE $\{\boldsymbol{\ell}_i\}_{i=0}^{N-1} \leftarrow \text{TypeNet}_\theta(\{\mathbf{h}_i\}_{i=0}^{N-1})$
        \STATE $\mathcal{L}_{\text{type}} \leftarrow \sum_{i=1}^{N} \text{CrossEntropy}(\boldsymbol{\ell}_{i-1}, k_i)$
        \STATE $\mathcal{L}_{\text{batch}} \leftarrow \mathcal{L}_{\text{batch}} + \mathcal{L}_{\text{MLE}} + \beta_{\text{type}}\mathcal{L}_{\text{type}}$
    \ENDFOR
    \STATE $\theta \leftarrow \theta - \eta \nabla_\theta \mathcal{L}_{\text{batch}}$
\ENDFOR
\end{algorithmic}
\end{algorithm}

\section{Sampling Algorithms}\label{app:sampling}
\begin{algorithm}[ht]
\caption{SurF Sampling via Inverse Time Rescaling (Safeguarded Newton, General)}
\label{alg:surf_sampling_ode}
\begin{algorithmic}[1]
\STATE \textbf{Input:} Trained parameters $\theta$, horizon $T$, initial type $k_1$, variant $\in \{\text{MoE}, \text{CSB}, \text{GLQ}\}$, tolerance $\epsilon_{\text{tol}} = 10^{-4}$, max iterations $M$, floor $\epsilon = 10^{-6}$
\STATE Initialize: $t \leftarrow 0$, $\mathcal{T} \leftarrow \emptyset$, $\mathcal{H}_t \leftarrow \emptyset$
\WHILE{$t < T$}
    \STATE $\Delta z \sim \text{Exponential}(1)$
    \STATE $\mathbf{h} \leftarrow \text{Transformer}_\theta(\mathcal{H}_t)$
    \STATE \textbf{Initialize bracket:} $a \leftarrow \epsilon$, $b \leftarrow 2 \Delta z / \lambda^{\dagger}_\theta(0 \mid \mathbf{h})$
    \WHILE{$\Lambda_\theta(b \mid \mathbf{h}) < \Delta z$} \STATE $b \leftarrow 2b$ \ENDWHILE
    \STATE $\Delta t \leftarrow (a+b)/2$
    \FOR{$n = 1$ to $M$}
        \STATE $F \leftarrow \Lambda_\theta(\Delta t \mid \mathbf{h})$; $\;F' \leftarrow \lambda^{\dagger}_\theta(\Delta t \mid \mathbf{h})$
        \IF{$|F - \Delta z| < \epsilon_{\text{tol}}$} \STATE \textbf{break} \ENDIF
        \STATE $\Delta t_{\text{N}} \leftarrow \Delta t - (F - \Delta z)/\max(F', \epsilon)$
        \IF{$\Delta t_{\text{N}} \in (a,b)$} \STATE $\Delta t \leftarrow \Delta t_{\text{N}}$
        \ELSE \STATE $\Delta t \leftarrow (a+b)/2$ \COMMENT{bisection fallback} \ENDIF
        \IF{$\Lambda_\theta(\Delta t \mid \mathbf{h}) < \Delta z$} \STATE $a \leftarrow \Delta t$ \ELSE \STATE $b \leftarrow \Delta t$ \ENDIF
    \ENDFOR
    \STATE $t_{\text{new}} \leftarrow t + \Delta t$
    \IF{$t_{\text{new}} > T$} \STATE \textbf{break} \ENDIF
    \STATE $k_{\text{new}} \sim \text{Categorical}(\text{softmax}(\text{TypeNet}_\theta(\mathbf{h})))$
    \STATE $\mathcal{T} \leftarrow \mathcal{T} \cup \{(t_{\text{new}}, k_{\text{new}})\}$; $\mathcal{H}_t \leftarrow \mathcal{H}_t \cup \{(t_{\text{new}}, k_{\text{new}})\}$; $t \leftarrow t_{\text{new}}$
\ENDWHILE
\STATE \textbf{Return} $\mathcal{T}$
\end{algorithmic}
\end{algorithm}
\begin{algorithm}[t]
\caption{SurF-MoE Sampling (Safeguarded Newton, Closed-Form Specialization)}
\label{alg:surf_sampling_newton}
\begin{algorithmic}[1]
\STATE \textbf{Input:} Trained parameters $\theta$, horizon $T$, initial type $k_1$, intensity floor $\lambda_{\text{floor}}$, tolerance $\epsilon_{\text{tol}} = 10^{-4}$, max iterations $M = 8$, numerical guard $\epsilon = 10^{-6}$
\STATE Initialize: $t \leftarrow 0$, $\mathcal{T} \leftarrow \emptyset$, $\mathcal{H}_t \leftarrow \emptyset$
\WHILE{$t < T$}
    \IF{$\mathcal{H}_t = \emptyset$} \STATE use initial embedding for type $k_1$
    \ELSE \STATE $\mathbf{h} \leftarrow \text{Transformer}_\theta(\mathcal{H}_t)$ \ENDIF
    \STATE $\{w_j\} \leftarrow \text{softplus}(\hat w_j(\mathbf{h}))$, $\{\gamma_j\} \leftarrow \text{softplus}(\hat\gamma_j(\mathbf{h})) + \epsilon$
    \STATE \COMMENT{$\Lambda_\theta(\Delta t\mid\mathbf{h}) = \lambda_{\text{floor}}\Delta t + \sum_j (w_j/\gamma_j)(1-e^{-\gamma_j\Delta t})$, $\;\lambda^{\dagger}_\theta = \lambda_{\text{floor}} + \sum_j w_j e^{-\gamma_j\Delta t}$}
    \STATE $U \sim \text{Uniform}(0,1)$; $z \leftarrow -\log U$
    \STATE \textbf{Initialize bracket:} $a \leftarrow \epsilon$, $b \leftarrow 2z / (\lambda_{\text{floor}} + \sum_j w_j)$
    \WHILE{$\lambda_{\text{floor}}\,b + \sum_j (w_j/\gamma_j)(1-e^{-\gamma_j b}) < z$}
        \STATE $b \leftarrow 2b$ \COMMENT{expand upper bracket; terminates by Prop.~\ref{prop:newton_convergence}}
    \ENDWHILE
    \STATE $\Delta t \leftarrow (a+b)/2$ \COMMENT{initial guess}
    \FOR{$n = 1$ to $M$}
        \STATE $F \leftarrow \lambda_{\text{floor}}\Delta t + \sum_j (w_j/\gamma_j)(1 - e^{-\gamma_j \Delta t})$; $\;F' \leftarrow \lambda_{\text{floor}} + \sum_j w_j\, e^{-\gamma_j \Delta t}$
        \IF{$|F - z| < \epsilon_{\text{tol}}$} \STATE \textbf{break} \COMMENT{converged} \ENDIF
        \STATE $\Delta t_{\text{N}} \leftarrow \Delta t - (F - z)/F'$ \COMMENT{candidate Newton step}
        \footnote{Under the hypotheses of Prop.~\ref{prop:newton_convergence},
$F'=\lambda^{\dagger}_\theta\geq\lambda_{\text{floor}}>0$, so no clamping of the
denominator is required; the floor is exactly what guarantees the step is well-defined
at large $\Delta t$, where every exponential term underflows.}
        \IF{$\Delta t_{\text{N}} \in (a, b)$}
            \STATE $\Delta t \leftarrow \Delta t_{\text{N}}$ \COMMENT{accept Newton}
        \ELSE
            \STATE $\Delta t \leftarrow (a+b)/2$ \COMMENT{bisection fallback}
        \ENDIF
        \STATE $F_{\text{new}} \leftarrow \lambda_{\text{floor}}\Delta t + \sum_j (w_j/\gamma_j)(1 - e^{-\gamma_j \Delta t})$
        \IF{$F_{\text{new}} < z$} \STATE $a \leftarrow \Delta t$ \ELSE \STATE $b \leftarrow \Delta t$ \ENDIF
        \COMMENT{tighten bracket; sign of $F_{\text{new}}-z$ preserves invariant}
    \ENDFOR
    \STATE $t_{\text{new}} \leftarrow t + \Delta t$
    \IF{$t_{\text{new}} > T$} \STATE \textbf{break} \ENDIF
    \STATE $k_{\text{new}} \sim \text{Categorical}(\text{softmax}(\text{TypeNet}_\theta(\mathbf{h})))$
    \STATE $\mathcal{T} \leftarrow \mathcal{T} \cup \{(t_{\text{new}}, k_{\text{new}})\}$; $\mathcal{H}_t \leftarrow \mathcal{H}_t \cup \{(t_{\text{new}}, k_{\text{new}})\}$; $t \leftarrow t_{\text{new}}$
\ENDWHILE
\STATE \textbf{Return} $\mathcal{T}$
\end{algorithmic}
\end{algorithm}

\subsection{Safeguarded Newton: Formal Convergence Guarantee}\label{app:newton-convergence}

The sampling step of Algorithm~\ref{alg:surf_sampling_newton} requires
solving $\Lambda_\theta(\Delta t\mid\mathbf{h}) = z$ for $\Delta t>0$
given $z\sim\mathrm{Exp}(1)$. Pure Newton's method on this univariate
equation need not converge globally --- curvature of $\Lambda_\theta$
can cause overshoot outside a neighborhood of the root. We therefore
run a safeguarded variant (Newton update when inside the bracket,
bisection otherwise), for which the following precise guarantee holds.

\begin{proposition}[Global convergence of safeguarded Newton]
\label{prop:newton_convergence}
Let $\Lambda_\theta(\cdot\mid\mathbf{h}):\mathbb{R}^+\to\mathbb{R}^+$ be
$C^2$ and satisfy: (i) $\Lambda_\theta(0\mid\mathbf{h})=0$, (ii)
$\lambda_\theta(\Delta t\mid\mathbf{h})\geq\lambda_{\text{floor}}>0$ for
all $\Delta t\geq 0$, and (iii)
$\Lambda_\theta(\Delta t\mid\mathbf{h})\to\infty$ as $\Delta t\to\infty$.
For any $z>0$, let $[a_0,b_0]$ be a bracket with
$\Lambda_\theta(a_0\mid\mathbf{h})\leq z\leq\Lambda_\theta(b_0\mid\mathbf{h})$.
Then the safeguarded Newton iteration, Newton update when the iterate lies in the current bracket,
bisection otherwise, with the bracket tightened each step, converges
to the unique root $\Delta t^\star$. Moreover:
\begin{itemize}[leftmargin=*]
\item \textbf{Global (at least linear) rate.} Bisection halves the
bracket on any step the Newton update is rejected, giving
$|\Delta t^{(n)}-\Delta t^\star|\leq 2^{-n_B}(b_0-a_0)$ where $n_B$ is
the number of bisection steps.
\item \textbf{Local quadratic rate.} Once the iterate enters a
neighborhood of $\Delta t^\star$ on which the Newton--Kantorovich
condition $|\lambda'_\theta|/\lambda_\theta^2\leq L$ with
$L\,|z-\Lambda_\theta(\Delta t^{(n)})|/\lambda_\theta(\Delta t^{(n)})<\tfrac12$
holds, Newton steps are accepted and the iteration is quadratically
convergent~\citep[Thm.~11.5]{nocedal2006numerical}.
\end{itemize}
\end{proposition}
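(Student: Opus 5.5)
The argument has three stages: existence and uniqueness of the root, the bracket-invariant global convergence, and the local quadratic rate via the standard Newton theory.

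\emph{Existence and uniqueness.} Write $F(\Delta t)=\Lambda_\theta(\Delta t\mid\mathbf{h})-z$. By (ii), $F'=\lambda_\theta\geq\lambda_{\text{floor}}>0$, so $F$ is strictly increasing, which gives uniqueness. By (i), $F(0)=-z<0$; by (iii), $F\to\infty$. Continuity and the intermediate value theorem then give a unique root $\Delta t^\star$, exactly as in Lemma~\ref{lem:invertibility}. The same facts show that the doubling loop in Algorithm~\ref{alg:surf_sampling_newton} terminates: since $\Lambda_\theta(b)\geq\lambda_{\text{floor}}b$, it suffices that $b\geq z/\lambda_{\text{floor}}$, and this is reached after finitely many doublings. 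This produces a valid initial bracket $[a_0,b_0]$ containing $\Delta t^\star$.

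\emph{Bracket invariant and global convergence.} I will show by induction that $F(a_n)\leq 0\leq F(b_n)$, so $\Delta t^\star\in[a_n,b_n]$ at every step. The update replaces $a$ or $b$ by the new iterate according to the sign of $F$ there, so the invariant is preserved and the brackets are nested. This step is the main subtlety. Accepted Newton steps need not halve the bracket, so I cannot simply count total steps. Instead I bound the width: every bisection step at least halves it, and every Newton step does not increase it. Hence after $n_B$ bisections the width is at most $2^{-n_B}(b_0-a_0)$, and since the iterate lies in the bracket, this bounds $|\Delta t^{(n)}-\Delta t^\star|$. To rule out infinitely many Newton steps that stall without the bracket shrinking to zero, I will argue that an infinite run of accepted Newton steps inside a compact bracket where $F'\geq\lambda_{\text{floor}}$ and $F''$ is bounded ($C^2$ on a compact set) must converge. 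Take a limit point $p$ of the iterates. The Newton step size $|F(\Delta t^{(n)})|/F'(\Delta t^{(n)})$ must tend to zero along the sequence, which forces $F(p)=0$ and thus $p=\Delta t^\star$. In the worst case one can adopt the standard safeguard convention of forcing a bisection whenever a Newton step fails to halve the bracket within two steps; I will present the argument relative to the bracket width and note this convention.

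\emph{Local quadratic rate.} Near $\Delta t^\star$, apply the standard Newton analysis (Nocedal--Wright Thm.~11.5 / Kantorovich). Taylor expansion gives
\[
e_{n+1}=\frac{F''(\xi)}{2F'(\Delta t^{(n)})}e_n^2 ,
\]
with $F''=\lambda_\theta'$. The stated condition $|\lambda'_\theta|/\lambda_\theta^2\leq L$ together with $L|F|/\lambda_\theta<\tfrac12$ is the Kantorovich condition $h=L\cdot|\text{Newton step}|<\tfrac12$. It guarantees that subsequent iterates stay in a ball around $\Delta t^\star$ that lies inside the bracket, since the bracket contains the root and the iterates move monotonically toward it once on the convex or concave side. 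So Newton steps are accepted from then on, and $e_{n+1}\leq C e_n^2$ with $C=\sup|F''|/(2\lambda_{\text{floor}})$. Combining the finite pre-asymptotic phase with this local phase yields global convergence with a local quadratic rate.
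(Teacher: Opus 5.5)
Your outline is the paper's sketch made explicit: uniqueness from $\Lambda_\theta'\ge\lambda_{\text{floor}}$ with (i) and (iii), the bracket invariant, bisection halving, and Newton--Kantorovich locally. You also correctly spot what that sketch passes over: the bound $2^{-n_B}(b_0-a_0)$ says nothing if only finitely many bisections occur. Your repair fails, though, at the assertion that the Newton step $|r(\Delta t^{(n)})|/r'(\Delta t^{(n)})$ must tend to zero along an infinite run of accepted steps. Nothing forces this, and it is false. The hypotheses allow an inflection point, and then $N(x)=x-r(x)/r'(x)$ can have an attracting 2-cycle straddling the root. For instance, $r(x)=c\,x+\arctan(5x)$ with $c=\arctan 5-\tfrac{5}{13}\approx 0.989$ has $r'\ge c>0$, $N(\pm1)=\mp1$ and $N'(\pm1)\approx-0.63$. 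The function $\Lambda_\theta(\Delta t)=r(\Delta t-2)-r(-2)$ with $z=r(2)$ satisfies (i)--(iii) with $\Delta t^\star=2$. Start from $[a_0,b_0]=[0.9,\,5.2]$ with the midpoint as initial guess. The iterates are $0.970,\,3.018,\,0.989,\,3.007,\dots$. Because the cycle is attracting and $N$ is decreasing near it, each Newton point lands strictly inside the current bracket. So every step is accepted, $n_B=0$, the bracket shrinks only to $[1,3]$, and $|\Lambda_\theta-z|\approx 2.36$ never meets the tolerance. Global convergence is therefore not provable for the iteration as stated. The paper's sketch, which relies only on bracket validity and ``bisection convergence is standard'', has the same hole, and your limit-point step is exactly where it breaks.

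To close it you must change either the algorithm or the hypotheses. Your fallback (force a bisection when the bracket fails to shrink by a fixed factor within a fixed number of steps) does give $n_B\to\infty$. But it is a different safeguard from the one in the statement and in Algorithm~\ref{alg:surf_sampling_newton}, so it should be stated as a change to the algorithm, not as a convention. Alternatively, assume $\Lambda_\theta''$ has one sign. This holds for MoE, where $\Lambda_\theta$ is concave, and for CSB, a positive combination of softplus terms and hence convex. It fails for GLQ, whose softplus-MLP intensity can be hump-shaped exactly as above. In the concave case tangents lie above the graph, so every Newton point is $\le\Delta t^\star$. Then either every step is a bisection, so $n_B\to\infty$, or after the first accepted Newton step the iterate becomes a left endpoint $a$. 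Each later Newton point then lies in $(a,\Delta t^\star]\subset(a,b)$ and is accepted, the iterates increase monotonically, and their limit is a fixed point of $N$, i.e.\ the root. The convex case is symmetric. Only under this assumption does your step-size-to-zero claim hold. The same monotonicity is what the local claim needs. A Kantorovich ball about $\Delta t^\star$ need not lie inside the current bracket, because an earlier endpoint can sit between $\Delta t^\star$ and the next Newton point. So ``Newton steps are accepted'' does not follow from the ball argument you give.
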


\begin{proof}[Sketch]
Existence and uniqueness of $\Delta t^\star$ follow from
Theorem~\ref{thm:reverse_rescaling} under the stated hypotheses.
Bisection convergence is standard. The Newton step is accepted iff it
lies inside the current bracket, which remains a valid bracket
throughout by construction. The local quadratic rate is the standard
Newton--Kantorovich theorem applied to the smooth univariate residual
$r(\Delta t)=\Lambda_\theta(\Delta t\mid\mathbf{h})-z$; hypothesis (ii)
ensures $r'=\lambda_\theta$ is bounded away from zero, so the required
Lipschitz constant on $r'/\lambda_\theta$ is finite on any compact
subinterval.
\end{proof}

\paragraph{Bracket initialization.} We set $a_0=\epsilon=10^{-6}$ and
$b_0=2z/\lambda_\theta(0\mid\mathbf{h})$ (twice the constant-intensity
estimate). If $\Lambda_\theta(b_0\mid\mathbf{h})<z$, we double $b_0$
until the upper bracket is reached; termination is guaranteed in at
most $\lceil\log_2(z/(b_0\,\lambda_{\text{floor}}))\rceil$ doublings,
since hypothesis (ii) implies $\Lambda_\theta(b)\geq b\,\lambda_{\text{floor}}$.

\paragraph{When pure Newton would fail.} Without the bracket safeguard,
Newton can overshoot in regimes where $\lambda_\theta$ changes rapidly
relative to its magnitude --- concretely, when
$|\lambda'_\theta(\Delta t^{(0)})|\,|z|/\lambda_\theta(\Delta t^{(0)})^2\gtrsim 1$.
This arises in practice at the tail of a short-timescale MoE component
shortly after an event; the safeguard catches exactly these cases.

\paragraph{Empirical iteration counts.} Across all six benchmarks,
$M=8$ iterations suffice to reach tolerance $10^{-4}$ in
$>\!99.9\%$ of inversions; bisection is triggered only on inputs near
numerical overflow of $e^{-\gamma_j\Delta t}$, where it activates
automatically and yields linear convergence on the final iteration.
The total wall-clock fraction spent in Newton during sampling is under
$1.2\%$ of Transformer encoder time on every dataset, consistent with
the cost accounting in Appendix~\ref{app:complexity}.
\section{Gradient Flow Analysis}\label{app:surf-grad}

SurF transforms the optimization landscape through the time rescaling,
providing natural gradient conditioning. This section analyzes how
gradients transform between temporal and exponential space, derives
$O(N)$ parameter-gradient bounds for all three variants, and contrasts
with the $O(\sigma^N)$ scaling of RNN-based TPPs.

\paragraph{Convention.} Throughout this section $\lambda_\theta$ may be read as the
floored intensity $\lambda^{\dagger}_\theta = \lambda_{\text{floor}} + \lambda_\theta$
of~\eqref{eq:amort_intensity}, which is the quantity actually appearing in the
objective and in the implementation. The floor only strengthens the bounds below,
since $\lambda^{\dagger}_\theta \geq \lambda_{\text{floor}} > 0$ uniformly; we write
$\lambda_\theta$ for readability.

\subsection{Gradient Transformation}

Under SurF's diagonal Jacobian (Appendix~\ref{app:likelihood-derivation}),
$\partial \Delta z_i / \partial \tau_j = \lambda_\theta(\tau_i \mid \mathbf{h}_{i-1})\delta_{ij}$,
so
\begin{equation}
\frac{\partial \mathcal{L}_{\text{SurF}}}{\partial \tau_i}
= \lambda_\theta(\tau_i \mid \mathbf{h}_{i-1})\cdot
\frac{\partial \mathcal{L}_{\text{SurF}}}{\partial \Delta z_i},
\qquad
\frac{\partial \mathcal{L}_{\text{SurF}}}{\partial \Delta z_i}
= \frac{1}{\lambda_\theta(\tau_i \mid \mathbf{h}_{i-1})}
\frac{\partial \mathcal{L}_{\text{SurF}}}{\partial \tau_i}.
\end{equation}
The second identity is well-defined for every $i$ because
$\lambda_\theta \geq \lambda_{\text{floor}} > 0$ by construction.

\subsection{Gradient Norm Bounds}

For all three variants, $\lambda_\theta > 0$ by construction. Let
$\lambda_{\max}, \lambda_{\min}$ denote the supremum and infimum of
$\lambda_\theta$ over the dataset. For MoE,
\begin{equation}
0 < \lambda_\theta \leq \lambda_{\text{floor}} + \sum_j w_j = \lambda_{\max},
\qquad
\lambda_{\min} \geq \lambda_{\text{floor}} + \sum_j w_j e^{-\gamma_j \tau_{\max}} > 0.
\end{equation}
For CSB,
\begin{equation}
0 < \lambda_\theta \leq \lambda_{\text{floor}} + \sum_m \alpha_m \beta_m
= \lambda_{\max}^{\text{CSB}},
\end{equation}
with $\lambda_{\min}^{\text{CSB}}$ strictly positive (its exact minimum
depends on the learned $\delta_m$, but is bounded below by
$\lambda_{\text{floor}} + \sum_m \alpha_m \beta_m \sigma(\delta_m)$, and by
$\lambda_{\text{floor}}$ unconditionally). The gradient norms satisfy
\begin{equation}
\frac{1}{\lambda_{\max}}\Bigl\|\tfrac{\partial \mathcal{L}_{\text{SurF}}}{\partial \boldsymbol{\tau}}\Bigr\|_2
\leq
\Bigl\|\tfrac{\partial \mathcal{L}_{\text{SurF}}}{\partial \boldsymbol{\Delta z}}\Bigr\|_2
\leq \frac{1}{\lambda_{\min}}
\Bigl\|\tfrac{\partial \mathcal{L}_{\text{SurF}}}{\partial \boldsymbol{\tau}}\Bigr\|_2.
\end{equation}
The condition number $\kappa = \lambda_{\max}/\lambda_{\min}$ depends on
the learned parameters and is bounded above by
$\lambda_{\max}/\lambda_{\text{floor}}$ for every variant; for CSB it tends to be
smaller than for MoE because sigmoid components can maintain non-vanishing
intensity at large $\Delta t$.

\subsection{Parameter Gradients: SurF-MoE}

For the integral term,
\begin{equation}
\nabla_\theta \sum_{i=1}^N \Delta z_i
= \sum_{i=1}^N \sum_{j=1}^J \nabla_\theta\!\Bigl[\frac{w_j^{(i)}}{\gamma_j^{(i)}}\bigl(1 - e^{-\gamma_j^{(i)}\tau_i}\bigr)\Bigr],
\end{equation}
the $\lambda_{\text{floor}}\tau_i$ term contributing no parameter gradient when the
floor is held fixed. Gradients flow through the softplus activations:
\begin{align}
\frac{\partial}{\partial \hat w_j}\frac{w_j}{\gamma_j}(1 - e^{-\gamma_j \tau})
&= \frac{\sigma'(\hat w_j)}{\gamma_j}(1 - e^{-\gamma_j \tau}),\\
\frac{\partial}{\partial \hat\gamma_j}\frac{w_j}{\gamma_j}(1 - e^{-\gamma_j \tau})
&= \sigma'(\hat\gamma_j)\cdot w_j\Bigl[\frac{\tau e^{-\gamma_j \tau}}{\gamma_j} - \frac{1 - e^{-\gamma_j \tau}}{\gamma_j^2}\Bigr],
\end{align}
where $\sigma'$ is the derivative of softplus (i.e.\ sigmoid). For the
log-intensity term,
\begin{equation}
\nabla_\theta \log \lambda
= \frac{\sum_j (\nabla_\theta w_j)e^{-\gamma_j \tau} - \sum_j w_j \tau (\nabla_\theta \gamma_j)e^{-\gamma_j \tau}}{\lambda_{\text{floor}} + \sum_j w_j e^{-\gamma_j \tau}}.
\end{equation}
The denominator equals $\lambda^{\dagger}_\theta \geq \lambda_{\text{floor}} > 0$, so
the gradient is always well-defined --- including in the limit
$w_j \to 0$, where the unfloored expression would be singular.

\subsection{Parameter Gradients: SurF-CSB}

\begin{align}
\frac{\partial \Lambda_\theta^{\text{CSB}}}{\partial \hat\alpha_m}
&= \sigma'(\hat\alpha_m)\bigl[\log(1+e^{\beta_m\Delta t + \delta_m}) - \log(1+e^{\delta_m})\bigr],\\
\frac{\partial \Lambda_\theta^{\text{CSB}}}{\partial \hat\beta_m}
&= \sigma'(\hat\beta_m)\cdot\alpha_m\cdot\Delta t\cdot\sigma(\beta_m\Delta t + \delta_m),\\
\frac{\partial \Lambda_\theta^{\text{CSB}}}{\partial \hat\delta_m}
&= \alpha_m\bigl[\sigma(\beta_m\Delta t + \delta_m) - \sigma(\delta_m)\bigr].
\end{align}
All expressions are smooth and bounded. The log-intensity gradient is
\begin{equation}
\nabla_\theta \log \lambda_\theta^{\text{CSB}}
= \frac{\sum_m \nabla_\theta[\alpha_m\beta_m\sigma(\beta_m\Delta t + \delta_m)]}{\lambda_{\text{floor}} + \sum_m \alpha_m\beta_m\sigma(\beta_m\Delta t + \delta_m)},
\end{equation}
with denominator $\lambda_\theta^{\text{CSB},\dagger} \geq \lambda_{\text{floor}} > 0$.

\subsection{$O(N)$ Gradient Norm Bound}

For a neural parameterization with Lipschitz constant $L$,
$\|\nabla_\theta \lambda_\theta\| \leq L$, and
\begin{equation}
\|\nabla_\theta \mathcal{L}_{\text{SurF}}\|
\leq \sum_{i=1}^N \|\nabla_\theta \Lambda_\theta(\tau_i)\|
+ \|\nabla_\theta \Lambda_\theta(\Delta T)\|
+ \sum_{i=1}^N \frac{\|\nabla_\theta \lambda_\theta(\tau_i)\|}{\lambda_\theta(\tau_i)}.
\end{equation}
For sequences with average rate $\bar\lambda$ and $T \approx N/\bar\lambda$,
\begin{equation}
\|\nabla_\theta \mathcal{L}_{\text{SurF}}\|
\leq L\cdot N\cdot\Bigl(\tfrac{1}{\bar\lambda} + \tfrac{1}{\lambda_{\min}}\Bigr)
\leq L\cdot N\cdot\Bigl(\tfrac{1}{\bar\lambda} + \tfrac{1}{\lambda_{\text{floor}}}\Bigr)
= O(N).
\end{equation}
This bound applies to all three variants. The final step makes the role of the floor explicit: without a uniform lower bound on $\lambda_\theta$ the last sum need not be finite, whereas $\lambda_\theta \geq \lambda_{\text{floor}}$ makes the bound architecture-independent. The softplus/sigmoid activations ensure smooth gradients, and structural positivity of $\lambda_\theta$ prevents log-intensity singularities.

\subsection{Comparison with RNN-Based TPPs}
RNN-based TPPs update hidden states as
$\mathbf{h}_{t_i} = g_\phi(\mathbf{h}_{t_{i-1}}, \tau_i)$. Backpropagation through time yields
\begin{equation}
\frac{\partial \mathcal{L}}{\partial \tau_i}
= \sum_{j=i}^N \frac{\partial \mathcal{L}}{\partial \mathbf{h}_{t_j}}
\prod_{k=i+1}^{j} \frac{\partial \mathbf{h}_{t_k}}{\partial \mathbf{h}_{t_{k-1}}}
\frac{\partial \mathbf{h}_{t_i}}{\partial \tau_i}.
\end{equation}
If Jacobians have singular-value bounds $\sigma_{\min}, \sigma_{\max}$,
\begin{equation}
\sigma_{\min}^{j-i} \leq
\Bigl\|\prod_{k=i+1}^{j} \tfrac{\partial \mathbf{h}_{t_k}}{\partial \mathbf{h}_{t_{k-1}}}\Bigr\|
\leq \sigma_{\max}^{j-i}.
\end{equation}
For $N > 100$, even small deviations from $\sigma = 1$ yield exponential gradient growth or decay. In contrast, SurF's amortized loss gives
$\|\nabla_\theta \mathcal{L}_{\text{SurF}}\| = O(N)$ linearly in sequence length. Figure~\ref{fig:gradient_analysis} validates this empirically.

\begin{figure}[ht]
\centering
\includegraphics[width=\textwidth]{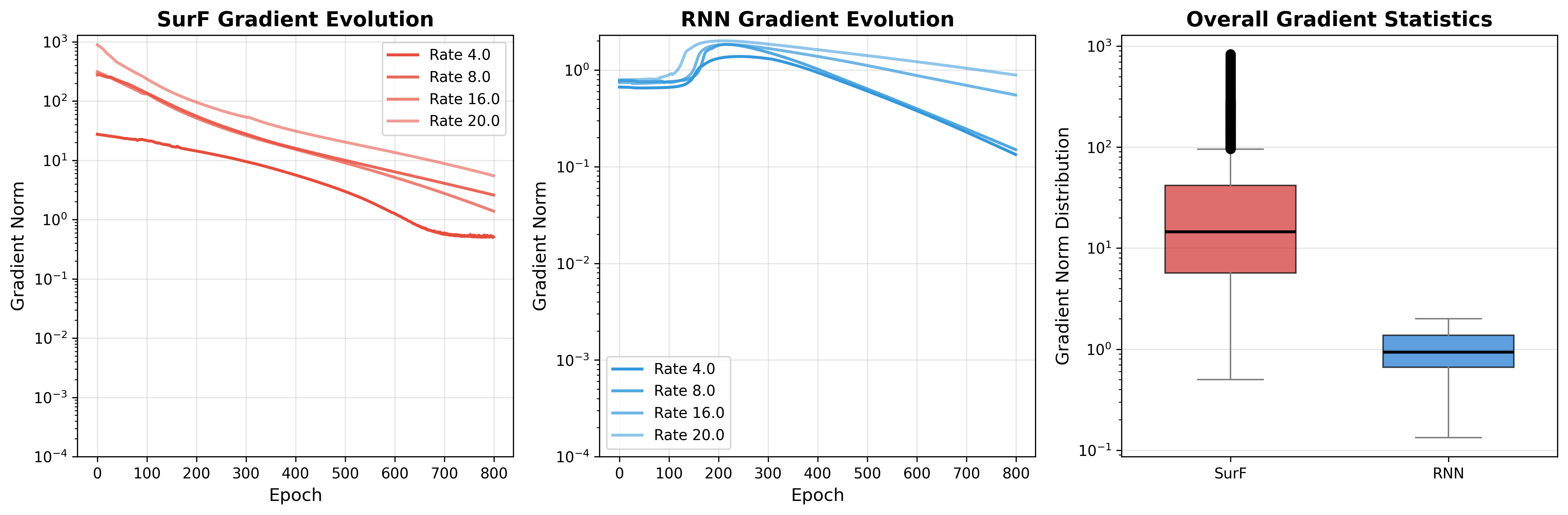}
\caption{Gradient analysis comparing SurF and RNN-based TPPs. SurF gradient evolution at firing rates $\{4, 8, 16, 20\}$ events/sec shows consistent decreasing trends with gradient norms converging faster; overall statistics show SurF's higher gradient magnitudes and lower fluctuations.}
\label{fig:gradient_analysis}
\end{figure}
\section{Extended Hyperparameter Sensitivity Study}
\label{app:hp_sensitivity}
 
This appendix supplements the ablations of Section~\ref{exp:ablations}
with the full per-dataset breakdown of the conformer/loss-schedule
ablation (Table~\ref{tab:ablations_appendix}) and the full results of
a $100$-trial random hyperparameter sweep over SurF-GLQ
(Tables~\ref{tab:ablation_glq}--\ref{tab:ablation_batchsize}).
 
\paragraph{Conformer and loss schedule.}
Table~\ref{tab:ablations_appendix} reports time RMSE / type accuracy
on each of the six benchmarks under all combinations of conformer
toggle and loss schedule.
 
\begin{table}[ht]
\centering
\setlength{\tabcolsep}{3pt}
\renewcommand{\arraystretch}{0.95}
\caption{Conformer and loss-schedule ablation
(time RMSE / type accuracy \%). \textbf{Bold} = best per column.}
\label{tab:ablations_appendix}
\resizebox{\columnwidth}{!}{%
\footnotesize
\begin{tabular}{cc|cccccc}
\toprule
\textbf{Conformer} & \textbf{Schedule} & \textbf{Amazon} & \textbf{Retweet} & \textbf{Taxi} & \textbf{Earthquake} & \textbf{StackOverflow} & \textbf{Taobao} \\
\midrule
\xmark & Equal   & 0.343 / 34.6 & 16.60 / 59.7 & 0.354 / 89.1 & 1.289 / \textbf{47.4} & \textbf{1.039} / 45.1 & \textbf{0.131} / 59.7 \\
\xmark & Staged  & \textbf{0.330} / 34.4 & 22.33 / \textbf{60.0} & 0.304 / \textbf{91.6} & 1.556 / 47.2 & 1.054 / \textbf{45.3} & 0.136 / 57.9 \\
\cmark & Equal   & 0.343 / \textbf{34.8} & 16.58 / 59.7 & 0.479 / 89.6 & 1.309 / 47.3 & 1.147 / 45.0 & 0.236 / \textbf{60.3} \\
\cmark & Staged  & 0.349 / \textbf{34.8} & 22.33 / 59.3 & \textbf{0.283} / \textbf{91.6} & \textbf{1.224} / \textbf{47.4} & 1.049 / \textbf{45.3} & \textbf{0.131} / 59.8 \\
\cmark & \textbf{Hybrid} (full) & 0.337 / 34.6 & \textbf{16.41} / 59.7 & 0.311 / 89.1 & 1.263 / 46.9 & 1.081 / 45.0 & 0.189 / \textbf{60.3} \\
\bottomrule
\end{tabular}
}
\end{table}
 
\paragraph{SurF-GLQ hyperparameter sweep: protocol.}
The sweep jointly varied: number of Gauss--Legendre points
$Q\in\{4,8,12\}$; number of log-intensity references $J\in\{3,4,5\}$;
conformer pre-encoder (on/off); batch size $\in\{128,256,512\}$;
learning rate $\in[3{\times}10^{-5},\,8{\times}10^{-4}]$; and
pretraining type-loss weight $\beta_{\text{type}}\in[0.51,\,4.28]$. Trials that diverged (negative SurF
loss, indicating numerical failure) were excluded; this affected
$5/105$ runs, all at extreme learning-rate values.
 
\paragraph{Quadrature points and mixture size.}
Table~\ref{tab:ablation_glq} aggregates the sweep by $Q$ and $J$.
 
\begin{table}[ht]
\centering
\setlength{\tabcolsep}{6pt}
\renewcommand{\arraystretch}{0.95}
\caption{SurF-GLQ quadrature ablation: $100$-trial random
hyperparameter sweep varying number of Gauss--Legendre points $Q$ and
log-intensity reference components $J$. We report the best, median, and
trial-level standard deviation of time RMSE, and the best type accuracy
within each group. \textbf{Bold} = best per metric.
$Q{=}8$ matches $Q{=}12$ within trial-level noise
($\sigma_{\text{RMSE}}{\approx}0.05$), validating the default in
Section~\ref{sec:training}.}
\label{tab:ablation_glq}
\footnotesize
\begin{tabular}{cc|ccccc}
\toprule
\textbf{Param} & \textbf{Value} & $n$ & \textbf{RMSE (best)} & \textbf{RMSE (med)} & $\sigma_{\text{RMSE}}$ & \textbf{Acc \% (best)} \\
\midrule
\multirow{3}{*}{$Q$} & $4$  & $9$  & 8.751 & 8.825 & 0.170 & 48.96 \\
                     & $\mathbf{8}$ (default)  & $69$ & 8.729 & \textbf{8.749} & 0.058 & \textbf{49.94} \\
                     & $12$ & $22$ & \textbf{8.725} & 8.786 & 0.035 & 49.49 \\
\midrule
\multirow{3}{*}{$J$} & $3$ & $38$ & \textbf{8.725} & 8.771 & 0.069 & 49.51 \\
                     & $\mathbf{4}$ (default) & $56$ & 8.729 & \textbf{8.748} & 0.053 & \textbf{49.94} \\
                     & $5$ & $6$  & 8.751 & 8.815 & 0.082 & 48.88 \\
\bottomrule
\end{tabular}
\end{table}
 
\paragraph{Conformer $\times$ $Q$ interaction.}
Table~\ref{tab:ablation_conformer_q} cross-tabulates the conformer
toggle against $Q$. The conformer block contributes most when paired
with the recommended $Q{=}8$, where it lifts best-trial type accuracy
from $48.96\%$ ($Q{=}4$, no conformer) to $49.94\%$ while preserving
the lowest median RMSE. Without the conformer, larger $Q$ partially
recovers performance but plateaus by $Q{=}12$.
 
\begin{table}[ht]
\centering
\setlength{\tabcolsep}{6pt}
\renewcommand{\arraystretch}{0.95}
\caption{Conformer $\times$ $Q$ interaction. Conformer + $Q{=}8$
delivers the best accuracy and is competitive in RMSE; further
quadrature points provide no measurable gain.}
\label{tab:ablation_conformer_q}
\footnotesize
\begin{tabular}{cc|cccc}
\toprule
\textbf{Conformer} & $Q$ & $n$ & \textbf{RMSE (best)} & \textbf{RMSE (med)} & \textbf{Acc \% (best)} \\
\midrule
\xmark & $4$  & $9$  & 8.751 & 8.825 & 48.96 \\
\xmark & $12$ & $12$ & 8.725 & 8.761 & 49.38 \\
\cmark & $8$  & $69$ & 8.729 & 8.749 & \textbf{49.94} \\
\cmark & $12$ & $10$ & 8.780 & 8.795 & 49.49 \\
\bottomrule
\end{tabular}
\end{table}
 
\paragraph{Loss-weight sensitivity.}
Table~\ref{tab:ablation_lossweights} bins the trials by
type-loss weight
$\beta_{\text{type}}$. Performance is robust across a wide range:
$\beta_{\text{type}}\in[0.7,1.0]$ forms
a broad plateau within $0.02$ RMSE of the best configuration. Very high type weight ($>2$) over-emphasizes
classification at the expense of timing. The default
$\beta_{\text{type}}{=}0.74$ used in
Section~\ref{sec:experiments} sits inside this plateau.
 
\begin{table}[ht]
\centering
\setlength{\tabcolsep}{6pt}
\renewcommand{\arraystretch}{0.95}
\caption{Loss-weight sensitivity. Mid-range type weight forms
a broad performance plateau.}
\label{tab:ablation_lossweights}
\footnotesize
\begin{tabular}{l|cccc}
\toprule
\textbf{Bin} & $n$ & \textbf{RMSE (best)} & \textbf{RMSE (mean)} & \textbf{Acc \% (best)} \\
\midrule
\multicolumn{5}{l}{\emph{Type weight $\beta_{\text{type}}$}} \\
\quad $<0.7$ & $38$ & \textbf{8.725} & 8.786 & 49.51 \\
\quad $0.7$--$1$ & $46$ & 8.729 & \textbf{8.766} & \textbf{49.94} \\
\quad $1$--$2$ & $10$ & 8.744 & 8.858 & 49.87 \\
\quad $>2$ & $6$ & 8.751 & 8.801 & 49.49 \\
\bottomrule
\end{tabular}
\end{table}
 
\paragraph{Batch-size effect.}
Table~\ref{tab:ablation_batchsize} reports performance by batch size.
Batch $256$ achieves the best mean RMSE ($8.77$) and the highest
best-trial accuracy ($49.94\%$). Batch $512$ matches the lowest
absolute RMSE ($8.725$) but with slightly degraded type accuracy
($49.49\%$); batch $128$ is uniformly worst, consistent with higher
gradient noise at small batches harming both the SurF likelihood and
the classification head.
 
\begin{table}[ht]
\centering
\setlength{\tabcolsep}{6pt}
\renewcommand{\arraystretch}{0.95}
\caption{Batch-size effect on the SurF-GLQ HP sweep.}
\label{tab:ablation_batchsize}
\footnotesize
\begin{tabular}{c|cccc}
\toprule
\textbf{Batch} & $n$ & \textbf{RMSE (best)} & \textbf{RMSE (mean)} & \textbf{Acc \% (best)} \\
\midrule
$128$ & $9$  & 8.751 & 8.887 & 48.96 \\
$\mathbf{256}$ & $69$ & 8.729 & \textbf{8.773} & \textbf{49.94} \\
$512$ & $22$ & \textbf{8.725} & 8.782 & 49.49 \\
\bottomrule
\end{tabular}
\end{table}
 
\paragraph{Top configurations.}
The top-$10$ trials by time RMSE all come from $Q{\in}\{8,12\}$ ($60\%$
$Q{=}8$, $40\%$ $Q{=}12$) and $J{\in}\{3,4\}$, span both conformer
settings ($60\%$ on, $40\%$ off), and use either batch $256$ or $512$.
None of the top-$10$ trials uses $Q{=}4$, $J{=}5$, or batch $128$,
indicating these are the only design choices the sweep clearly rules
out. The fact that several near-optimal HP regions exist---rather than
a single brittle peak---supports the broader claim that the SurF
training objective in \eqref{eq:amort_loss} is well-conditioned.
 
\section{Additional Theoretical Material}

\subsection{Joint Modeling of Times and Marks}\label{app:marks}

For $K$ event types, SurF factorizes the joint conditional intensity as
\begin{equation}\label{eq:mark_factorization}
\lambda^*(t, k \mid \mathcal{H}_t)
= \lambda^*(t \mid \mathcal{H}_t)\cdot p(k \mid t, \mathcal{H}_t),
\qquad \sum_{k=1}^K p(k \mid t, \mathcal{H}_t) = 1,
\end{equation}
where $\lambda^*(t \mid \mathcal{H}_t)$ is the marginal ground intensity
(modeled by $\Lambda_\theta$) and $p(k \mid t, \mathcal{H}_t)$ is the
conditional mark distribution (modeled by $\text{TypeNet}_\theta$ with a
softmax head). Substituting into the point-process likelihood, the joint
NLL decomposes exactly:
\begin{equation}\label{eq:joint_nll}
-\log p(\mathcal{D})
= \underbrace{\mathcal{L}_{\text{SurF}}(\theta)}_{\text{time}}
+ \underbrace{\sum_{i=1}^N -\log p_\theta(k_i \mid t_i, \mathcal{H}_{t_i})}_{\mathcal{L}_{\text{type}}(\theta)}.
\end{equation}
No variational bound or coupling term is required; the factorization is
exact at the likelihood level. The weight $\beta_{\text{type}}$ in
Algorithm~\ref{alg:surf_training} is introduced purely to balance the
relative scales of the two NLL components during optimization, not as a
variational coefficient.

\paragraph{Preservation of the TRT under marking.} Because
$\lambda^*(t,k) = \lambda^*(t)\,p(k\mid t)$ is a strict product, the TRT
applied to the ground intensity $\lambda^*(t) = \sum_k \lambda^*(t,k)$
yields the same $\mathrm{Exp}(1)$ rescaling regardless of $K$, so
Theorem~\ref{thm:reverse_rescaling} holds unchanged and
Proposition~\ref{prop:domain_invariant} is unaffected by the number of
types. Marks are generated conditionally at sampling time
(Algorithm~\ref{alg:surf_sampling_newton}).

\paragraph{Alternative: $K$-dimensional intensity.} A per-type
formulation $\{\lambda_k^*(t)\}_{k=1}^K$ with separate compensators is
also supported by letting the cumulative-intensity head output $K$
copies of the parameters. The TRT then applies per-type, giving $K$
independent $\mathrm{Exp}(1)$ rescalings; the joint likelihood equals
the sum of $K$ per-type instances of the unified loss.

\subsection{Relationship to Prior Cumulative-Hazard Neural TPPs}\label{app:related_cumhazard}

Parameterizing the cumulative hazard $\Lambda_\theta$ directly and
recovering $\lambda_\theta$ via differentiation is not new: Omi et
al.~\citep{omi2019fully} introduced this construction in \emph{FullyNN},
where $\Lambda_\theta$ is a feed-forward network with non-negative
weights, and Shchur et al.~\citep{shchur2019intensity} use a log-normal
mixture with a related closed-form compensator. SurF differs along three
axes.

\paragraph{(i) Conceptual framing.} Prior work treats the monotone
network as an engineering convenience for avoiding the
$\int_0^T \lambda\,ds$ integral. SurF treats $\Lambda_\theta$ as a
normalizing flow between the point process and i.i.d.\ $\mathrm{Exp}(1)$
noise, with the TRT supplying the Jacobian. This licenses the
dataset-invariance claim (Proposition~\ref{prop:domain_invariant}) and
motivates cross-dataset pretraining; FullyNN and its descendants are
trained per-dataset and make no such claim.

\paragraph{(ii) Parameterization.} FullyNN enforces monotonicity via non-negative-weight MLPs, which are ill-conditioned at depth. SurF-CSB uses a convex combination of softplus basis functions that admits closed-form derivatives. SurF-MoE specializes further to a mixture of exponentials with a fully closed-form loss; SurF-GLQ provides an unconstrained-MLP alternative with per-interval quadrature.

\paragraph{(iii) Amortization.} FullyNN's hazard depends on history
through an RNN hidden state shared across the monotone MLP input. SurF
amortizes via a Transformer encoder whose output $\mathbf{h}_{i-1}$
parameterizes a fresh $\Lambda_\theta(\cdot \mid \mathbf{h}_{i-1})$ at
each interval, yielding the diagonal Jacobian of
Appendix~\ref{app:likelihood-derivation} and the $O(N)$ gradient bound
of Appendix~\ref{app:surf-grad} rather than the $O(\sigma^N)$ scaling of
RNN-based hazards.

\paragraph{Empirical comparison.} Table~\ref{tab:nextevent} includes
FullyNN as a baseline; SurF outperforms it on all six benchmarks, and
the gap widens in the zero-shot regime where FullyNN has no mechanism
for cross-dataset transfer.

\subsection{When to Use Which Variant, and the Role of Quadrature}\label{app:variants_quadrature}

The three variants target different regimes and differ in how (if at
all) they invoke quadrature.

\paragraph{SurF-MoE --- closed-form, fastest.} Between-event intensity is
monotone-decaying and multi-scale. Compensator and derivative are exact
closed forms; the loss invokes zero quadrature. This is the default for
foundation-model pretraining.

\paragraph{SurF-CSB --- closed-form, most expressive.} Intensity is expressive (delayed excitation, hump-shaped profiles); the shifted-softplus basis keeps both $\Lambda_\theta$ and $\lambda_\theta$ in closed form.

\paragraph{SurF-GLQ with fixed-cost per-interval quadrature.} The
intensity is an unconstrained positive MLP; $\Lambda_\theta$ uses $Q=8$
Gauss--Legendre nodes \emph{per inter-event interval} $[0, \tau_i]$, not
over the full window $[0, T]$.

\begin{table}[ht]
\centering
\small
\caption{Quadrature cost across methods. Only SurF-GLQ uses quadrature;
unlike THP/SAHP its cost is bounded per interval, independent of $T$,
and the error does not enter the gradient in practice
($<\!10^{-12}$ at $Q=8$).}
\label{tab:quadrature_compare}
\begin{tabular}{lcccc}
\toprule
\textbf{Method} & \textbf{Quadrature cost} & \textbf{Grows with $T$?} & \textbf{In gradient?} & \textbf{Exact?} \\
\midrule
THP / SAHP & $O(NQ')$, $Q'\!\propto\!T$ & yes & yes (biased) & no \\
NHP& $O(NM)$, $M\!\gg\!1$ & yes & yes (stochastic) & no \\
SurF-MoE & $0$ & --- & no & yes \\
SurF-CSB & $0$ & --- & no & yes \\
SurF-GLQ & $O(NQ)$, $Q\!=\!8$ fixed & \textbf{no} & negligible ($<\!10^{-12}$) & yes up to $10^{-12}$ \\
\bottomrule
\end{tabular}
\end{table}

\paragraph{Reading the ``no quadrature'' claim precisely.} The statement
``exact likelihoods with no numerical quadrature'' in the introduction
should be read as: \emph{no quadrature at all for MoE and CSB; fixed
$O(Q)$-per-interval quadrature for GLQ, with $T$-independent cost and
negligible error that does not enter the gradient.} In all three cases,
likelihood evaluation is non-iterative and the training signal is
unbiased to machine precision.

\subsection{Shared Canonical Target}

\begin{proposition}[Shared canonical target]\label{prop:domain_invariant}
Let $\mathcal{P}_1,\mathcal{P}_2$ be point processes arising from potentially
different datasets with intensities $\lambda_1^*,\lambda_2^*$, each satisfying
the hypotheses of Theorem~\ref{thm:reverse_rescaling}, and let
$(\phi_\theta,\Lambda_\theta)$ be trained jointly on the mixture
$\pi_1\mathcal{P}_1+\pi_2\mathcal{P}_2$ with $\pi_d>0$ by minimizing the
population SurF loss. Assume realizability: some $\theta^\star$ satisfies
$\Lambda_{\theta^\star}(\tau\mid\phi_{\theta^\star}(\mathcal{H}))
=\Lambda_d^*(\tau\mid\mathcal{H})$ for $\mathcal{P}_d$-almost every history
$\mathcal{H}$ and both $d$. Then every population minimizer agrees with
$\Lambda_d^*$ $\mathcal{P}_d$-almost everywhere for both $d$, and the rescaled
gaps $\Lambda_\theta(\tau_i\mid\phi_\theta(\mathcal{H}_{t_i}))$ are i.i.d.\
$\mathrm{Exp}(1)$ on each dataset. The two datasets therefore share one
target, and the loss couples the encoder through it even though its
$(t_i,k_i)$ inputs remain dataset-specific.
\end{proposition}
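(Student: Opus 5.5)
The plan is to write the population SurF loss on the mixture as an expected negative log-likelihood and to apply Gibbs' inequality (nonnegativity of KL) conditionally on each history. By the derivation of Appendix~\ref{app:likelihood-derivation}, the per-sequence loss \eqref{eq:amort_loss} equals $-\log p_\theta(\boldsymbol{\tau})$. Here $p_\theta$ is the density induced by the conditional hazards $\lambda^\dagger_\theta(\cdot\mid\phi_\theta(\mathcal{H}_{t_i}))$. The population loss is therefore
\[
\mathcal{R}(\theta)=\sum_{d}\pi_d\,\mathbb{E}_{\mathcal{P}_d}\bigl[-\log p_\theta(\boldsymbol{\tau})\bigr].
\]
Subtracting the entropy terms, which do not depend on $\theta$, rewrites $\mathcal{R}(\theta)-\mathcal{R}(\theta^\star)$ as $\sum_d \pi_d\,\mathrm{KL}(\mathcal{P}_d\,\|\,p_\theta)$. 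This holds because realizability makes $p_{\theta^\star}$ equal to the true law of $\mathcal{P}_d$ for each $d$. I would further decompose each KL by the chain rule into a sum over $i$ of expected conditional KLs between the true next-gap density $\lambda_d^*(\tau\mid\mathcal{H})e^{-\Lambda_d^*(\tau\mid\mathcal{H})}$ and the model density $\lambda^\dagger_\theta e^{-\Lambda_\theta}$. The censored survival term on $[t_N,T]$ is handled the same way, as a survival-probability factor.

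With this decomposition, $\theta^\star$ is a global minimizer with excess risk $0$. Any other population minimizer $\hat\theta$ must then also have excess risk $0$. Since $\pi_d>0$ and every conditional KL is nonnegative, each conditional KL vanishes for $\mathcal{P}_d$-almost every history. Equal densities on $\mathbb{R}^+$ give equal survival functions $e^{-\Lambda}$. Taking logs yields $\Lambda_{\hat\theta}(\tau\mid\phi_{\hat\theta}(\mathcal{H}))=\Lambda_d^*(\tau\mid\mathcal{H})$ for all $\tau$, for a.e.\ history and each $d$; this is the first claim. I would state the identification at the level of the composed map $\Lambda_\theta\circ\phi_\theta$ rather than of $\theta$ itself, since parameters need not be identifiable.

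For the second claim, I substitute this equality into the rescaled gaps and invoke Corollary~\ref{cor:time_rescaling_corollary} for each $d$. The hypotheses of Theorem~\ref{thm:reverse_rescaling} guarantee positivity and divergence of $\Lambda_d^*$. The corollary then gives $\Lambda_d^*(\tau_i\mid\mathcal{H}_{t_i})$ i.i.d.\ $\mathrm{Exp}(1)$, and hence the same holds for the model's rescaled gaps on each dataset. The "shared target" conclusion is then immediate. Both datasets are pushed onto the same law, and a single $(\phi_\theta,\Lambda_\theta)$ attains zero excess risk on both only by matching both compensators. This is the sense in which the encoder is coupled across datasets.

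The main obstacle is making the KL argument rigorous with the floor and the window. Realizability must be read as including $\lambda_{\text{floor}}$, i.e.\ the true intensities satisfy $\lambda_d^*\ge\lambda_{\text{floor}}$; this is consistent with the hypothesis $\lambda_d^*\ge\lambda_{\min}$. I would also need finiteness of the entropies so that the subtraction is legitimate. My first attempt would be to assume $\mathbb{E}_{\mathcal{P}_d}[\Lambda_d^*(T)]<\infty$ and $\mathbb{E}\sum_i|\log\lambda_d^*(t_i)|<\infty$. Under those assumptions, Gibbs' inequality applies termwise. The a.e.\ identification then only needs continuity of $\Lambda$ in $\tau$, which Lemma~\ref{lem:invertibility} provides.
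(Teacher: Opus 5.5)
Your proposal is correct and follows essentially the same route as the paper's proof. It writes the population loss as $\sum_d \pi_d\,\mathrm{KL}(p_d^*\,\|\,p_\theta)$ plus a $\theta$-free entropy constant, uses realizability and $\pi_d>0$ to force every KL to vanish at any minimizer, identifies the compensators, and concludes with Corollary~\ref{cor:time_rescaling_corollary}; your chain-rule/survival-function step is a more explicit form of the paper's appeal to ``a point process law determines its compensator,'' and your integrability conditions state what the paper leaves implicit (one small correction: with the window $[0,T]$ a vanishing conditional KL identifies $\Lambda$ only on $[0,T-t_{i-1}]$, not for all $\tau$, though that is all the rescaled-gap claim needs).
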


\begin{proof}
By Appendix~\ref{app:likelihood-derivation}, $\mathcal{L}_{\text{SurF}}(\theta)$
evaluated on a sequence drawn from $\mathcal{P}_d$ is the negative
log-likelihood of that sequence under the model, so writing $p_d^*$ for the
true inter-arrival law of $\mathcal{P}_d$ and $p_\theta$ for the model's, the
population objective is
\begin{equation}
R(\theta) = \sum_{d=1,2}\pi_d\,\mathbb{E}_{p_d^*}\bigl[-\log p_\theta\bigr]
= \sum_{d=1,2}\pi_d\,\mathrm{KL}\bigl(p_d^*\,\|\,p_\theta\bigr) + c,
\end{equation}
where $c$ collects the entropies of $p_1^*$ and $p_2^*$ and does not
depend on $\theta$. Each summand is minimized
exactly when $\mathrm{KL}(p_d^*\|p_\theta)=0$, i.e.\ $p_\theta=p_d^*$
$\mathcal{P}_d$-almost everywhere, and realizability makes the two minima
simultaneously attainable; since $\pi_d>0$, $R$ is therefore minimized
precisely at those $\theta$ that match both. A point process law determines
its compensator and conversely, and
$\lambda^{\dagger}_\theta=\partial\Lambda_\theta/\partial(\Delta t)$
by~\eqref{eq:amort_intensity}, so this is equivalent to
$\Lambda_\theta(\cdot\mid\phi_\theta(\cdot))=\Lambda_d^*$
$\mathcal{P}_d$-almost everywhere. Applying
Corollary~\ref{cor:time_rescaling_corollary} to each $\mathcal{P}_d$ gives the
$\mathrm{Exp}(1)$ conclusion.
\end{proof}

\section{Datasets}\label{app:datasets}

\begin{table}[ht]
\centering
\caption{Pre-training corpus composition. The corpus spans 6 datasets with
diverse temporal scales, totaling ${\sim}1.3$M events across ${\sim}32$K
sequences.}
\label{tab:pretraining_corpus}
\begin{small}
\begin{tabular}{l|rrrrl}
\toprule
\textbf{Dataset} & \textbf{Sequences} & \textbf{Events} & \textbf{Event Types} & \textbf{Avg.\ Length} & \textbf{Temporal Scale} \\
\midrule
Taxi & 2,000 & 74,078 & 10 & 37.04 & minutes--hours \\
Earthquake & 4,300 & 70,723 & 7 & 16.46 & hours--days \\
StackOverflow & 2,203 & 142,777 & 22 & 64.81 & hours--days \\
Amazon & 9,227 & 413,420 & 16 & 44.81 & hours--days \\
Retweet & 12,055 & 493,708 & 3 & 40.95 & seconds--hours \\
Taobao & 2,000 & 115,397 & 17 & 57.70 & hours--days \\
\midrule
\textbf{Total} & \textbf{31,785} & \textbf{1,310,103} & 75 & \textbf{41.22} & seconds--days \\
\bottomrule
\end{tabular}
\end{small}
\end{table}

We evaluate SurF on six real-world datasets spanning e-commerce,
transportation, social media, geoscience, and online collaboration:
\begin{itemize}[leftmargin=*]
\item \textbf{Taobao}~\citep{xue2022hypro}: user click sequences from a
large-scale online recommendation platform, capturing temporal patterns
of user engagement with product categories.
\item \textbf{Taxi}~\citep{whong-14-taxi}: taxi pick-up events across
New York City neighborhoods, providing insights into urban mobility
patterns.
\item \textbf{Retweet}~\citep{zhou2013learning}: social cascades of
retweet behaviors and user identities over time.
\item \textbf{StackOverflow}~\citep{snapnets}: activity logs from a
question-and-answer platform with events representing posting and voting
interactions.
\item \textbf{Amazon}~\citep{amazon-2018}: product review sequences
where each event is a user reviewing a product in a specific category.
\item \textbf{Earthquake}: seismic event sequences with magnitude and
location codes, covering multi-day inter-event gaps from regional
geological catalogs.
\end{itemize}

\section{Model Architecture and Hyperparameters}\label{app:architecture}

The model has two components: a history encoder
$\phi_\theta\!:\mathbb{R}^+\!\times\!\mathcal{H}\!\to\mathbb{R}^d$ producing
the encoding $\mathbf{h}_{i-1}$, and a cumulative-intensity head that maps
$\mathbf{h}_{i-1}$ to $\Lambda_\theta(\cdot\mid\mathbf{h}_{i-1})$
(Appendix~\ref{app:cumint_arch}).

\subsection{History Encoding}

A multi-layer Transformer with time-aware embeddings:
\begin{equation}
\phi_\theta(t, \mathcal{H}_t) = \text{TransformerEncoder}_\theta(\textrm{Conformer}_\xi(\mathbf{E}_{\text{combined}}, \mathbf{M})),
\end{equation}
\begin{equation}
\mathbf{E}_{\text{combined}} = [\mathbf{E}_{\text{type}}; \mathbf{E}_{\text{time}}] \in \mathbb{R}^{N \times 2d_{\text{hidden}}}.
\end{equation}
Type embeddings: $\mathbf{E}_{\text{type}}^{(i)} = \text{Embedding}(k_i + 1) \in \mathbb{R}^{d_{\text{hidden}}}$
(the offset ensures valid indices).

\textbf{Temporal encoding.} Normalized time prevents numerical
instability:
\begin{equation}
\mathbf{E}_{\text{time}}^{(i)} = \text{MLP}_{\text{time}}(t_i / T_{\text{scale}}),
\qquad T_{\text{scale}} = \max(t_1, \ldots, t_N) + \epsilon,
\end{equation}
with
$\text{MLP}_{\text{time}}(x) = \text{Linear}_2(\tanh(\text{Linear}_1(x)))$,
$\text{Linear}_1\!:\mathbb{R}\!\to\mathbb{R}^{d_{\text{hidden}}/2}$ and
$\text{Linear}_2\!:\mathbb{R}^{d_{\text{hidden}}/2}\!\to\mathbb{R}^{d_{\text{hidden}}}$.
The Conformer may or may not be included, depending on the SurF variant. The Transformer is causally masked.

\subsection{Cumulative-Intensity Head}\label{app:cumint_arch}

For the amortized variants, the Transformer output $\mathbf{h}_{i-1}$
feeds a cumulative-intensity head that produces the parameters of
$\Lambda_\theta(\cdot \mid \mathbf{h}_{i-1})$.

\textbf{SurF-CSB head.} A single linear layer maps $\mathbf{h}_{i-1}$ to
$3M$ raw parameters $(\hat\alpha_m, \hat\beta_m, \hat\delta_m)_{m=1}^M$,
with $\alpha_m = \text{softplus}(\hat\alpha_m)$,
$\beta_m = \text{softplus}(\hat\beta_m) + \epsilon$. The $\delta_m$ are
unconstrained, initialized uniformly in $[-3, 3]$ to span early-to-late
activation.

\textbf{SurF-GLQ head.} A two-layer MLP takes $(\Delta t, \mathbf{h}_{i-1})$
and outputs a scalar,
$f_\theta(\Delta t, \mathbf{h}) = \text{MLP}([\Delta t/T_{\text{scale}}; \mathbf{h}])$.
Intensity:
$\tilde\lambda_\theta = \text{softplus}(f_\theta)$; cumulative via
the Gauss--Legendre quadrature of \eqref{eq:glq}. The MLP is lightweight
(two layers, $d_{\text{hidden}}/2$ width), and $Q$ passes are batched.

\textbf{SurF-MoE head.} A single linear layer maps $\mathbf{h}_{i-1}$ to
$2J$ raw parameters $(\hat w_j, \hat\gamma_j)_{j=1}^J$. Rate biases are
initialized spanning $[-2, 4]$ for multi-scale coverage.

\subsection{Type Prediction Head}

The output from the history encoding is passed through a Type Refinement Tower consisting of 3-4 Transformer encoder layers to produce a type context. This context is passed through an MLP before being projected to the type embedding space.
\begin{align}
    \text{TypeNet}_\theta(\mathbf{h}) = \text{Proj}(\text{MLP}_\text{type}(\text{TypeRefinementTower}(\phi_\theta(t, \mathcal{H}_t))))
\end{align}

\subsection{Hyperparameter Search}

We optimize the time RMSE directly,
\begin{equation}
\mathcal{L}_{\text{hyperopt}} = \text{RMSE}_{\text{time}} 
\end{equation}
since we found that type accuracy is not heavily impacted by small changes in hyperparameters. We use Optuna with TPE sampling and median pruning for early termination
of unpromising trials (patience $10$ epochs).

\textbf{Searched:} $d_{\text{hidden}} \in \{32, 64, 128, 256, 512\}$,
layers $L \in \{2, 3, 4\}$, batch size $B \in \{128, 256, 512\}$,
learning rate $\eta \in [10^{-5}, 10^{-3}]$ (log-uniform), basis
components $M \in \{8, 12, 16\}$ (CSB) or quadrature points
$Q \in \{4, 8, 12\}$ (GLQ).

\textbf{Fixed:} $H = 4$ attention heads, dropout $0.1$, weight decay
$10^{-5}$, cosine annealing with warm restarts, gradient accumulation
$1$. We require $d_{\text{hidden}} \equiv 0 \pmod H$.

\section{Computational Complexity}\label{app:complexity}

\textbf{Training per sequence.}
\begin{itemize}[leftmargin=*]
\item RNN TPPs: $O(N \cdot H \cdot D)$ due to sequential dependencies
and BPTT.
\item SurF-MoE: $O(N \cdot J)$ with exact closed-form integration.
\item SurF-CSB: $O(N \cdot M)$ with closed-form derivative evaluation.
\item SurF-GLQ: $O(N \cdot Q \cdot D_f)$ where $D_f$ is the MLP cost,
with parallelizable quadrature points.
\end{itemize}

\textbf{Sampling.}
\begin{itemize}[leftmargin=*]
\item RNN TPPs: $O(N \cdot H \cdot D)$ sequential operations.
\item All SurF variants: $O(N \cdot S)$ with $S \leq 8$ Newton
iterations, dominated by the Transformer forward pass.
\end{itemize}

The ability to parallelize training and the stable gradient flow make
SurF particularly suitable for long sequences with varying time scales,
where RNN methods struggle with both efficiency and numerical stability.

\section{Additional Empirical Analysis}

\subsection{Finetuned Models vs Top Baselines}

\begin{figure}[ht]
\vspace{-0.3em}
\centering
\includegraphics[width=\textwidth]{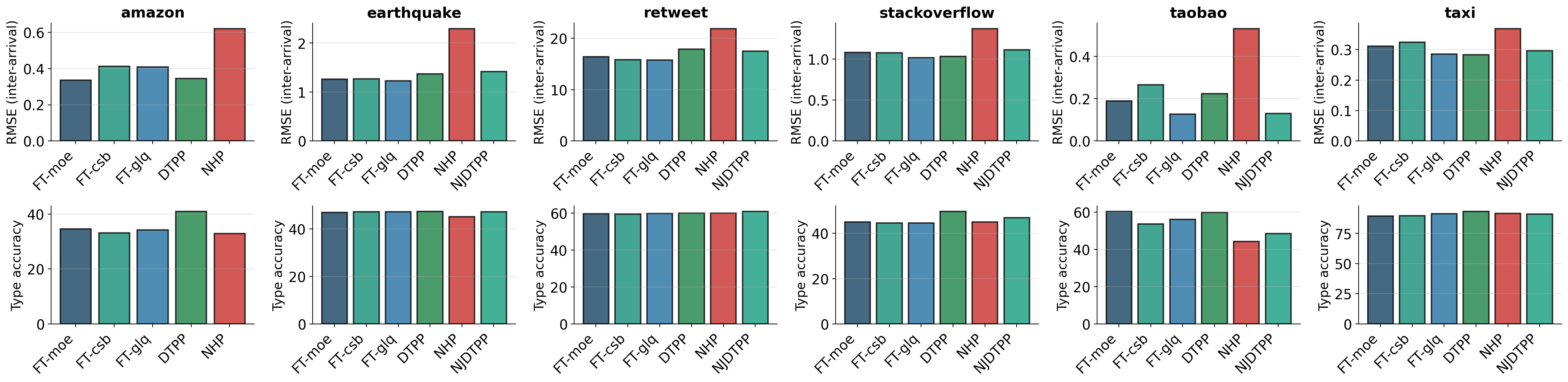}
\vspace{-1.8em}
\caption{\small Next-event inter-arrival RMSE (top) and type accuracy
(bottom) for finetuned SurF variants
and three baseline models (DTPP, NHP, NJDTPP). NJDTPP was not evaluated for Amazon in their paper, and is hence omitted.}
\label{fig:onestep}
\vspace{-0.6em}
\end{figure}

\subsection{Forecast Trajectories}
\begin{figure}[t]
    \centering
    \includegraphics[width=\textwidth]{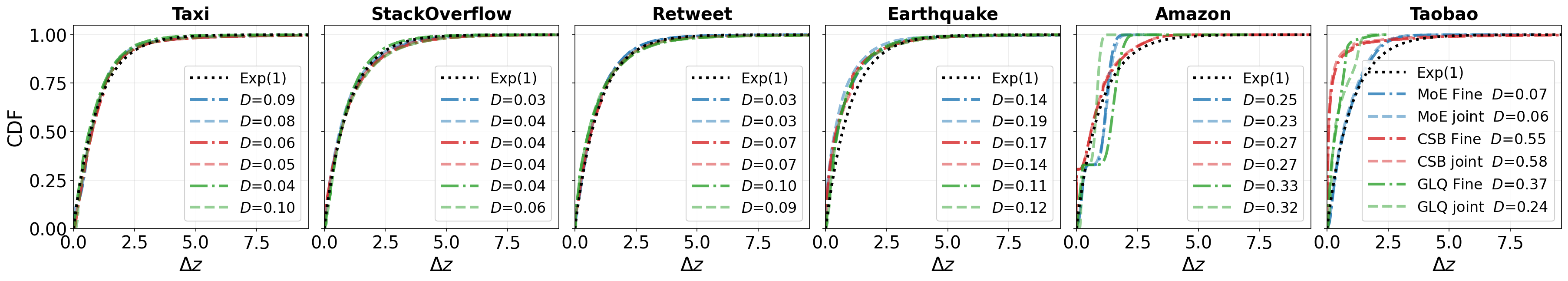}
    \caption{
        \textbf{Time-rescaling goodness-of-fit across datasets and variants.}
        Empirical CDF of the residuals
        $\Delta z_i = \Lambda(\tau_i \mid h_{i-1})$ on the test split.
        A perfectly calibrated model produces residuals that are i.i.d.\
        $\mathrm{Exp}(1)$ (black dotted curve); smaller KS statistic
        $D$ indicates better calibration. SurF achieves near-ideal
        calibration on Taxi and StackOverflow ($D \leq 0.1$ for all
        variants).
    }
    \label{fig:calibration_ks_cdf_full}
\end{figure}
\begin{figure}[htbp]
\centering
\includegraphics[width=\textwidth]{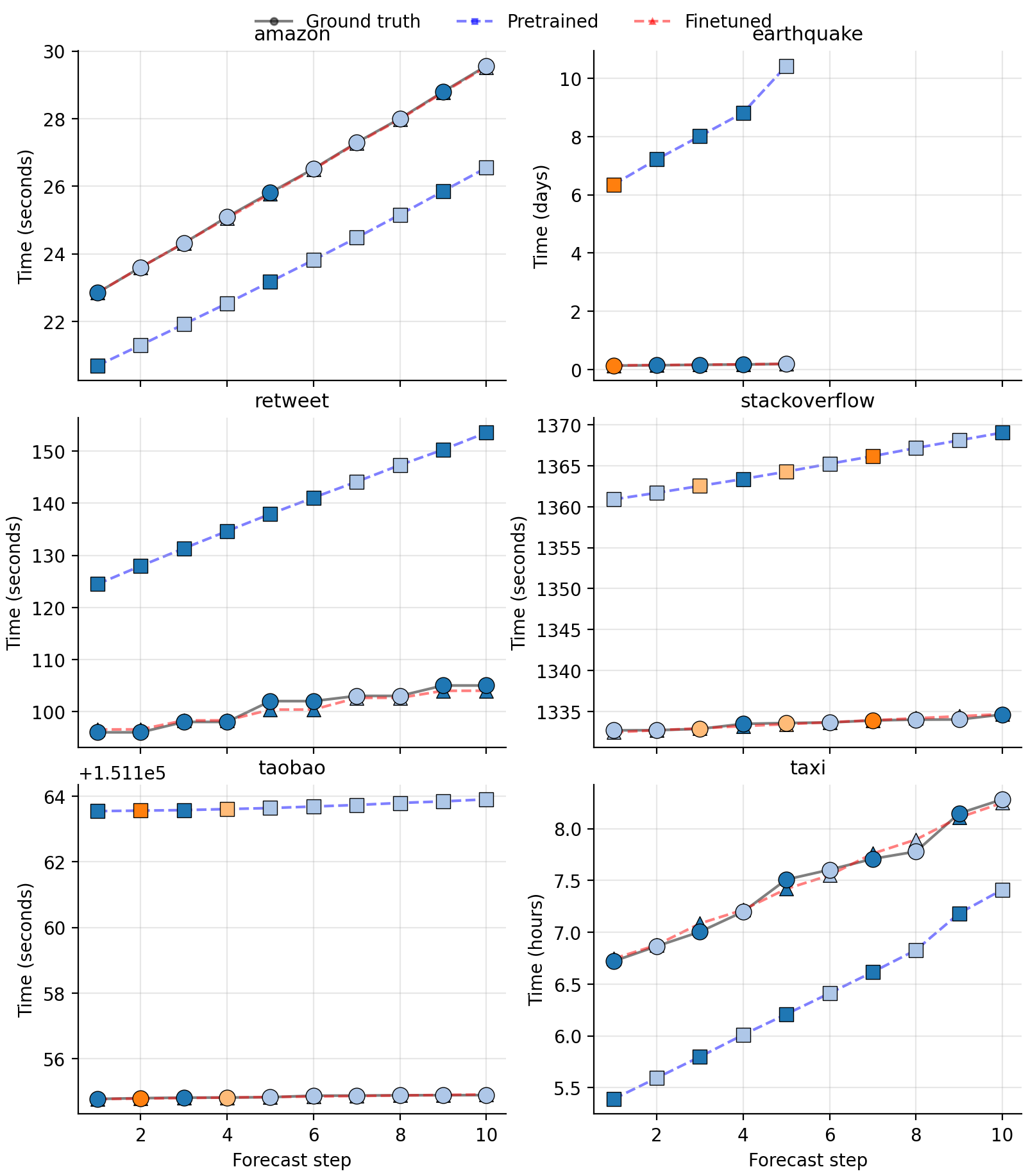}
\caption{Forecast trajectories across datasets. Each subplot shows one
representative sequence per dataset (chosen by lowest finetuned RMSE).
Ground truth (circles), pretrained (squares), and finetuned (triangles)
forecast times; colors indicate event types.}
\label{fig:trajectories_all_datasets}
\end{figure}

\subsection{Next-event prediction of finetuned baselines}

\begin{table}[ht]
\vspace{-1 em}
\centering
\setlength{\tabcolsep}{3pt}
\renewcommand{\arraystretch}{0.95}
\caption{Next-event prediction: time RMSE / type accuracy (\%).
\textbf{Bold} = best, \underline{underline} = second-best. ``--'' = not
reported. $^{\dagger}$ DTPP conditions its mark prediction on the
ground-truth next-event time, which biases its type accuracy upward.}
\label{tab:nexteventbaselines}
\resizebox{\columnwidth}{!}{%
\footnotesize
\begin{tabular}{l|cccccc}
\toprule
\textbf{Model} & \textbf{Amazon} & \textbf{Retweet} & \textbf{Taxi} & \textbf{Earthquake} & \textbf{StackOverflow} & \textbf{Taobao} \\
\midrule
\textsc{AttNHP} (joint) & {2.938} / 27.0 & 19.85 / 58.6 & 0.409 / 46.5 & 1.510 / 25.4 & 2.022 / 43.2 & 1.373 / {46.6} \\
\textsc{THP} (joint) & {1.018} / 30.0 & 23.22 / 55.1 & 0.344 / 44.3 & 1.451 / \textbf{47.2} & 1.231 / 42.1 & 0.260 / {43.6} \\
\textsc{DTPP} (joint) & \textbf{0.341} / \textbf{39.5}$^{\dagger}$ & 17.89 / \textbf{59.3}$^{\dagger}$ & \textbf{0.287} / \textbf{92.0}$^{\dagger}$ & \underline{1.372} / \underline{47.1}$^{\dagger}$ & \textbf{1.069} / \textbf{47.9}$^{\dagger}$ & \underline{0.217} / \underline{54.8}$^{\dagger}$ \\
\textsc{SurF-MoE} (joint) & \underline{0.343} / \underline{34.4} & 16.04 / \underline{59.2} & 0.491 / 89.7 & 1.541 / \underline{47.1} & 1.255 / \underline{44.8} & 0.231 / \textbf{60.0} \\
\textsc{SurF-CSB} (joint) & 0.410 / 30.4 & \underline{15.89} / 56.8 & 0.456 / \underline{90.3} & \textbf{1.294} / \underline{47.1} & 1.104 / 44.5 & 0.369 / 51.5 \\
\textsc{SurF-GLQ} (joint) & 0.477 / 33.3 & \textbf{15.80} / 59.1 & \underline{0.330} / 88.2 & \textbf{1.294} / \textbf{47.2} & \underline{1.098} / 44.4 & \textbf{0.126} / 53.4 \\
\midrule
\textsc{AttNHP} (fine) & 1.314 / 23.7 & 20.84 / 59.3 & 0.699 / 64.3 & 1.593 / 46.3 & 1.311 / 31.5 & 3.654 / 43.6 \\
\textsc{THP} (fine) & 0.564 / \underline{35.2} & 22.91 / 59.3 & 0.381 / \underline{91.3} & 1.783 / \underline{47.2} & 1.378 / \underline{45.1} & 0.276 / 53.2 \\
\textsc{DTPP} (fine) & \underline{0.341} / \textbf{40.0}$^{\dagger}$ & 17.89 / 59.2$^{\dagger}$ & \underline{0.287} / \textbf{92.3}$^{\dagger}$ & 1.372 / 46.9$^{\dagger}$ & 1.069 / \textbf{48.2}$^{\dagger}$ & 0.217 / 54.8$^{\dagger}$ \\
\textsc{SurF-MoE} (fine)  & \textbf{0.336} / 34.7 & 16.23 / \underline{59.7} & 0.311 / 89.5 & 1.258 / 47.0 & 1.077 / \underline{45.1} & \underline{0.189} / \textbf{60.3} \\
\textsc{SurF-CSB} (fine)  & 0.409 / 33.1 & \underline{15.81} / 59.5 & 0.288 / 90.5 & \underline{1.256} / \underline{47.2} & \underline{1.039} / 44.7 & 0.224 / \underline{57.2} \\
\textsc{SurF-GLQ} (fine)  & 0.409 / 34.2 & \textbf{15.78} / \textbf{59.8} & \textbf{0.286} / 90.9 & \textbf{1.228} / \textbf{47.3} & \textbf{1.018} / 44.6 & \textbf{0.126} / 56.2 \\
\bottomrule
\end{tabular}
}
\end{table}

\subsection{Transfer Cost Under Leave-One-Out}
\label{sec:transfer_cost}
A natural question is whether any neural TPP trained on the same five corpora would transfer to the held-out sixth about as well as SurF does. To answer it we measure each model's leave-one-out RMSE against its \emph{own} in-corpus RMSE (for SurF the jointly-trained checkpoint, for each baseline its per-dataset model), so that every model is judged against its own achieved level rather than against ours, and we report the degradation each suffers when the target dataset is removed from training (Table~\ref{tab:loo_degradation}).

\begin{table}[ht]
\centering\small
\caption{\textbf{Transfer cost under leave-one-out.} Lower is better; $<\!1\times$ means the held-out checkpoint is better than the in-corpus one.}
\label{tab:loo_degradation}
\begin{tabular}{l|cc}
\toprule
\textbf{Model} & \textbf{Median degradation} & \textbf{Worst-case degradation} \\
\midrule
DTPP~\citep{panos2024decomposable}   & $1.39\times$ & $34.2\times$ (Amazon: $0.332\!\to\!11.35$) \\
THP~\citep{zuo2020transformer}       & $5.23\times$ & $13.9\times$ (Amazon: $0.657\!\to\!9.127$) \\
AttNHP~\citep{yang-2022-transformer} & $1.42\times$ & $3.0\times$  (Amazon: $0.657\!\to\!1.975$) \\
\midrule
\textsc{SurF-CSB} (ours) & $\mathbf{0.97\times}$ & $\mathbf{2.4\times}$ (StackOverflow) \\
\bottomrule
\end{tabular}
\end{table}

SurF-CSB stays within $1.3\times$ of its own in-corpus performance on five of six datasets, and on Amazon, Earthquake and Taobao it is better zero-shot than jointly trained. This is not particular to CSB: worst-case degradation is $2.5\times$ for MoE and $1.8\times$ for GLQ. DTPP and THP, by contrast, collapse. DTPP performs well in-corpus but not once the target is removed: its Amazon error grows by a factor of thirty-four, to $11.35$, roughly seventeen times the weakest in-corpus number in the same block of Table~\ref{tab:multihorizon} ($0.657$, THP and AttNHP). THP performs poorly in-corpus and transfers poorly as well. AttNHP transfers well in relative terms ($1.42\times$ median, $3.0\times$ worst) and is not far from CSB, but in absolute terms it is weaker than the best SurF variant on five of six datasets ( $6.0\times$ on Amazon, $3.9\times$ on Taobao, $2.3\times$ on Taxi ) with StackOverflow the sole exception (AttNHP better by $1.5\times$).

\end{document}